
\typeout{IJCAI--23 Instructions for Authors}


\documentclass{article}
\pdfpagewidth=8.5in
\pdfpageheight=11in

\usepackage{ijcai23}

\usepackage{times}
\usepackage{soul}
\usepackage{url}
\usepackage[hidelinks]{hyperref}
\usepackage[utf8]{inputenc}
\usepackage[small]{caption}
\usepackage{graphicx}
\usepackage{amsmath}
\usepackage{amsthm}
\usepackage{booktabs}
\usepackage{algorithm}
\usepackage{algorithmic}
\usepackage[switch]{lineno}


\urlstyle{same}





\usepackage{subfigure}
\usepackage{tikz}

\usepackage{amssymb}
\usepackage{multirow}

\theoremstyle{plain}

\newtheorem*{theorem*}{Theorem}
\newtheorem{proposition}{Proposition}
\newtheorem*{proposition*}{Proposition}

\theoremstyle{definition}

\newtheorem{assumption}{Assumption}




\pdfinfo{
/TemplateVersion (IJCAI.2023.0)
}

\title{Causal Deep Reinforcement Learning Using Observational Data}

\author{
Wenxuan Zhu$^1$
\and
Chao Yu$^2$\footnote{Corresponding author}
\And
Qiang Zhang$^1$\footnotemark[1]
\affiliations
$^1$Dalian University of Technology\\
$^2$Sun Yat-sen University
\emails
zhuwenxuan@mail.dlut.edu.cn,
yuchao3@mail.sysu.edu.cn,
zhangq@dlut.edu.cn
}

\DeclareMathOperator{\offText}{off}
\DeclareMathOperator{\PText}{P}
\DeclareMathOperator{\PrText}{Pr}
\DeclareMathOperator{\argmaxText}{argmax}
\DeclareMathOperator{\BetaText}{Beta}
\DeclareMathOperator{\NText}{N}

\begin{document}

\maketitle

\begin{abstract}
Deep reinforcement learning (DRL) requires the collection of interventional data, which is sometimes expensive and even unethical in the real world, such as in the autonomous driving and the medical field. Offline reinforcement learning promises to alleviate this issue by exploiting the vast amount of observational data available in the real world. However, observational data may mislead the learning agent to undesirable outcomes if the behavior policy that generates the data depends on unobserved random variables (i.e., confounders). 
In this paper, we propose two deconfounding methods in DRL to address this problem. The methods first calculate the importance degree of different samples based on the causal inference technique, and then adjust the impact of different samples on the loss function by reweighting or resampling the offline dataset to ensure its unbiasedness.
These deconfounding methods can be flexibly combined with existing model-free DRL algorithms such as soft actor-critic and deep Q-learning, provided that a weak condition can be satisfied by the loss functions of these algorithms. We prove the effectiveness of our deconfounding methods and validate them experimentally.
\end{abstract}

\section{Introduction}
Human beings can learn from observation (e.g., in astronomy) and experimentation (e.g., in physics). For example, people understand the laws of astronomy by observing the movements of celestial bodies and the laws of physics by doing physical experiments. Analogously, agents can learn in these two ways as well. In some cases, however, experimentation (i.e., the collection of interventional data) can be expensive and even unethical, while observational data are easy to obtain. For example, it is unsafe for an agent to learn to drive a car on a real road, but we can easily collect data from a human driving a car with sensors.

Reinforcement learning (RL)~\cite{sutton2018reinforcement} is generally regarded as an interactive learning process, which means that the agent learns from the interventional data generated by experimentation. As a type of RL methods, offline RL~\cite{cql,tutoriallevine,siegel2020keep,ernst2005tree,fujimoto2019off,kumar2019stabilizing,agarwal2020optimistic,jaques2019way} has been proposed to study how to enable RL algorithms to learn strategies from observational data without interacting with the environment.

Most existing offline RL methods are based on an assumption that $\mathcal{O}_1=\mathcal{O}_2=\mathcal{O}_3$, where $o_1{\in}\mathcal{O}_1$ denotes the observation of the environment in which the offline data is collected, $o_2{\in}\mathcal{O}_2$ denotes the observation in the offline data, and $o_3\in\mathcal{O}_3$ denotes the observation in the online data where the agent trained using the offline data is tested
(In the driving car example, $o_1$ represents the environmental information perceived by the driver, $o_2$ represents the environmental information collected by the sensors, and $o_3$ represents the environmental information perceived by the agent in the testing environment.).
This assumption, however, is often difficult to hold in real-world problems.
The driver may be able to see more broadly than the sensors, to judge if the road is slippery based on weather conditions, or even to receive traffic conditions around them based on the radio, therefore, it is generally that $\mathcal{O}_1 \neq \mathcal{O}_2$.
If the driver makes a decision based on the information not collected by the sensors, i.e., there are unobserved random variables (confounders) that affect the action and the next sensory observation at the same time, the observational data generated by the driver may be misleading. An agent then learns the wrong dynamics of the environment based on this misleading information, leading to a biased estimation of the value functions and the final policies. 

Some researches~\cite{sen2017identifying,kallus2018policy,wang2020provably,gasse2021causal} in recent years study how to train an agent using offline data with the confounders based on the causal inference techniques.
However, most of these researches~\cite{sen2017identifying,kallus2018policy} only target at the bandit problems.
Other approaches propose deconfounding methods in RL settings but they are based on certain assumptions such as linear reward/transition functions~\cite{wang2020provably}, small state spaces~\cite{gasse2021causal}, or a complex correlation between the features~\cite{wang2020provably}. Therefore, these approaches only work for specific types of offline RL algorithms, and cannot be applied to large and continuous environments (See Section~\ref{section:related_work} for more details.). 
To address these problems, we propose two kinds of deconfounding methods based on the importance sampling and causal inference techniques. 
Our deconfounding methods first estimate a conditional distribution density ratio through the least-squares conditional density estimation (LSCDE)~\cite{sugiyama2010Conditional,rothfuss2019conditional}, and then adjust the impact of different samples on the loss function to ensure the unbiasedness of the loss function.
Specifically, we make the following contributions.

\begin{itemize}
  \item [1)]
    Unlike the existing importance sampling techniques in off-policy evaluation (OPE)~\cite{kallus2020confounding,gelada2019off,hallak2017consistent}, we estimate a conditional distribution density ratio, which keeps constant in the learning process, and thus can be applied to the RL field.
  \item [2)]
    In the proposed deconfounding methods, we decouple the deconfounding process from the RL algorithm, i.e., we can uniformly incorporate the conditional distribution density ratio into the loss functions of the offline RL algorithms. In other words, our plug-in deconfounding methods can be combined with existing offline RL algorithms provided that a weak condition is satisfied.
  \item [3)]
    Furthermore, since we do not learn a latent-based transition model, and do not assume a complex correlation between the features, our deconfounding methods do not contain a complex computational process, and thus can be applied to large and continuous environments. 
  \item [4)]
    We prove theoretically that these two deconfounding methods can construct an unbiased loss function w.r.t. the online data, and thus improve the performance of the offline RL algorithms. The experimental results verify that the proposed deconfounding methods are effective: offline RL algorithms using deconfounding methods perform better on datasets with the confounders.
\end{itemize}

\section{Background}
\label{background}
In this section, we define a confounded Markov decision process (CMDP) (similar to~\cite{wang2020provably}) and two corresponding structural causal models (SCMs) to describe the RL tasks, in which the offline data include unobserved confounders between the action, reward and next state, and the online data include no confounder between these variables.
In specific, there are both continuous random variables $\boldsymbol{X}$ and discrete random variables $\boldsymbol{Y}$ in our problems. Without loss of generality, we assume that any discrete random variable belongs to the set of integers. The probability density function $\PText\left(\boldsymbol{x},\boldsymbol{y}\right)$ of the discrete and continuous random variables is given as follows:
\begin{equation}
    \label{eqn:pdf}
    \begin{split}
    \PText\left(\boldsymbol{x},\boldsymbol{y}\right)=\frac{\partial^{p}}{\partial x_1 \cdots \partial x_p}\PrText\left(\boldsymbol{X}\leq\boldsymbol{x},\boldsymbol{Y}=\boldsymbol{y}\right),
    \end{split}
\end{equation}
where $\boldsymbol{X} \in \mathbb{R}^p$ and $\boldsymbol{Y} \in \mathbb{Z}^q$.
The CMDP for unobserved confounders can be denoted by a nine-tuple $\left\langle\mathcal{S},\mathcal{M},\mathcal{A},\mathcal{W},\mathcal{R},P_1,P_2,P_3,\mu_0\right\rangle$, where $\mathcal{S}$ denotes the state space, $\mathcal{M}$ denotes the intermediate state space, $\mathcal{A}$ denotes the discrete action space, $\mathcal{W}$ denotes the confounder space, $\mathcal{R}$ denotes the reward space, $P_1\left(s',r|s,w,m\right)$ denotes the dynamics of the CMDP, $P_2\left(w|s\right)$ denotes the confounder transition distribution, $P_3\left(m|s,a\right)$ denotes the intermediate state transition distribution, and $\mu_0\left(s\right)$ denotes the initial state distribution.
Note that the environment generates the intermediate state $m$ in the process of generating the next state $s'$.
For example, a physician prescribes some drugs to a patient. The actual amount of drugs taken by the patient (i.e., $m$) may be different from that prescribed by the physician (i.e., $a$) due to compliance issues. 
Also note that we integrate the state transition distribution and the reward transition distribution into the dynamics of the CMDP for notational convenience.

\begin{figure}[t]
    \centering
    \subfigure{
        \includegraphics[width=1.55in]{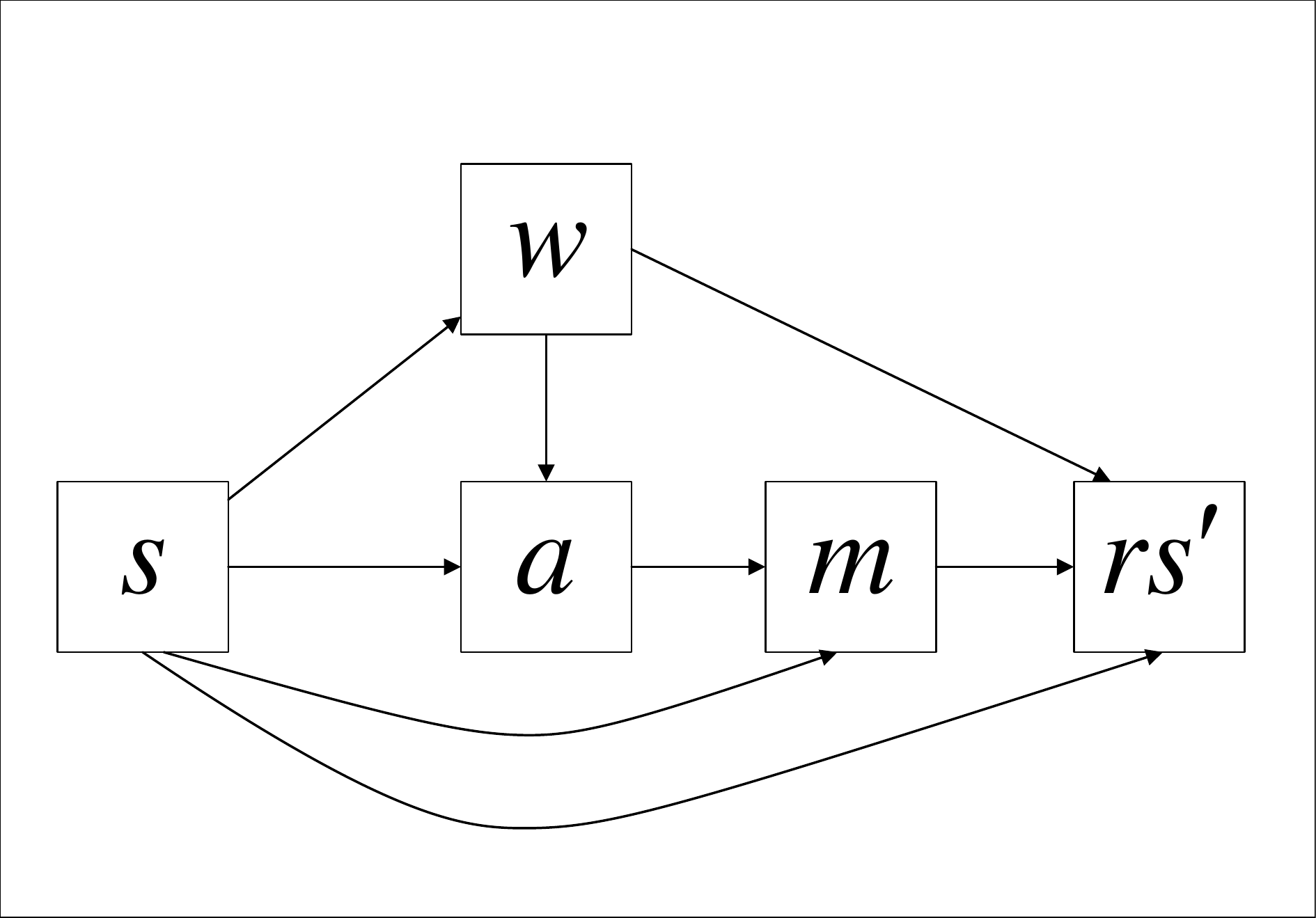}
        \label{cmdp2-scm_off}
    }
    \subfigure{
	\includegraphics[width=1.55in]{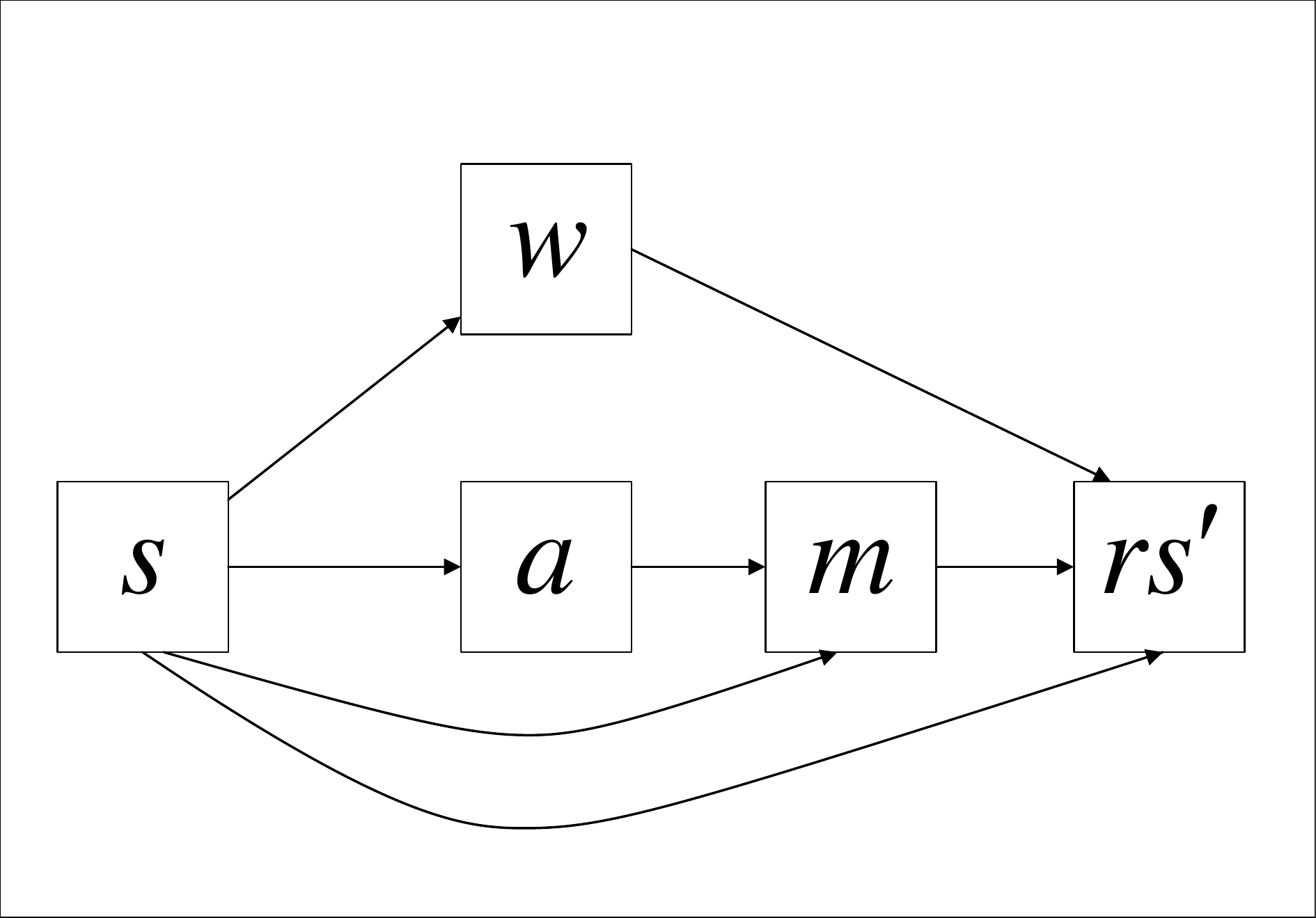}
        \label{cmdp2-scm_on}
    }
    \caption{The left and right subfigures represent the SCM in the offline setting and online setting, respectively, which correspond to the CMDP for unobserved confounders. The behavior policy depends on $w$ in the offline setting, but not in the online setting.}
    \label{figure:scms_unobserved}
\end{figure}

The SCM in the offline setting as shown in the first column of Figure~\ref{figure:scms_unobserved} can be defined as a four-tuple $\left\langle U, V, F, P_e\right\rangle$, where $U$ is the exogenous variables, which are not visible in the experiment, $V$ is the endogenous variables, including $\left(s, m, a, w, s', r\right)$, $F$ is the set of structural functions, including the state-reward transition distribution $P_1\left(s',r|s,w,m\right)$, the confounder transition distribution $P_2\left(w|s\right)$, the intermediate state transition distribution $P_3\left(m|s,a\right)$, and the behavior policy $\pi_b\left(a|s,w\right)$, and $P_e$ is the distribution of exogenous variables. 
The positivity assumption here is that, for $m \in \mathcal{M}, a \in \mathcal{A}, s \in \mathcal{S},w \in \mathcal{W}$ such that $\PText(s,w)>0$, $\PText(m,a|s,w)>0$.
The SCM in the online setting where the agent can intervene on the variable $a$ as shown in the second column of Figure~\ref{figure:scms_unobserved} can be defined as another four-tuple $\left(U, V, F, P_e\right)$, where the set of structural functions $F$ includes the state-reward transition distribution $P_1\left(s',r|s,w,m\right)$, the confounder transition distribution $P_2\left(w|s\right)$, the intermediate state transition distribution $P_3\left(m|s,a\right)$, and the policy $\pi\left(a|s\right)$. Note, that the policy $\pi\left(a|s\right)$ does not depend on $w$ since we assume that $w$ is unobserved to the agent.

\section{Confounded Reinforcement Learning}
\label{section:alg_for_unobserved}

\begin{figure}[htbp]
    \centering
	\includegraphics[width=239.39438pt]{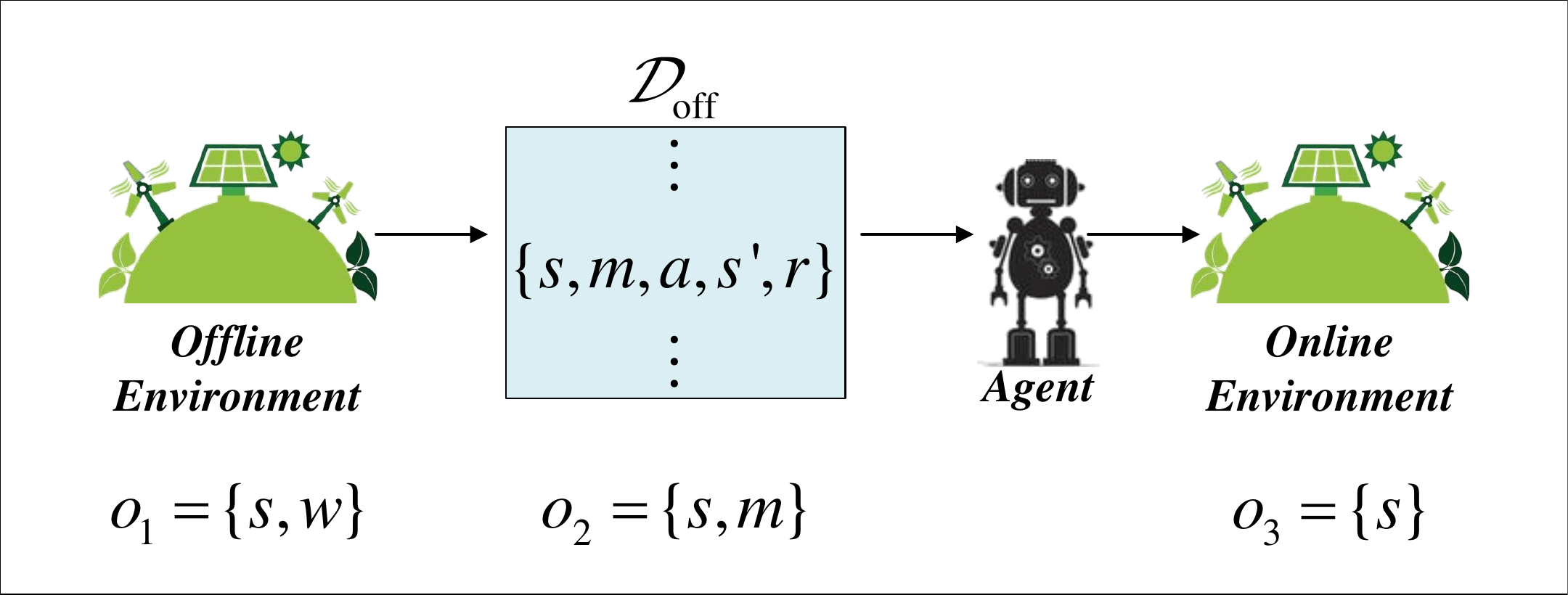}
    \caption{Diagram of the data flow framework, where the confounders in the offline data (i.e., $\mathcal{D}_{\offText}$) are unobserved.}
    \label{figure:dataflow_unobserved}
\end{figure}

Figure~\ref{figure:dataflow_unobserved} describes RL tasks where offline RL algorithms need to use confounded offline data to train an agent that will be tested in the online environment.
Specifically, we assume that the observation $o_1$ of a person/agent in the offline environment includes the state set $s$ and the confounder set $w$. This person interacts with the environment several times with some fixed strategy. 
Note that the data generation process in the offline environment conforms to the SCM in the offline setting.
At the same time, there are other sensors that keep observing the process of interaction of this person/agent with the environment and gather it into an offline dataset. We assume that the observation $o_2$ of these sensors includes the state set $s$ and the intermediate state set $m$. 
The offline RL algorithms need to utilize the offline dataset to train an agent that will be tested in the online environment, where the observation $o_3$ includes only the state set $s$. 
Note that the online data generation process in the testing environment conforms to the SCM in the online setting.
In summary, the confounders are unobserved in both the offline and the online data.

The above task setting leads to different dynamics $\hat{\PText}(s',r|s,a)$ and $\bar{\PText}(s',r|s,a)$ in offline data and online data (as proven in Appendix~A), where $\hat{\PText}$ and $\bar{\PText}$ denote the probability distributions corresponding to the SCMs in the offline and online settings respectively. Because the dynamics in the data determine the optimal value functions, the optimal value functions corresponding to offline data and online data are different.
Therefore, if an agent is trained directly by the original deep RL algorithms using the offline data, the estimated optimal value function of this agent is optimal w.r.t. the offline data and suboptimal w.r.t. the online data. Thus, this agent will perform poorly in the online environment.
We can also understand the above problem from another perspective, i.e., by analyzing the loss function of the original deep RL algorithms that satisfy Assumption~\ref{assumption:drlloss}.
\begin{assumption}
    \label{assumption:drlloss}
    The loss function of the neural network (NN) of the deep RL algorithm only depends on the current state $s$, the action $a$, the next state $s'$, and the reward $r$.
\end{assumption}

The loss function of the original deep RL algorithms which satisfy Assumption~\ref{assumption:drlloss} is shown as follows:
\begin{equation}
    \label{eqn:loss1_cmdp2}
    \begin{split}
    L_1\left(\phi,\mathcal{D}_{\offText}\right)
    &\triangleq\mathbb{E}_{s,a,s',r\sim{\mathcal{D}_{\offText}}}\left[f_\phi+h_\phi\right]\\
    &=\mathbb{E}_{s,a\sim\mathcal{D}_{\offText}}\left[\mathbb{E}_{s', r\sim \hat{\PText}\left(\cdot,\cdot|s,a\right)}\left[
   f_\phi+h_\phi\right]\right],\\
    \end{split}
\end{equation}
where $\mathcal{D}_{\offText}$ denotes all offline data collected through observation, $\phi$ denotes the NN parameters to be optimized, $f_\phi+h_\phi$ denotes the ``loss" for a single training sample which differs between different NNs, $f_\phi$ denotes the part of the ``loss" that depends on $(s,a,s',r)$, and $h_\phi$ denotes the part of the ``loss" that depends only on $(s,a)$. Table~\ref{table:fh_func} gives some examples of $f_{\phi}$ and $h_{\phi}$ for different NNs of the deep RL algorithms. 
It is obvious that the loss function in Equation~\ref{eqn:loss1_cmdp2} corresponds to the dynamics in the offline data, and thus the agent trained using this loss function performs poorly in the online environment.

To address this problem, based on importance sampling and do-calculus~\cite{pearl2009causality,pearl2012calculus}, we propose two deconfounding methods, namely, reweighting method and resampling method, which can both be combined with existing deep RL algorithms such as soft actor-critic (SAC)~\cite{sac}, deep Q-learning (DQN)~\cite{dqn}, double deep Q-learning (DDQN)~\cite{ddqn111} and conservative Q-learning (CQL)~\cite{cql}, provided that a weak condition (i.e., Assumption~\ref{assumption:drlloss}) is satisfied. 
The deconfounding RL algorithms, i.e., the deep RL algorithms combined with the deconfounding methods, can estimate the optimal value function corresponding to the online data, and accordingly the agent trained by these deconfounding RL algorithms is expected to perform well in the online environment.
In this section, we assume that the confounders are unobserved in the offline data. We also provide analysis of  partially observed confounders in the offline data, and derive the reweighting and resampling methods in Appendix~B accordingly.

\begin{table}[bt]
\vskip 0.15in
\begin{center}
\begin{small}
\begin{sc}
\begin{tabular}{lcccr}
\toprule
NN & $f_\phi$ & $h_\phi$ \\
\midrule
SAC Actor    & -&$-V\left(s\right)$\\
DQN Q & $\left(y_1\left(s',r\right) - Q_{\phi}\left(s, a\right)\right)^2$&-\\
DDQN Q    & $\left(y_2\left(s',r\right) - Q_{\phi}\left(s, a\right)\right)^2$&-\\
SAC Critic    & $\left(y_3\left(s',r\right) - Q_{\phi}\left(s, a\right)\right)^2$&-\\
\bottomrule
\end{tabular}
\end{sc}
\end{small}
\end{center}
\vskip -0.1in
\caption{This table describes the different meanings of $f_{\phi}$ and $h_{\phi}$ for different NNs. DQN Q, DDQN Q and SAC Critic denote the Q-networks of DQN, DDQN and SAC, respectively. SAC Actor denotes the actor network of SAC. $y_1\left(s',r\right)=r+\gamma \max_{a'} Q_{\phi'}\left(s', a'\right)$. $y_2\left(s',r\right)=r + \gamma Q_{\phi'}\left(s', \argmaxText_{a'} Q_\phi\left(s',a'\right)\right)$. $y_3\left(s',r\right)=r + \gamma \left(V\left(s'\right)\right)$. $V\left(s\right) = \pi_\phi \left(s\right)^T \left[Q_\phi\left(s\right) - \alpha \log \left(\pi_\phi \left(s\right)\right)\right]$.}
\label{table:fh_func}
\end{table}

\subsection{Reweighting Method}

As mentioned above, since the loss function $L_1\left(\phi,\mathcal{D}_{\offText}\right)$ of the original deep RL algorithms corresponds to the dynamics in the offline data, the agent trained by the original deep RL algorithms performs poorly in the online environment. This inspires us to modify $L_1\left(\phi,\mathcal{D}_{\offText}\right)$ to $L_2\left(\phi,\mathcal{D}_{\offText}\right)$ as follows:
\begin{equation}
    \label{eqn:loss2_cmdp2}    L_2\left(\phi,\mathcal{D}_{\offText}\right)\triangleq\mathbb{E}_{s,a\sim{\mathcal{D}_{\offText}}}\left[\mathbb{E}_{s',r\sim{\bar{\PText}\left(\cdot,\cdot|s,a\right)}}\left[f_\phi+h_\phi\right]\right].
\end{equation}
Clearly, $L_2\left(\phi,\mathcal{D}_{\offText}\right)$ corresponds to the dynamics in the online data. However we cannot directly estimate $L_2\left(\phi,\mathcal{D}_{\offText}\right)$ in the form of Equation~\ref{eqn:loss2_cmdp2} from the offline data.
Therefore, we transform $L_2\left(\phi,\mathcal{D}_{\offText}\right)$ into a form that can be estimated from the offline data as follows:

\begin{proposition}
    \label{proposition:1}
    Under the definitions of the CMDP and SCMs in Section~\ref{background}, it holds that
    \begin{equation}
    \label{eqn:loss2_cmdp2_derive}
    \begin{split}
    &L_2\left(\phi,\mathcal{D}_{\offText}\right)\triangleq\mathbb{E}_{s,a\sim{\mathcal{D}_{\offText}}}\left[\mathbb{E}_{s',r\sim{\bar{\PText}\left(\cdot,\cdot|s,a\right)}}\left[f_\phi+h_\phi\right]\right]\\
    &=\mathbb{E}_{s,a\sim{\mathcal{D}_{\offText}}}\left[\mathbb{E}_{s',r,m\sim{\hat{\PText}\left(\cdot,\cdot,\cdot|s,a\right)}}\left[d_1\left(\tau\right)f_\phi+h_\phi\right]\right]\\
    &=\mathbb{E}_{s,a,s',r,m\sim{\mathcal{D}_{\offText}}}\left[d_1\left(\tau\right)f_\phi+h_\phi\right],\\
    \end{split}
    \end{equation}
    where $\tau$ is a shorthand for the set of variables $(s,m,a,s',r)$ and $d_1\left(\tau\right)$ is defined as follows:
    \begin{equation}
    \label{eqn:label_definition}
    \begin{split}
    d_1\left(\tau\right)=\dfrac{\sum\limits_{a'}\hat{\PText}\left(s',r|m,a',s\right)\hat{\PText}\left(a'|s\right)}{\hat{\PText}\left(s',r|m,a,s\right)}.\\
    \end{split}
    \end{equation}
\end{proposition}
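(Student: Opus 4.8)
The plan is to reduce the entire statement to one identity between conditional distributions, holding pointwise in $(s,a)$, and then recover the rest by the tower rule. The last equality in~\eqref{eqn:loss2_cmdp2_derive} is immediate: since $\mathcal{D}_{\offText}$ is generated by the offline SCM, drawing $(s,a,s',r,m)$ jointly from $\mathcal{D}_{\offText}$ coincides with drawing $(s,a)$ from its marginal in $\mathcal{D}_{\offText}$ and then $(s',r,m)\sim\hat{\PText}(\cdot,\cdot,\cdot\mid s,a)$, so the tower rule gives it. For the first equality, fix $(s,a)$; because $h_\phi$ depends only on $(s,a)$ it is constant under both inner expectations, so the two $h_\phi$ contributions already agree and it suffices to prove
\[
\mathbb{E}_{s',r,m\sim\hat{\PText}(\cdot,\cdot,\cdot\mid s,a)}\!\left[d_1(\tau)\,f_\phi\right]=\mathbb{E}_{s',r\sim\bar{\PText}(\cdot,\cdot\mid s,a)}\!\left[f_\phi\right].
\]
By linearity of the integral this in turn follows from the single distributional identity $\sum_m \hat{\PText}(s',r,m\mid s,a)\,d_1(\tau)=\bar{\PText}(s',r\mid s,a)$ (sums over $m$ and $w$ become integrals when these variables are continuous): multiply it by $f_\phi(s,a,s',r)$ and sum/integrate over $(s',r)$.

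To prove that identity I would first write both sides from the DAGs of Figure~\ref{figure:scms_unobserved}. From the offline factorization $\hat{\PText}(s,w,a,m,s',r)=\hat{\PText}(s)\,P_2(w\mid s)\,\pi_b(a\mid s,w)\,P_3(m\mid s,a)\,P_1(s',r\mid s,w,m)$ one reads off $\hat{\PText}(m\mid s,a)=P_3(m\mid s,a)$ and, using $w\perp m\mid(s,a)$ and Bayes' rule, $\hat{\PText}(w\mid s,a,m)=\hat{\PText}(w\mid s,a)=P_2(w\mid s)\pi_b(a\mid s,w)/\hat{\PText}(a\mid s)$ with $\hat{\PText}(a\mid s)=\sum_w P_2(w\mid s)\pi_b(a\mid s,w)$; hence $\hat{\PText}(s',r\mid s,a',m)=\frac{1}{\hat{\PText}(a'\mid s)}\sum_w P_2(w\mid s)\,\pi_b(a'\mid s,w)\,P_1(s',r\mid s,w,m)$. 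On the online side, intervening on $a$ deletes only the edge into $a$, while $w$ still depends on $s$ alone, so the truncated factorization gives $\bar{\PText}(s',r\mid s,a)=\sum_{w,m} P_2(w\mid s)\,P_3(m\mid s,a)\,P_1(s',r\mid s,w,m)$.

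I would then carry out the core computation, which cancels $\hat{\PText}(s',r\mid m,a,s)$ against the denominator of $d_1(\tau)$:
\[
\sum_m \hat{\PText}(s',r,m\mid s,a)\,d_1(\tau)=\sum_m P_3(m\mid s,a)\sum_{a'}\hat{\PText}(s',r\mid s,a',m)\,\hat{\PText}(a'\mid s).
\]
Substituting the expression for $\hat{\PText}(s',r\mid s,a',m)$ cancels the factor $\hat{\PText}(a'\mid s)$; interchanging the $a'$ and $w$ sums, the behavior policy collapses via $\sum_{a'}\pi_b(a'\mid s,w)=1$, leaving $\sum_m P_3(m\mid s,a)\sum_w P_2(w\mid s)P_1(s',r\mid s,w,m)=\bar{\PText}(s',r\mid s,a)$, which is the desired identity. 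Multiplying by $f_\phi$ and summing/integrating over $(s',r)$ then yields the first equality of~\eqref{eqn:loss2_cmdp2_derive}.

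I expect the main obstacle to be not this algebra but checking that $d_1(\tau)$ is well defined exactly where it is used, i.e., that the denominator $\hat{\PText}(s',r\mid m,a,s)$ does not vanish on the support of $\hat{\PText}(s',r,m\mid s,a)$ and that the numerator's support is contained in the denominator's. This is precisely where the positivity assumption of Section~\ref{background} enters: since $\hat{\PText}(m,a\mid s,w)>0$ whenever $\hat{\PText}(s,w)>0$, we get $\hat{\PText}(s',r\mid m,a,s)>0$ if and only if there is a $w$ with $P_2(w\mid s)>0$ and $P_1(s',r\mid s,w,m)>0$, a condition that does not involve $a$; so the ratio is finite and support-consistent across actions, and the interchange of sums/integrals (Fubini--Tonelli, by nonnegativity of the densities) is legitimate. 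The remaining steps are routine bookkeeping of the two conditional independences used above, $w\perp m\mid(s,a)$ and the invariance of $P_1,P_2,P_3$ under $\mathrm{do}(a)$, both read directly off Figure~\ref{figure:scms_unobserved}.
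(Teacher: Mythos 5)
Your proposal is correct. The outer reduction (splitting off $h_\phi$, using that sampling $(s,a,s',r,m)$ from $\mathcal{D}_{\offText}$ is the same as sampling $(s,a)$ and then $(s',r,m)\sim\hat{\PText}(\cdot,\cdot,\cdot|s,a)$) matches the paper; the difference is in how the weight $d_1(\tau)$ is identified. The paper first applies importance sampling with the ratio $\bar{\PText}(s',r,m|s,a)/\hat{\PText}(s',r,m|s,a)$ and then shows this ratio equals $d_1(\tau)$ by do-calculus, namely $\hat{\PText}(m|s,do(a))=\hat{\PText}(m|s,a)$ and $\hat{\PText}(s',r|m,s,do(a))=\hat{\PText}(s',r|s,do(m))=\sum_{a'}\hat{\PText}(s',r|m,a',s)\hat{\PText}(a'|s)$, i.e., a front-door-style adjustment at the level of graphical rules. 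You instead bypass the do-calculus rules and verify the equivalent distributional identity $\sum_m\hat{\PText}(s',r,m|s,a)\,d_1(\tau)=\bar{\PText}(s',r|s,a)$ directly from the Markov factorizations of the offline and intervened SCMs, cancelling $\hat{\PText}(a'|s)$ and collapsing the behavior policy via $\sum_{a'}\pi_b(a'|s,w)=1$. The mathematical content is the same (your cancellation is a first-principles derivation of exactly the adjustment the paper obtains from the rules of the calculus, and your intermediate line even recovers the pointwise equality of the density ratio with $d_1(\tau)$ before summing over $m$), so the choice is largely one of presentation: the paper's argument is shorter and graph-level, while yours is self-contained and makes explicit two points the paper leaves implicit --- that only the $m$-marginalized identity is needed because $f_\phi$ does not depend on $m$, and that the positivity assumption ensures the denominator of $d_1$ is nonzero on the support of the offline data, so the weight is well defined and the interchanges of sums and integrals are justified by Tonelli.
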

\begin{proof}
See Appendix~A.
\end{proof}

As shown in Equation~\ref{eqn:loss1_cmdp2} and Equation~\ref{eqn:loss2_cmdp2_derive}, the only difference between $L_2\left(\phi,\mathcal{D}_{\offText}\right)$ in its new form and $L_1\left(\phi,\mathcal{D}_{\offText}\right)$ is that $f_\phi$ is multiplied by an extra weight $d_1\left(\tau\right)$. Therefore, we only need to estimate $d_1\left(\tau\right)$ and modify the loss function of the deep RL algorithms from $L_1\left(\phi,\mathcal{D}_{\offText}\right)$ to $L_2\left(\phi,\mathcal{D}_{\offText}\right)$.
Note that $d_1\left(\tau\right)$ is composed of several conditional density functions. 
So, as long as these conditional density functions are estimated from the offline data, we can get an estimate of $d_1\left(\tau\right)$.
To estimate these conditional density functions containing both discrete and continuous random variables,
we adopt the LSCDE technique combined with a trick called \textit{adding jitter} or \textit{jittering}~\cite{nagler2018generic}  to add noises to all discrete random variables. It has been theoretically justified that this trick works well if all the noises are chosen from a specific class of noise distribution~\cite{nagler2018generic}. The settings of choosing noises are given in Appendix~C.
 
\subsection{Resampling Method}

The essence of the reweighting method is to adjust the impact of different samples on the loss function by reweighting the offline data. In other words, by incorporating weights into the loss function, we expand the impact of the samples that are less likely to occur in the offline data than in the online data, i.e., $\hat{\PText}(s',r|s,a)<\bar{\PText}(s',r|s,a)$, and vice versa.
However, we can also adjust the impact of different samples on the loss function through adjusting the probability that each sample occurs in the offline data. This idea inspires us to define a loss function as follows:
\begin{equation}
    \label{eqn:loss_resample_unobserved}
    \begin{split}
    &L_3\left(\phi,\mathcal{D}_{\offText}\right)\triangleq\mathbb{E}_{I{\sim}p_1}\left[f_{\phi,I}+h_{\phi,I}\right],\\
    \end{split}
\end{equation}
where $p_1\left(I=i\right)=d_{1,i}/\sum_{j=1}^{N}d_{1,j}$, $d_{1,i}$ is a shorthand for $d_1\left(s_{i},m_{i},a_{i},s_{i}',r_{i}\right)$ where $s_{i},a_{i},r_{i},s_{i}',m_{i}$ denotes the state, action, reward, next state and intermediate state in the offline dataset ${D}_{off}$, respectively, $f_{\phi,I}$ is a shorthand for $f_\phi\left(s_{I},a_{I},s_{I}',r_{I}\right)$ and $h_{\phi,I}$ for $h_\phi\left(s_{I},a_{I}\right)$.
In $L_3\left(\phi,\mathcal{D}_{\offText}\right)$, we modify the probability that each sample occurs in the offline data according to the value of $d_1\left(\tau\right)$.
By the definition of $L_3\left(\phi,\mathcal{D}_{\offText}\right)$, based on the importance sampling technique and reparameterization trick, we can prove that the deep RL algorithms combined with the resampling method can also learn the optimal policy w.r.t. the online data from the offline data as follows:
\begin{proposition}
    \label{proposition:l2eql3__cmdp2}
    Under the definitions of the CMDP and SCMs in Section~\ref{background}, the loss function of the resampling method is asymptotically equal to that of the reweighting method as in Equation~\ref{eqn:loss_resample_equal_cmdp__2} provided that the dataset is large enough.
    \begin{equation}
    \label{eqn:loss_resample_equal_cmdp__2}
    \begin{split}
    &\lim_{N\to\infty}\left(L_3\left(\phi,\mathcal{D}_{\offText}\right)-L_2\left(\phi,\mathcal{D}_{\offText}\right)\right)=0\\
    \end{split}
\end{equation}
\end{proposition}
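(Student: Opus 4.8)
The plan is to rewrite $L_3$ as a ratio of two sample averages over the offline dataset, apply the law of large numbers to the numerator and denominator separately, and identify the resulting limit with $L_2$ via Proposition~\ref{proposition:1}. Substituting $p_1(I{=}i)=d_{1,i}/\sum_{j=1}^{N}d_{1,j}$ into Equation~\ref{eqn:loss_resample_unobserved} gives
\begin{equation*}
L_3\left(\phi,\mathcal{D}_{\offText}\right)=\frac{\sum_{i=1}^{N}d_{1,i}\left(f_{\phi,i}+h_{\phi,i}\right)}{\sum_{j=1}^{N}d_{1,j}}=\frac{\frac{1}{N}\sum_{i=1}^{N}d_{1,i}\left(f_{\phi,i}+h_{\phi,i}\right)}{\frac{1}{N}\sum_{j=1}^{N}d_{1,j}},
\end{equation*}
i.e.\ $L_3$ is exactly the self-normalized importance-sampling estimator one obtains by reweighting each offline sample by $d_1(\tau)$.

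Treating the transitions $(s_i,m_i,a_i,s_i',r_i)$ in $\mathcal{D}_{\offText}$ as draws from the offline distribution $\hat{\PText}$ (more generally, as an ergodic sequence with stationary law $\hat{\PText}$), and assuming the mild integrability conditions $\mathbb{E}_{\hat{\PText}}[d_1(\tau)]<\infty$ and $\mathbb{E}_{\hat{\PText}}[|d_1(\tau)(f_\phi+h_\phi)|]<\infty$, the strong law of large numbers yields $\frac{1}{N}\sum_j d_{1,j}\to\mathbb{E}_{\hat{\PText}}[d_1(\tau)]$ and $\frac{1}{N}\sum_i d_{1,i}(f_{\phi,i}+h_{\phi,i})\to\mathbb{E}_{\hat{\PText}}[d_1(\tau)(f_\phi+h_\phi)]$ almost surely. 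Under the positivity assumption the numerator and denominator of Equation~\ref{eqn:label_definition} are nonnegative, so $d_1\ge 0$; since (shown next) its mean is $1>0$, the denominator's limit is nonzero and the continuous mapping theorem gives
\begin{equation*}
L_3\left(\phi,\mathcal{D}_{\offText}\right)\ \longrightarrow\ \frac{\mathbb{E}_{\hat{\PText}}[d_1(\tau)(f_\phi+h_\phi)]}{\mathbb{E}_{\hat{\PText}}[d_1(\tau)]}\qquad(N\to\infty)\quad\text{a.s.}
\end{equation*}

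It remains to evaluate this ratio. The key identity, which I would invoke directly from the proof of Proposition~\ref{proposition:1} in Appendix~A, is $\mathbb{E}_{\hat{\PText}}[d_1(\tau)\mid s,a]=1$ for every $(s,a)$ with positive offline probability — obtained by unfolding $\hat{\PText}(m,s',r\mid s,a)$ through the offline SCM and checking that the sum over $a'$ and the integral over $(s',r)$ in Equation~\ref{eqn:label_definition} telescope to $1$. This immediately yields $\mathbb{E}_{\hat{\PText}}[d_1(\tau)]=1$, and, because $h_\phi$ depends only on $(s,a)$, also $\mathbb{E}_{\hat{\PText}}[d_1(\tau)h_\phi]=\mathbb{E}_{s,a}[h_\phi\,\mathbb{E}_{\hat{\PText}}[d_1(\tau)\mid s,a]]=\mathbb{E}_{\hat{\PText}}[h_\phi]$. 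Hence the denominator of the limiting ratio is $1$ and its numerator is $\mathbb{E}_{\hat{\PText}}[d_1(\tau)f_\phi]+\mathbb{E}_{\hat{\PText}}[d_1(\tau)h_\phi]=\mathbb{E}_{\hat{\PText}}[d_1(\tau)f_\phi]+\mathbb{E}_{\hat{\PText}}[h_\phi]$, which is exactly $L_2(\phi,\mathcal{D}_{\offText})$ in the form of Equation~\ref{eqn:loss2_cmdp2_derive} read as a population expectation. Since the empirical $L_2$ of Equation~\ref{eqn:loss2_cmdp2_derive} converges to this same quantity by the law of large numbers, subtracting the two limits gives $\lim_{N\to\infty}(L_3-L_2)=0$.

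The one place requiring care is the self-normalization: dividing by $\sum_j d_{1,j}$ rather than by $N$ is asymptotically harmless precisely because $\mathbb{E}_{\hat{\PText}}[d_1(\tau)]=1$, and the spurious factor $d_1$ that the resampling scheme attaches to the $(s,a)$-only term $h_\phi$ washes out for the same reason, since $\mathbb{E}_{\hat{\PText}}[d_1 h_\phi]=\mathbb{E}_{\hat{\PText}}[h_\phi]$. Both of these reduce to the conditional identity $\mathbb{E}_{\hat{\PText}}[d_1(\tau)\mid s,a]=1$, which is the genuine content here and is already established in Appendix~A; the remaining ingredients (SLLN, continuous mapping) are routine once the i.i.d./ergodicity and integrability conditions on $\mathcal{D}_{\offText}$ are made explicit, and the reparameterization trick is only needed separately to actually differentiate the stochastic-policy terms (e.g.\ the $V(s)$ inside $h_\phi$ for the SAC actor) and plays no role in the limiting argument itself.
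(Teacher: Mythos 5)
Your proposal is correct and takes essentially the same route as the paper's proof in Appendix~A: both recognize $L_3$ as the self-normalized version of the reweighted objective and reduce the claim to the fact that the mean importance weight $\frac{1}{N}\sum_{j=1}^{N}d_{1,j}$ converges to $\mathbb{E}_{\hat{\PText}}\left[d_1\left(\tau\right)\right]=1$ (equivalently $\mathbb{E}\left[d_1\left(\tau\right)\mid s,a\right]=1$, which also absorbs the $h_\phi$ term). The only difference is packaging: the paper derives the exact identity $L_2=L_3\cdot\frac{1}{N}\sum_{j}d_{1,j}$ and then takes one limit, whereas you apply the law of large numbers to numerator and denominator separately, which is a cosmetic variation of the same argument.
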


\begin{proof}
See Appendix~A.
\end{proof}
According to Proposition~\ref{proposition:l2eql3__cmdp2}, the resampling method is asymptotically equivalent to the reweighting method and can also deconfound the offline data provided that the offline dataset is large enough.

The advantages of the above two deconfounding methods are obvious. On the one hand, since both the reweighting and resampling methods decouple the deconfounding process from the RL algorithm, these methods can be easily combined with existing offline RL algorithms provided that the weak condition in Assumption~1 is satisfied. On the other hand, the implementation of these methods is quite straightforward, only requiring minor modification to the original RL algorithms, i.e., 
we only need to incorporate the estimated $d_1\left(\tau\right)$ into the loss function.

\subsection{Simplification of the Causal Models}
The convergence rate of conditional
density estimation (CDE) decreases exponentially as the dimension of the variables increases due to the curse of dimensionality. Therefore, it is necessary to simplify the causal models by reducing the dimensions of the variables in the process of CDE, so that our deconfounding methods can be applied to problems with higher-dimensional variables. For example, if the confounder exists only between the action and reward, we can simplify the causal model correspondingly by ignoring the variable $s'$ in the process of CDE. We formally define two types of simplified causal models for two specific cases and derive the corresponding simplified forms of $d_{1}\left(\tau\right)$ in Appendix~D.

\section{Results}
\label{results}

Evaluation of the deconfounding method is a challenging issue due to the lack of benchmark tasks. In addition, there is little work in deep RL on learning from observational data with confounders. 
To this end, we design four benchmark tasks, namely, EmotionalPendulum, WindyPendulum, EmotionalPendulum*, and WindyPendulum*, by modifying the Pendulum task in the OpenAI Gym~\cite{gym}.
All the implementations of the offline RL algorithms in this paper follow d3rlpy, an offline RL library~\cite{seno2021d3rlpy}.
All the hyperparameters of the offline RL algorithms are set to the default values of d3rlpy. The rewards are tested over 20 episodes every 1000 learning steps, and averaged over 5 random seeds.
Other hyperparameters and the implementation details are described in Appendix~C. 

\subsection{Unobserved Confounders}

\begin{figure}[htbp]
    \centering
    \subfigure{
        \includegraphics[width=1.55in]{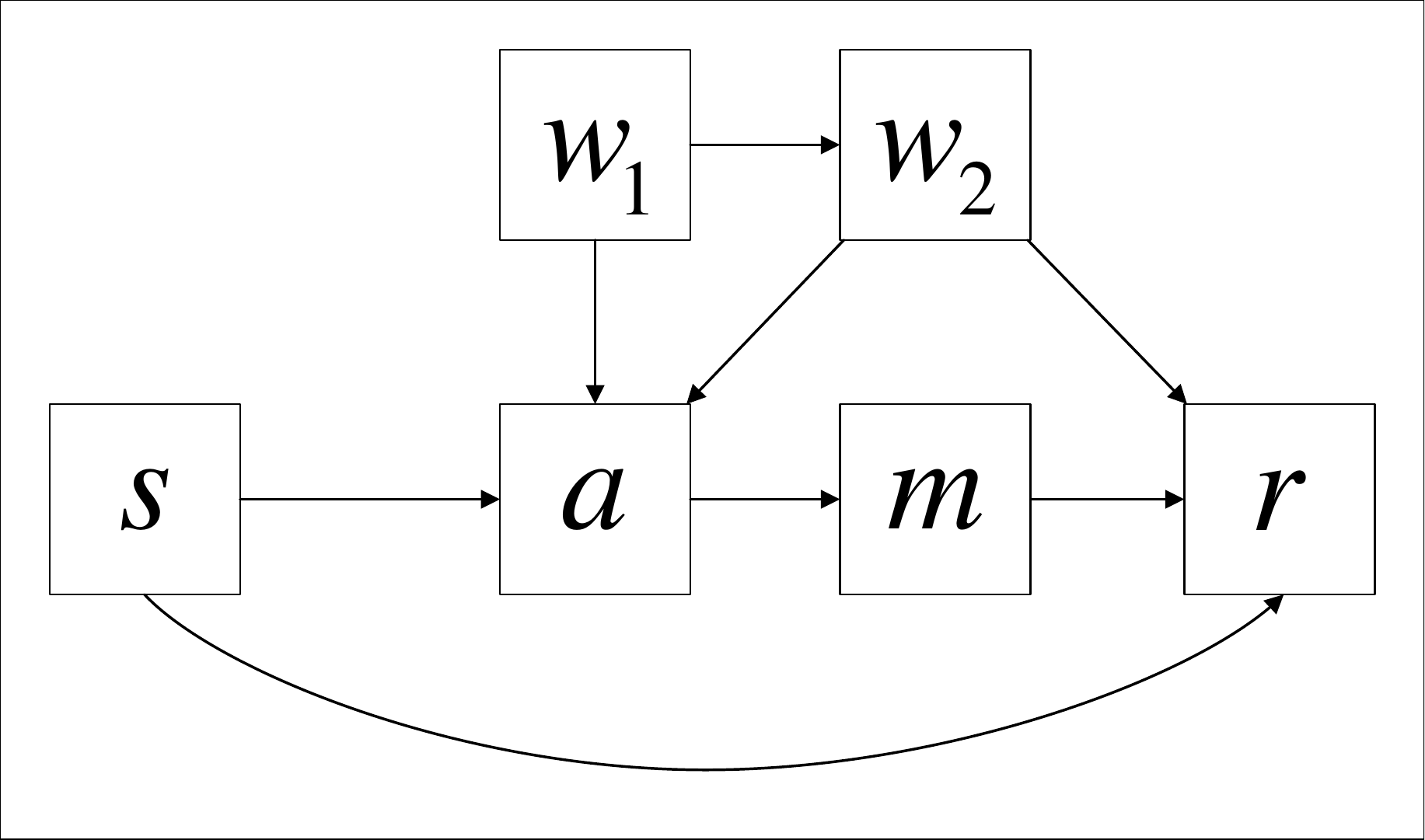}
        \label{658930}
    }
    \subfigure{
	\includegraphics[width=1.55in]{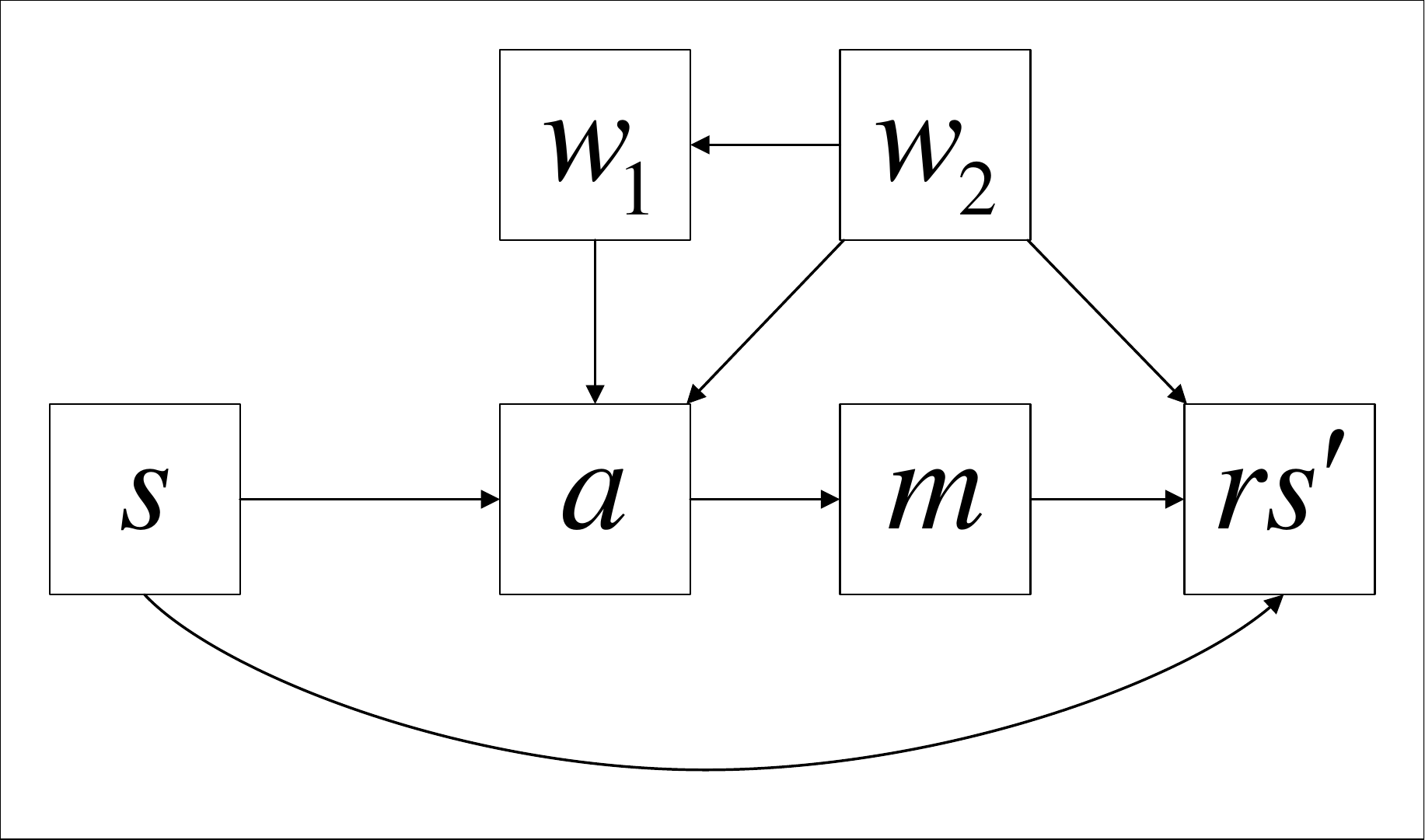}
	\label{1234703}
    }
    \\
    \subfigure{
        \includegraphics[width=1.55in]{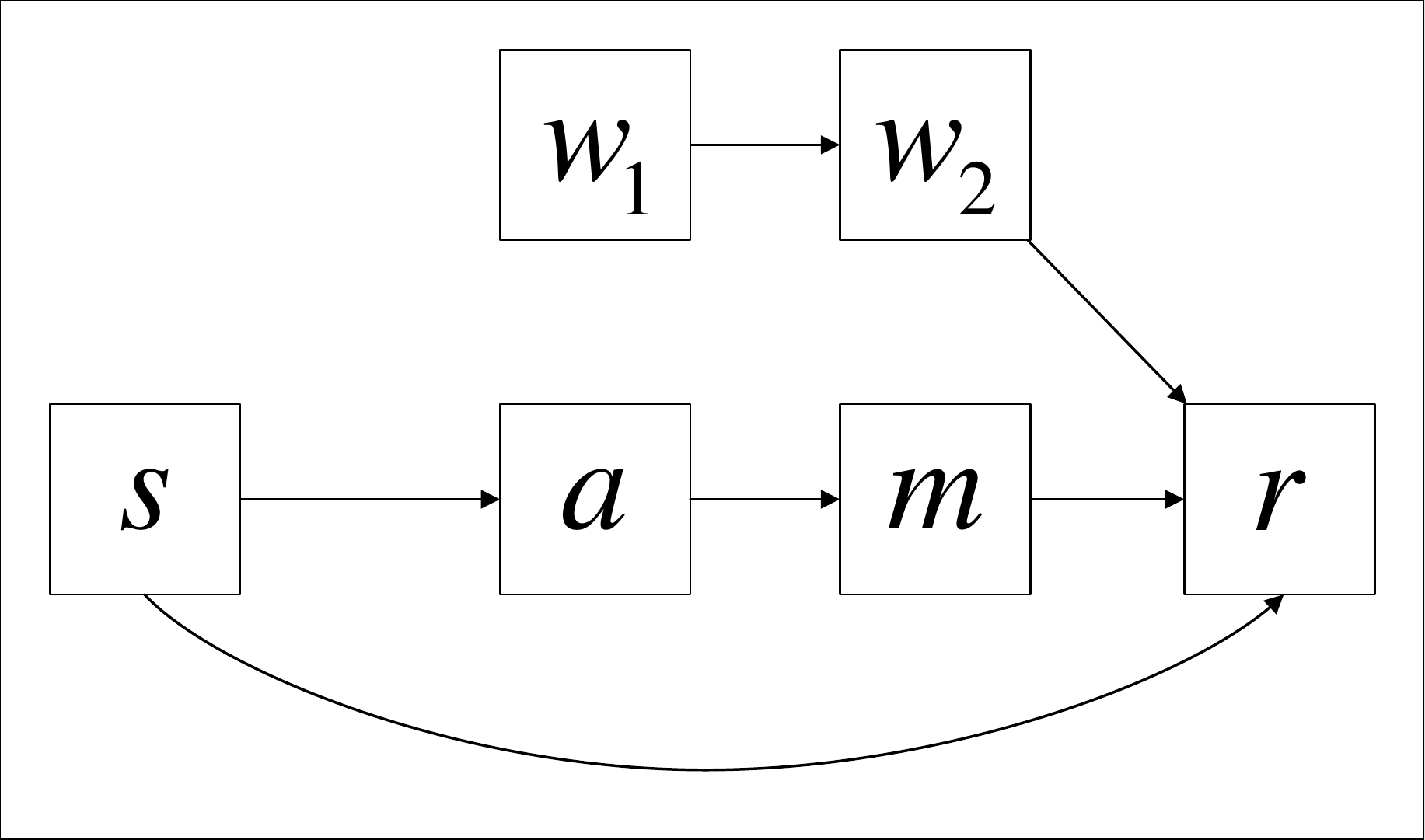}
    }
    \subfigure{
	\includegraphics[width=1.55in]{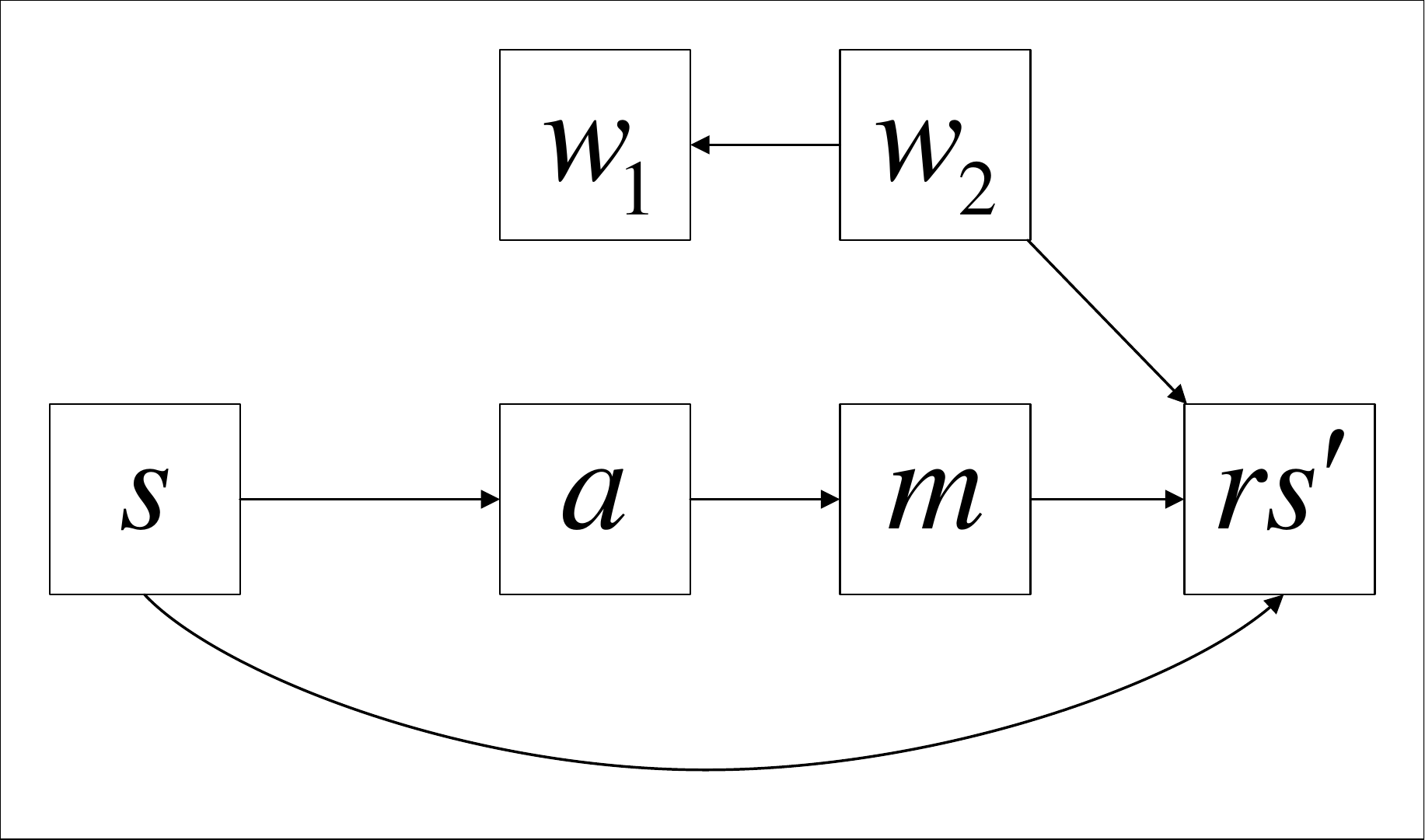}
    }
    \caption{\textbf{Left}: the causal graphs depicting the offline and online data generating processes in EmotionalPendulum. \textbf{Right}: the causal graphs depicting the offline and online data generating processes in WindyPendulum. In both tasks, the confounders in the offline data are unobserved.}
    \label{figure:scm_unobserved}
\end{figure}

Figure~\ref{figure:scm_unobserved} shows the graphical models for EmotionalPendulum and WindyPendulum, where the confounders are unobserved in the offline data.
Obviously, these graphical models are special cases of the graphical models described in Section~\ref{background}.
In both tasks, we assume that there is an entertainment facility similar to a pendulum. One end of the pendulum is attached to a fixed point and a human sitting in a seat at the other end of the pendulum needs to swing the pendulum to an upright position. 
However, the controller of the pendulum fails intermittently, i.e., the action $a\in\left\{-2,-1,0,1,2\right\}$ that the human wants to take may not equal the actual action $m\in\left\{-2,-1,0,1,2\right\}$ executed by the machine. 
Specifically, in most cases, $m$ is equal to $a$, and there is a probability $p_{fail}$ that $m$ randomly selects an action in the action space, with each action equally likely to be selected.
There are some sensors are used to collect the confounded offline data generated in the above human-environment interaction process. Note that the environmental information $o_2$ in these offline data includes the state $s$ and the actual action $m$ executed by the machine.
The offline RL algorithms need to use these confounded offline data to train an agent that substitutes the human to control the pendulum in the testing environment, where the confounders are unobserved, i.e., $o_3=\left\{s\right\}$.
The main difference between the two tasks is that the confounders in EmotionalPendulum are between the action and reward while the confounders in WindyPendulum are between the action, reward and next state.

The design of EmotionalPendulum refers to the real-world phenomenon that humans may take some actions that are not rational when they are in negative emotions such as boredom and fear. 
Specifically, we assume that the human sitting in the entertainment facility may feel afraid and decide to slow down if the speed is too fast (i.e., if the speed $|v|$ is above the threshold $v_T$), or feel boring and decide to speed up if the speed is too slow (i.e., if $|v| \leq v_T$), and will make rational decisions according to the trained agent if the emotions of the human are not negative, i.e., the behavior policy of the human depends on $o_1=\left\{s,w_1,w_2\right\}$, where $w_1$ denotes whether the human has negative emotions and $w_2$ denotes whether the expressions of the human are negative.
Then, the environment will return a reward $r=r_o+r_a$, where $r_o$ denotes the original reward in the Pendulum task and $r_a$ denotes an additional reward that is generated by the environment to encourage the human if the emotions are negative. 

In WindyPendulum, we take into consideration the wind that may change the direction arbitrarily at each step. We assume that the wind force is 2.5 times larger than the largest force applied to the pendulum by the human/agent and the human will feel afraid if there is a strong wind. The human in a state of fear may choose the force opposite to the wind or decide to slow down, or choose the rational action, i.e., the behavior policy of the human depends on $o_1=\left\{s,w_1,w_2\right\}$, where $w_1$ denotes whether the human is afraid due to the wind and $w_2$ denotes the direction of the wind. See Appendix~E for more details of the description of the two tasks.
Similar to EmotionalPendulum, a reward $r=r_o+r_a$ is then received, where the additional reward $r_a$ is used to encourage the human if the wind exists.

\begin{figure*}[ht]
  \centering
  \includegraphics[width=\textwidth]{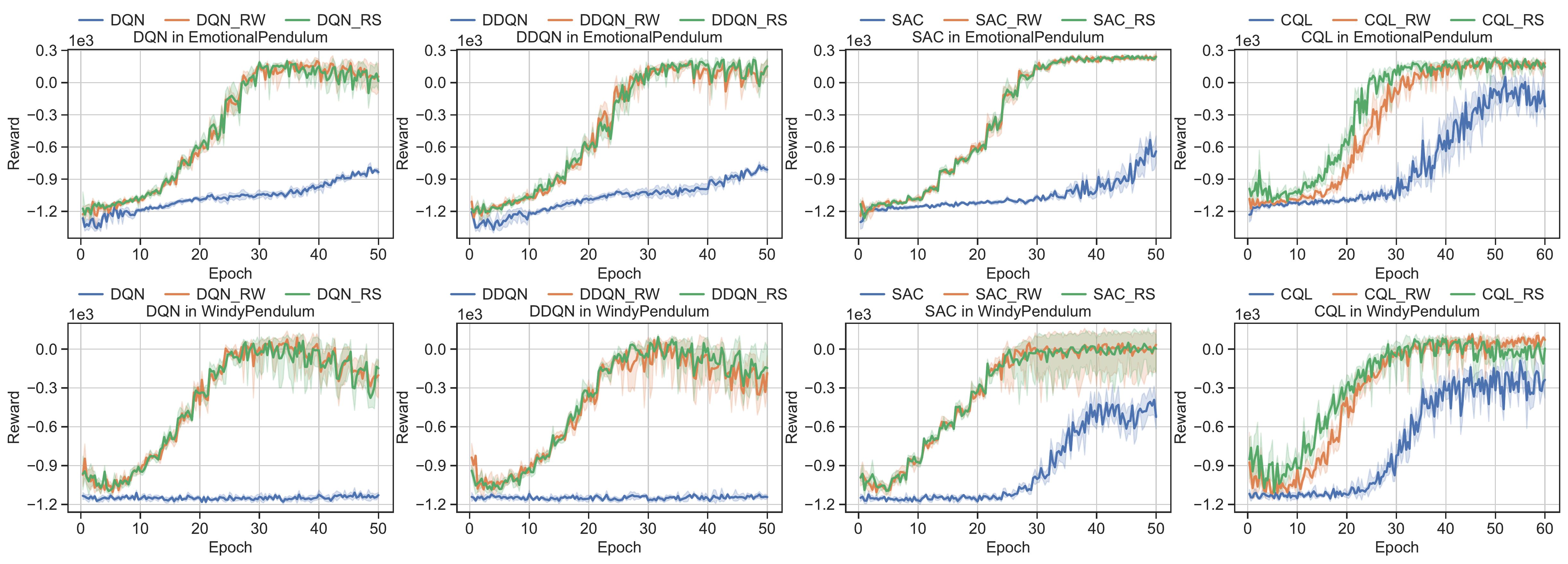}
  \caption{\footnotesize Performance of the deep RL algorithms with and without our deconfounding methods in EmotionalPendulum and WindyPendulum.}
  \label{fig:unobserved_plot}
\end{figure*}

\begin{figure*}[ht]
  \centering
  \includegraphics[width=\textwidth]{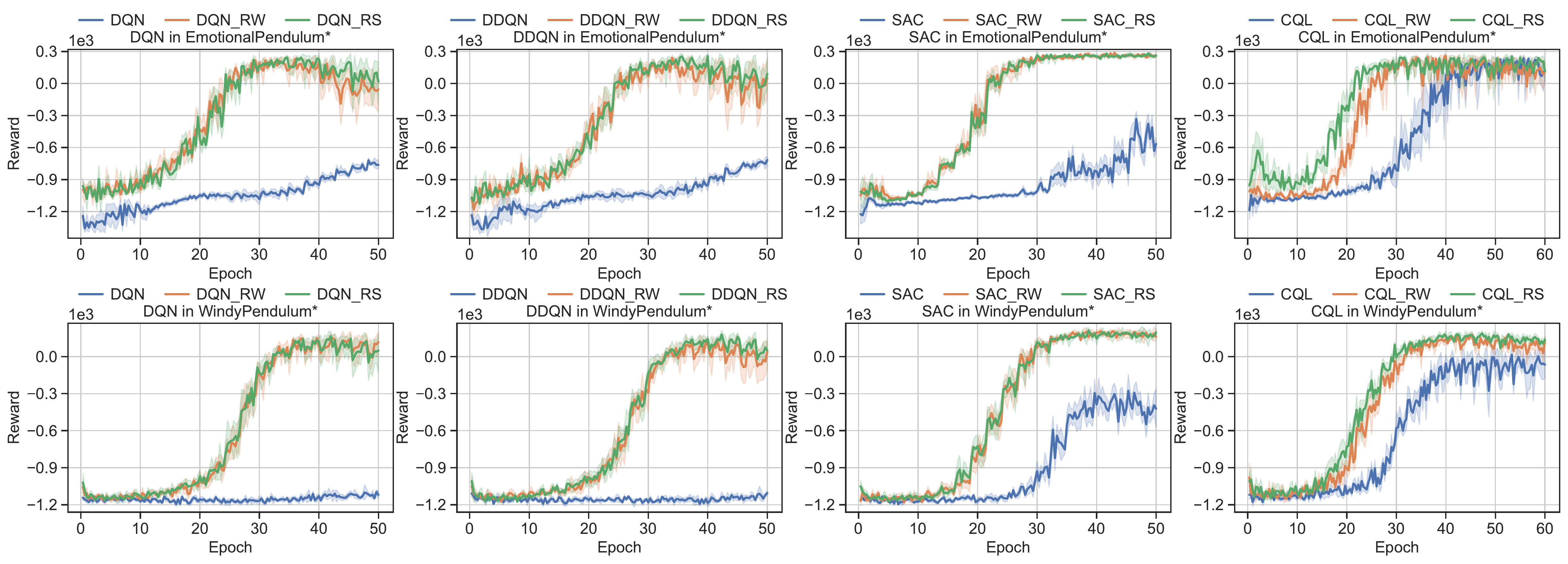}
  \caption{\footnotesize Performance of the deep RL algorithms with and without our deconfounding methods in EmotionalPendulum* and WindyPendulum*.}
  \label{fig:partially_observed_plot}
\end{figure*}

As shown in Figure~\ref{fig:unobserved_plot}, we compare the deep RL algorithms DQN, DDQN, SAC and CQL with and without our deconfounding methods in EmotionalPendulum and WindyPendulum, respectively. 
DQN\_RW denotes the DQN algorithm that uses the reweighting method and DQN\_RS for the resampling method. Other algorithms are denoted similarly. It is clear that the deconfounding deep RL algorithms perform better than the original deep RL algorithms in both tasks. 

We take EmotionalPendulum as an example to specifically illustrate the confounding problem and why our proposed methods could perform better than the original deep RL algorithms.
As shown in the top left of Figure~\ref{figure:scm_unobserved}, the association flowing along the directed path $a\rightarrow m \rightarrow r$ is causal association, while the association flowing along $a \leftarrow w1 \rightarrow w2 \rightarrow r$ is non-causal association. 
In other words, because the human may take irrational actions when he/she is in negative emotions, and because the environment may return some additional reward if the emotions of the human are negative, there is non-causal association between the irrational action and the additional reward, which may mislead the human to believe that the irrational action will be well rewarded.
Given $s$, there is only causal association between $a$ and $r$ in the online data, because the non-causal association flowing along $a\leftarrow s\rightarrow r$ is blocked by $s$. In contrast, given $s$, there are both causal association and non-causal association between $a$ and $r$ in the offline data.
Therefore, the original deep RL algorithms trained using confounded offline data will be misled by non-causal association in the data, and thus perform poorly in the online environment. However, the deconfounding deep RL algorithms can learn causal association from confounded offline data, and thus perform well in the online environment.

\begin{table}[t]
\vskip 0.15in
\begin{center}
\begin{small}
\begin{sc}

\resizebox{0.97\linewidth}{!}{

\begin{tabular}{llll|r|r|r|r}
\toprule
    &   &     &     &      BC &    CQL &         CQL\_RW &         CQL\_RS \\
$p_{fail}$ & $odds_1$ & $v_T$ & $I_{p,1}$ &         &        &                 &                 \\
\midrule
\multirow{8}{*}{0.2} & \multirow{4}{*}{4} & \multirow{2}{*}{0.5} & 0.7 &  -1020.0 & -236.3 &           114.4 &  \textbf{153.9} \\
    &   &     & 0.9 & -1145.9 & -552.5 &            59.1 &  \textbf{100.6} \\
\cline{3-8}
    &   & \multirow{2}{*}{1.0} & 0.7 &  -857.6 &  -98.5 &           180.1 &  \textbf{181.8} \\
    &   &     & 0.9 & -1096.5 & -531.5 &            85.2 &  \textbf{113.6} \\
\cline{2-8}
\cline{3-8}
    & \multirow{4}{*}{6} & \multirow{2}{*}{0.5} & 0.7 &  -361.2 &   46.3 &   \textbf{92.4} &            84.6 \\
    &   &     & 0.9 &  -999.2 &  -70.7 &            36.8 &   \textbf{72.3} \\
\cline{3-8}
    &   & \multirow{2}{*}{1.0} & 0.7 &  -399.9 &   77.9 &            93.3 &   \textbf{95.1} \\
    &   &     & 0.9 &  -834.5 &  -25.6 &   \textbf{83.3} &            76.9 \\
\cline{1-8}
\cline{2-8}
\cline{3-8}
\multirow{8}{*}{0.1} & \multirow{4}{*}{4} & \multirow{2}{*}{0.5} & 0.7 & -977.2 &  -82.0 &           115.0 &  \textbf{123.1} \\
    &   &     & 0.9 & -1126.7 & -523.1 &   \textbf{97.2} &            44.5 \\
\cline{3-8}
    &   & \multirow{2}{*}{1.0} & 0.7 &  -884.7 &   81.7 &           206.5 &  \textbf{225.2} \\
    &   &     & 0.9 & -1065.8 & -443.5 &            56.3 &  \textbf{144.3} \\
\cline{2-8}
\cline{3-8}
    & \multirow{4}{*}{6} & \multirow{2}{*}{0.5} & 0.7 &  -557.1 &   94.2 &           130.3 &  \textbf{131.3} \\
    &   &     & 0.9 & -1022.0 &  -10.0 &            81.6 &  \textbf{115.3} \\
\cline{3-8}
    &   & \multirow{2}{*}{1.0} & 0.7 &  -570.9 &  119.2 &  \textbf{126.5} &           117.1 \\
    &   &     & 0.9 & -899.5 &   80.4 &           110.3 &  \textbf{112.1} \\
\bottomrule
\end{tabular}

}

\end{sc}
\end{small}
\end{center}
\vskip -0.1in
\caption{Comparison of BC, CQL, CQL\_RW, and CQL\_RS under different settings of EmotionalPendulum.}
\label{table:cql_and_bc_EmotionalPendulum_uc}
\end{table}

So, what will happen if we enhance the non-causal association flowing along $a \leftarrow w1 \rightarrow w2 \rightarrow r$ in the offline data?
The values of the environmental hyperparameters of EmotionalPendulum, which correspond to the top row of Figure~\ref{fig:unobserved_plot}, are as follows: $p_{fail}=0.2$, $odds_1=4$, $v_T=1$, and $I_{p,1}=0.7$, where $I_{p,1}$ denotes the probability of the human choosing irrational actions when he/she has negative emotions and $odds_1$ denotes the odds that the human does not have negative emotions.
Clearly, if we keep the other hyperparameters constant and increase the value of $I_{p,1}$ from 0.7 to 0.9, the non-causal association flowing along $a \leftarrow w1 \rightarrow w2 \rightarrow r$ in the offline data will be enhanced.
As shown in Table~\ref{table:cql_and_bc_EmotionalPendulum_uc}, the enhancement of the non-causal association leads to increases in the gaps between the performance of CQL and CQL\_RW, and between the performance of CQL and CQL\_RS. 
Additionally, Table~\ref{table:cql_and_bc_EmotionalPendulum_uc} shows that, no matter how much $p_{fail}$, $odds_1$, and $v_T$ are, the gaps increase if we increase the value of $I_{p,1}$ from 0.7 to 0.9.
Similarly, if we keep the other hyperparameters constant, reducing the value of $odds_1$ leads to an enhancement of the non-causal association. As shown in Table~\ref{table:cql_and_bc_EmotionalPendulum_uc}, regardless of how much $p_{fail}$, $v_T$, and $I_{p,1}$ are, the gaps increase if we reduce the value of $odds_1$ from 6 to 4.

Moreover, CQL\_RW and CQL\_RS perform better than CQL under all settings in Table~\ref{table:cql_and_bc_EmotionalPendulum_uc}, which verifies the robustness of our deconfounding methods. In addition, as shown in Table~\ref{table:cql_and_bc_EmotionalPendulum_uc}, the behavior cloning (BC) algorithm performs much worse than CQL, CQL\_RW and CQL\_RS because the offline data are not expert data. 
In both EmotionalPendulum and WindyPendulum, the behavior policy of the human performs poorly, because the human takes many irrational actions based on his/her emotions. 
Therefore, imitation learning methods such as BC are not suitable for both tasks.
In fact, in addition to Table~\ref{table:cql_and_bc_EmotionalPendulum_uc}, we compare BC, the original deep RL algorithms, and the deconfounding deep RL algorithms under different settings in Appendix~E to verify the robustness of our deconfounding methods.
Furthermore, we perform some ablation experiments in Appendix~E.

\subsection{Partially Observed Confounders}

In the above two tasks, we assume that the confounders in the offline data are unobserved and 
identify the causal effect based on a variation of the frontdoor criterion.
By contrast, in the two new tasks, EmotionalPendulum* and WindyPendulum*,
we assume that the confounders in the offline data are partially observed and condition on a subset of the confounders to identify the causal effect.
In the new tasks, the controller of the pendulum never fails, i.e., the action $a$ that the human wants to take is the same as the actual action $m$ executed by the machine.
The corresponding causal graphs and details of these two new tasks are given in Appendix~E. 
To solve the new tasks, we define a new CMDP and two new SCMs and derive two new deconfounding methods under the new definitions in Appendix~B.
As shown in Figure~\ref{fig:partially_observed_plot}, the deconfounding deep RL algorithms perform better than the original deep RL algorithms. Additionally, in Appendix~E, we compare BC, the original deep RL algorithms, and the deconfounding deep RL algorithms under different settings.

\section{Related Work}
\label{section:related_work}

A number of studies~\cite{lattimore2016causal,nair2021spectral} seek to introduce causality into the field of RL in various settings, where the use of observational data~\cite{wang2020provably,lu2018deconfounding,gasse2021causal} is one of the key issues.

\paragraph{The Bandit Problem.}
The Causal bandit problem~\cite{lattimore2016causal} seeks to learn the optimal intervention from the interventional data, which conform to a causal graph, to minimise a simple regret.
Sen {\it et al.}~\shortcite{sen2017identifying} propose the successive rejects algorithms and derive the gap-dependent bounds for causal bandit based on importance sampling.
The Contextual bandit problem~\cite{NIPS2007_4b04a686} seeks to learn a policy which depends on the context, namely the environmental variables, to maximize the reward which depends on the action and the context.
Kallus and Zhou~\shortcite{kallus2018policy} propose algorithms for OPE and learning from the confounded observational data with continuous actions based on the inverse probability weighting.
Our paper, however, studies the causal RL problem which is more difficult than the bandit problem with a longer horizon.

\paragraph{Causal RL.}
There is some work studying causal RL in the model-based RL settings.
Lu {\it et al.}~\shortcite{lu2018deconfounding} propose a model-based RL method that estimates an SCM from observational data with the time-invariant confounder between the action and reward based on the latent-variable model proposed by Louizos {\it et al.}~\shortcite{Louizos2017Causal}. This model, however, cannot estimate the correct causal effect provided that the latent variable is misspecified or the data distribution is complex as shown by Rissanen and Marttinen~\shortcite{Rissanen2021critical}.
Gasse {\it et al.}~\shortcite{gasse2021causal} combine interventional data with observational data to estimate the latent-based transition model, which can be used for model-based RL. However, since that the time consumed by their program increases rapidly as the size of the discrete latent space increases,
and that they assume that the state space is smaller than the discrete latent space, their algorithm can only be used to address problems with small state spaces. 
On the contrary, our deconfounding methods can be applied to problems with both continuous and discrete variables and our assumptions do not restrict the size of the state and confounder spaces.
The closest work is the one by Wang {\it et al.}~\shortcite{wang2020provably}, where the authors focus on how to improve the sample efficiency of the online algorithm by incorporating large amounts of observational data in the model-free RL settings.
However, they follow the assumption of Yang and Wang~\shortcite{yang2019sample,yang2020reinforcement}; Jin {\it et al.}~\shortcite{jin2020provably} that the transition kernels and reward functions are linear, so that the corresponding SCMs are also linear~\cite{peters2017elements}. This strong assumption is hard to hold in the real world, where the dynamics of the environment are nonlinear.
Furthermore, they assume that there is very complex correlation between the backdoor-adjusted or frontdoor-adjusted feature and another feature. For example, they assume that the backdoor-adjusted feature is the expectation of the state-action-confounder feature. These assumptions make their theoretical algorithms difficult to implement.
In contrast, the transition kernels and reward functions in our assumptions can be nonlinear, and experiments are provided to verify the performance of the offline RL algorithms combined with our deconfounding methods.

\paragraph{Off-Policy Evaluation.}
A line of work in the OPE field uses confounded data to estimate the performance of the evaluation policy. There are several papers that design estimators for the partially observable Markov decision process in the tabular setting~\cite{nair2021spectral,tennenholtz2020off}, which cannot be applied to the large or continuous state space. Recently a lot of work~\cite{kallus2020confounding,gelada2019off,hallak2017consistent,kallus2019efficiently,liu2018breaking} proposes algorithms for OPE based on the importance sampling technique. However, these algorithms estimate the stationary distribution density ratio, which changes as the evaluation policy changes. This makes their work inapplicable to the RL field, where the evaluation policy is always changing. Instead, since we realize that SAC, DQN, and other off-policy RL algorithms utilize only the single-step transition in the data, our deconfounding methods are based on the conditional distribution density ratio, which remains constant in the learning process, so that our work can be applied to the RL field.

\section{Conclusion}
In this paper, we propose two plug-in deconfounding methods based on the importance sampling and causal inference techniques for model-free deep RL algorithms. These two methods can be applied to large and continuous environments.
In addition, we prove that our deconfounding methods can construct an unbiased loss function w.r.t the online data and show that the deconfounding deep RL algorithms perform better than the original deep RL algorithms in the four benchmark tasks 
which are created by modifying the OpenAI Gym.
A limitation of our work is the assumption that we already know the causal graph of the environment. In future work, we plan to build a causal model through causal discovery without prior knowledge of the environment and to deconfound the confounded data based on the discovered causal model. We hope that our deconfounding methods can bridge the gap between offline RL algorithms and real-world problems by leveraging large amounts of observational data.

\appendix

\section*{Acknowledgments}
This work is supported in part by the National Key Research and Development Program of China (No.~2021ZD0112400), the NSFC-Liaoning Province United Foundation under Grant U1908214, the National Natural Science Foundation of China (No.~62076259), the 111 Project (No.~D23006), the Fundamental Research Funds for the Central Universities under grant DUT21TD107, DUT22ZD214, the LiaoNing Revitalization Talents Program (No.~XLYC2008017), the Fundamental and Applicational Research Funds of Guangdong province (No.~2023A1515012946), and the Fundamental Research Funds for the Central Universities-Sun Yat-sen University.
The authors would like to thank Zifan Wu for revising the author response to the reviewers.



\bibliographystyle{named}
\bibliography{ijcai23}

\begin{thebibliography}{}

\bibitem[\protect\citeauthoryear{Agarwal \bgroup \em et al.\egroup
  }{2020}]{agarwal2020optimistic}
Rishabh Agarwal, Dale Schuurmans, and Mohammad Norouzi.
\newblock An optimistic perspective on offline reinforcement learning.
\newblock In {\em International Conference on Machine Learning}, pages
  104--114. PMLR, 2020.

\bibitem[\protect\citeauthoryear{Brockman \bgroup \em et al.\egroup
  }{2016}]{gym}
Greg Brockman, Vicki Cheung, Ludwig Pettersson, Jonas Schneider, John Schulman,
  Jie Tang, and Wojciech Zaremba.
\newblock Openai gym.
\newblock {\em arXiv preprint arXiv:1606.01540}, 2016.

\bibitem[\protect\citeauthoryear{Ernst \bgroup \em et al.\egroup
  }{2005}]{ernst2005tree}
Damien Ernst, Pierre Geurts, and Louis Wehenkel.
\newblock Tree-based batch mode reinforcement learning.
\newblock {\em Journal of Machine Learning Research}, 6:503--556, 2005.

\bibitem[\protect\citeauthoryear{Fujimoto \bgroup \em et al.\egroup
  }{2019}]{fujimoto2019off}
Scott Fujimoto, David Meger, and Doina Precup.
\newblock Off-policy deep reinforcement learning without exploration.
\newblock In {\em International Conference on Machine Learning}, pages
  2052--2062. PMLR, 2019.

\bibitem[\protect\citeauthoryear{Gasse \bgroup \em et al.\egroup
  }{2021}]{gasse2021causal}
Maxime Gasse, Damien Grasset, Guillaume Gaudron, and Pierre-Yves Oudeyer.
\newblock Causal reinforcement learning using observational and interventional
  data.
\newblock {\em arXiv preprint arXiv:2106.14421}, 2021.

\bibitem[\protect\citeauthoryear{Gelada and Bellemare}{2019}]{gelada2019off}
Carles Gelada and Marc~G Bellemare.
\newblock Off-policy deep reinforcement learning by bootstrapping the covariate
  shift.
\newblock In {\em Proceedings of the AAAI Conference on Artificial
  Intelligence}, volume~33, pages 3647--3655, 2019.

\bibitem[\protect\citeauthoryear{Glymour \bgroup \em et al.\egroup
  }{2016}]{glymour2016causal}
Madelyn Glymour, Judea Pearl, and Nicholas~P Jewell.
\newblock {\em Causal inference in statistics: A primer}.
\newblock John Wiley \& Sons, 2016.

\bibitem[\protect\citeauthoryear{Haarnoja \bgroup \em et al.\egroup
  }{2017}]{sac}
Tuomas Haarnoja, Haoran Tang, Pieter Abbeel, and Sergey Levine.
\newblock Reinforcement learning with deep energy-based policies.
\newblock In {\em International Conference on Machine Learning}, pages
  1352--1361. PMLR, 2017.

\bibitem[\protect\citeauthoryear{Hallak and
  Mannor}{2017}]{hallak2017consistent}
Assaf Hallak and Shie Mannor.
\newblock Consistent on-line off-policy evaluation.
\newblock In {\em International Conference on Machine Learning}, pages
  1372--1383. PMLR, 2017.

\bibitem[\protect\citeauthoryear{Jaques \bgroup \em et al.\egroup
  }{2019}]{jaques2019way}
Natasha Jaques, Asma Ghandeharioun, Judy~Hanwen Shen, Craig Ferguson, Agata
  Lapedriza, Noah Jones, Shixiang Gu, and Rosalind Picard.
\newblock Way off-policy batch deep reinforcement learning of implicit human
  preferences in dialog.
\newblock {\em arXiv preprint arXiv:1907.00456}, 2019.

\bibitem[\protect\citeauthoryear{Jin \bgroup \em et al.\egroup
  }{2020}]{jin2020provably}
Chi Jin, Zhuoran Yang, Zhaoran Wang, and Michael~I Jordan.
\newblock Provably efficient reinforcement learning with linear function
  approximation.
\newblock In {\em Conference on Learning Theory}, pages 2137--2143. PMLR, 2020.

\bibitem[\protect\citeauthoryear{Kallus and
  Uehara}{2019}]{kallus2019efficiently}
Nathan Kallus and Masatoshi Uehara.
\newblock Efficiently breaking the curse of horizon in off-policy evaluation
  with double reinforcement learning.
\newblock {\em arXiv preprint arXiv:1909.05850}, 2019.

\bibitem[\protect\citeauthoryear{Kallus and Zhou}{2018}]{kallus2018policy}
Nathan Kallus and Angela Zhou.
\newblock Policy evaluation and optimization with continuous treatments.
\newblock In {\em International conference on artificial intelligence and
  statistics}, pages 1243--1251. PMLR, 2018.

\bibitem[\protect\citeauthoryear{Kallus and Zhou}{2020}]{kallus2020confounding}
Nathan Kallus and Angela Zhou.
\newblock Confounding-robust policy evaluation in infinite-horizon
  reinforcement learning.
\newblock {\em arXiv preprint arXiv:2002.04518}, 2020.

\bibitem[\protect\citeauthoryear{Kumar \bgroup \em et al.\egroup
  }{2019}]{kumar2019stabilizing}
Aviral Kumar, Justin Fu, George Tucker, and Sergey Levine.
\newblock Stabilizing off-policy q-learning via bootstrapping error reduction.
\newblock {\em arXiv preprint arXiv:1906.00949}, 2019.

\bibitem[\protect\citeauthoryear{Kumar \bgroup \em et al.\egroup }{2020}]{cql}
Aviral Kumar, Aurick Zhou, George Tucker, and Sergey Levine.
\newblock Conservative q-learning for offline reinforcement learning.
\newblock {\em arXiv preprint arXiv:2006.04779}, 2020.

\bibitem[\protect\citeauthoryear{Langford and Zhang}{2007}]{NIPS2007_4b04a686}
John Langford and Tong Zhang.
\newblock The epoch-greedy algorithm for multi-armed bandits with side
  information.
\newblock In J.~Platt, D.~Koller, Y.~Singer, and S.~Roweis, editors, {\em
  Advances in Neural Information Processing Systems}, volume~20. Curran
  Associates, Inc., 2007.

\bibitem[\protect\citeauthoryear{Lattimore \bgroup \em et al.\egroup
  }{2016}]{lattimore2016causal}
Finnian Lattimore, Tor Lattimore, and Mark~D Reid.
\newblock Causal bandits: Learning good interventions via causal inference.
\newblock {\em arXiv preprint arXiv:1606.03203}, 2016.

\bibitem[\protect\citeauthoryear{Levine \bgroup \em et al.\egroup
  }{2020}]{tutoriallevine}
Sergey Levine, Aviral Kumar, George Tucker, and Justin Fu.
\newblock Offline reinforcement learning: Tutorial, review, and perspectives on
  open problems.
\newblock {\em arXiv preprint arXiv:2005.01643}, 2020.

\bibitem[\protect\citeauthoryear{Liu \bgroup \em et al.\egroup
  }{2018}]{liu2018breaking}
Qiang Liu, Lihong Li, Ziyang Tang, and Dengyong Zhou.
\newblock Breaking the curse of horizon: Infinite-horizon off-policy
  estimation.
\newblock {\em arXiv preprint arXiv:1810.12429}, 2018.

\bibitem[\protect\citeauthoryear{Lloyd}{1982}]{lloyd1982least}
Stuart Lloyd.
\newblock Least squares quantization in pcm.
\newblock {\em IEEE transactions on information theory}, 28(2):129--137, 1982.

\bibitem[\protect\citeauthoryear{Louizos \bgroup \em et al.\egroup
  }{2017}]{Louizos2017Causal}
Christos Louizos, Uri Shalit, Joris~M Mooij, David Sontag, Richard Zemel, and
  Max Welling.
\newblock Causal effect inference with deep latent-variable models.
\newblock In I.~Guyon, U.~Von Luxburg, S.~Bengio, H.~Wallach, R.~Fergus,
  S.~Vishwanathan, and R.~Garnett, editors, {\em Advances in Neural Information
  Processing Systems}, volume~30. Curran Associates, Inc., 2017.

\bibitem[\protect\citeauthoryear{Lu \bgroup \em et al.\egroup
  }{2018}]{lu2018deconfounding}
Chaochao Lu, Bernhard Sch{\"o}lkopf, and Jos{\'e}~Miguel Hern{\'a}ndez-Lobato.
\newblock Deconfounding reinforcement learning in observational settings.
\newblock {\em arXiv preprint arXiv:1812.10576}, 2018.

\bibitem[\protect\citeauthoryear{MacQueen}{1967}]{macqueen1967classification}
J~MacQueen.
\newblock Classification and analysis of multivariate observations.
\newblock In {\em 5th Berkeley Symp. Math. Statist. Probability}, pages
  281--297. University of California Los Angeles LA USA, 1967.

\bibitem[\protect\citeauthoryear{Mnih \bgroup \em et al.\egroup }{2013}]{dqn}
Volodymyr Mnih, Koray Kavukcuoglu, David Silver, Alex Graves, Ioannis
  Antonoglou, Daan Wierstra, and Martin Riedmiller.
\newblock Playing atari with deep reinforcement learning.
\newblock {\em arXiv preprint arXiv:1312.5602}, 2013.

\bibitem[\protect\citeauthoryear{Nagler}{2018}]{nagler2018generic}
Thomas Nagler.
\newblock A generic approach to nonparametric function estimation with mixed
  data.
\newblock {\em Statistics \& Probability Letters}, 137:326--330, 2018.

\bibitem[\protect\citeauthoryear{Nair and Jiang}{2021}]{nair2021spectral}
Yash Nair and Nan Jiang.
\newblock A spectral approach to off-policy evaluation for pomdps.
\newblock {\em arXiv preprint arXiv:2109.10502}, 2021.

\bibitem[\protect\citeauthoryear{Pearl}{2009}]{pearl2009causality}
Judea Pearl.
\newblock {\em Causality}.
\newblock Cambridge university press, 2009.

\bibitem[\protect\citeauthoryear{Pearl}{2012}]{pearl2012calculus}
Judea Pearl.
\newblock The do-calculus revisited.
\newblock {\em arXiv preprint arXiv:1210.4852}, 2012.

\bibitem[\protect\citeauthoryear{Peters \bgroup \em et al.\egroup
  }{2017}]{peters2017elements}
Jonas Peters, Dominik Janzing, and Bernhard Sch{\"o}lkopf.
\newblock {\em Elements of causal inference: foundations and learning
  algorithms}.
\newblock The MIT Press, 2017.

\bibitem[\protect\citeauthoryear{Rissanen and
  Marttinen}{2021}]{Rissanen2021critical}
Severi Rissanen and Pekka Marttinen.
\newblock A critical look at the consistency of causal estimation with deep
  latent variable models.
\newblock In M.~Ranzato, A.~Beygelzimer, Y.~Dauphin, P.S. Liang, and J.~Wortman
  Vaughan, editors, {\em Advances in Neural Information Processing Systems},
  volume~34, pages 4207--4217. Curran Associates, Inc., 2021.

\bibitem[\protect\citeauthoryear{Rothfuss \bgroup \em et al.\egroup
  }{2019}]{rothfuss2019conditional}
Jonas Rothfuss, Fabio Ferreira, Simon Walther, and Maxim Ulrich.
\newblock Conditional density estimation with neural networks: Best practices
  and benchmarks.
\newblock {\em arXiv:1903.00954}, 2019.

\bibitem[\protect\citeauthoryear{Sen \bgroup \em et al.\egroup
  }{2017}]{sen2017identifying}
Rajat Sen, Karthikeyan Shanmugam, Alexandros~G Dimakis, and Sanjay Shakkottai.
\newblock Identifying best interventions through online importance sampling.
\newblock In {\em International Conference on Machine Learning}, pages
  3057--3066. PMLR, 2017.

\bibitem[\protect\citeauthoryear{Siegel \bgroup \em et al.\egroup
  }{2020}]{siegel2020keep}
Noah~Y Siegel, Jost~Tobias Springenberg, Felix Berkenkamp, Abbas Abdolmaleki,
  Michael Neunert, Thomas Lampe, Roland Hafner, Nicolas Heess, and Martin
  Riedmiller.
\newblock Keep doing what worked: Behavioral modelling priors for offline
  reinforcement learning.
\newblock {\em arXiv preprint arXiv:2002.08396}, 2020.

\bibitem[\protect\citeauthoryear{Sugiyama \bgroup \em et al.\egroup
  }{2010}]{sugiyama2010Conditional}
Masashi Sugiyama, Ichiro Takeuchi, Taiji Suzuki, Takafumi Kanamori, Hirotaka
  Hachiya, and Daisuke Okanohara.
\newblock Conditional density estimation via least-squares density ratio
  estimation.
\newblock In Yee~Whye Teh and Mike Titterington, editors, {\em Proceedings of
  the Thirteenth International Conference on Artificial Intelligence and
  Statistics}, volume~9 of {\em Proceedings of Machine Learning Research},
  pages 781--788, Chia Laguna Resort, Sardinia, Italy, 13--15 May 2010. PMLR.

\bibitem[\protect\citeauthoryear{Sutton and
  Barto}{2018}]{sutton2018reinforcement}
Richard~S Sutton and Andrew~G Barto.
\newblock {\em Reinforcement learning: An introduction}.
\newblock MIT press, 2018.

\bibitem[\protect\citeauthoryear{Takuma~Seno}{2021}]{seno2021d3rlpy}
Michita~Imai Takuma~Seno.
\newblock d3rlpy: An offline deep reinforcement library.
\newblock In {\em NeurIPS 2021 Offline Reinforcement Learning Workshop},
  December 2021.

\bibitem[\protect\citeauthoryear{Tennenholtz \bgroup \em et al.\egroup
  }{2020}]{tennenholtz2020off}
Guy Tennenholtz, Uri Shalit, and Shie Mannor.
\newblock Off-policy evaluation in partially observable environments.
\newblock In {\em Proceedings of the AAAI Conference on Artificial
  Intelligence}, volume~34, pages 10276--10283, 2020.

\bibitem[\protect\citeauthoryear{van Hasselt \bgroup \em et al.\egroup
  }{2016}]{ddqn111}
Hado van Hasselt, Arthur Guez, and David Silver.
\newblock Deep reinforcement learning with double q-learning.
\newblock In Dale Schuurmans and Michael~P. Wellman, editors, {\em Proceedings
  of the Thirtieth {AAAI} Conference on Artificial Intelligence, February
  12-17, 2016, Phoenix, Arizona, {USA}}, pages 2094--2100. {AAAI} Press, 2016.

\bibitem[\protect\citeauthoryear{Wang \bgroup \em et al.\egroup
  }{2021}]{wang2020provably}
Lingxiao Wang, Zhuoran Yang, and Zhaoran Wang.
\newblock Provably efficient causal reinforcement learning with confounded
  observational data.
\newblock In M.~Ranzato, A.~Beygelzimer, Y.~Dauphin, P.S. Liang, and J.~Wortman
  Vaughan, editors, {\em Advances in Neural Information Processing Systems},
  volume~34, pages 21164--21175. Curran Associates, Inc., 2021.

\bibitem[\protect\citeauthoryear{Yang and Wang}{2019}]{yang2019sample}
Lin Yang and Mengdi Wang.
\newblock Sample-optimal parametric q-learning using linearly additive
  features.
\newblock In {\em International Conference on Machine Learning}, pages
  6995--7004. PMLR, 2019.

\bibitem[\protect\citeauthoryear{Yang and Wang}{2020}]{yang2020reinforcement}
Lin Yang and Mengdi Wang.
\newblock Reinforcement learning in feature space: Matrix bandit, kernels, and
  regret bound.
\newblock In {\em International Conference on Machine Learning}, pages
  10746--10756. PMLR, 2020.

\end{thebibliography}

\onecolumn

\section*{\centering{Causal Deep Reinforcement Learning using Observational Data}}
\section*{\centering{Appendix}}

\section{Mathematical Details}

\subsection{Difference between the Dynamics in the Offline Data and the Online Data}
\label{appendix:ponpoffnotequal}
In Section~\ref{section:alg_for_unobserved}, we claim that the dynamics are different between offline data and online data. Now we prove it formally.
\begin{proof}

From the law of total probability and the criterion for the identification of covariate-specific effects, which is indicated by the Rule 2 in Chapter 3 of~\cite{glymour2016causal}, we obtain:
\begin{equation}
    \label{eqn:poff}
    \begin{split}
        &\hat{\PText}\left(s',r|s,a\right)=\sum\limits_{w}\hat{\PText}\left(s',r|a,w,s\right)\hat{\PText}\left(w|s,a\right)
    \end{split}
\end{equation}
and
\begin{equation}
    \label{eqn:pon}
    \begin{split}
        \bar{\PText}\left(s',r|s,a\right)&=\hat{\PText}\left(s',r|s,do\left(a\right)\right)\\
        &=\sum\limits_{w}\hat{\PText}\left(s',r|a,w,s\right)\hat{\PText}\left(w|s\right).\\
    \end{split}
\end{equation}

Since $a$ depends on $w$ given $s$ in the offline data, $\hat{\PText}\left(w|s,a\right)\not\equiv\hat{\PText}\left(w|s\right)$. Therefore the above two are different.

\end{proof}

\subsection{Proof of Proposition~\ref{proposition:1}}
\label{appendix:unobserved_reweight_loss_proof}

\begin{proposition*}
    Under the definitions of the CMDP and SCMs in Section~\ref{background}, it holds that
    \begin{equation}
    \label{eqn:loss2_cmdp2_derive_inproof}
        \begin{split}
        L_2\left(\phi,\mathcal{D}_{\offText}\right)&\triangleq\mathbb{E}_{s,a\sim{\mathcal{D}_{\offText}}}\left[\mathbb{E}_{s',r\sim{\bar{\PText}\left(\cdot,\cdot|s,a\right)}}\left[f_\phi+h_\phi\right]\right]\\
        &=\mathbb{E}_{s,a\sim{\mathcal{D}_{\offText}}}\left[\mathbb{E}_{s',r,m\sim{\hat{\PText}\left(\cdot,\cdot,\cdot|s,a\right)}}\left[d_1\left(s,m,a,s',r\right)f_\phi+h_\phi\right]\right]\\
        &=\mathbb{E}_{s,a,s',r,m\sim{\mathcal{D}_{\offText}}}\left[d_1\left(s,m,a,s',r\right)f_\phi+h_\phi\right]\\
        \end{split}
    \end{equation}
                            \end{proposition*}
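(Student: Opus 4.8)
The plan is to establish the two equalities separately. Since $h_\phi$ depends only on $(s,a)$, it satisfies $\mathbb{E}_{s',r\sim\bar{\PText}(\cdot,\cdot|s,a)}[h_\phi]=\mathbb{E}_{s',r,m\sim\hat{\PText}(\cdot,\cdot,\cdot|s,a)}[h_\phi]=h_\phi$, so all the work concerns the $f_\phi$ term. For the \textbf{first equality}, I would first note that, after conditioning on $s$, the sub-SCM of the offline model (Section~\ref{background}, left panel of Figure~\ref{figure:scms_unobserved}) on the variables $(w,a,m,s',r)$ is exactly the classical frontdoor graph: $w$ is an unobserved common cause of $a$ and $(s',r)$, while $m$ intercepts the unique directed path $a\to m\to(s',r)$ (since $(s',r)$ has parents $s,w,m$ but not $a$) and is not itself confounded with $(s',r)$ once $(s,a)$ are fixed. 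The required d-separations --- no unblocked backdoor path from $a$ to $m$, and all backdoor paths from $m$ to $(s',r)$ blocked by $\{s,a\}$ --- are read off from the edge set, and the positivity assumption of Section~\ref{background} makes the adjustment well-defined. Do-calculus (the frontdoor criterion) then gives
\begin{equation*}
\bar{\PText}(s',r|s,a)=\hat{\PText}\bigl(s',r\,|\,s,do(a)\bigr)=\int \hat{\PText}(m|s,a)\Bigl(\sum_{a'}\hat{\PText}(s',r|s,m,a')\,\hat{\PText}(a'|s)\Bigr)\,dm.
\end{equation*}

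Next I would substitute $\hat{\PText}(m|s,a)=\hat{\PText}(s',r,m|s,a)/\hat{\PText}(s',r|s,m,a)$ into this identity; the denominator that appears is precisely the denominator of $d_1(\tau)$ in Equation~\ref{eqn:label_definition}, so the integrand collapses to $\hat{\PText}(s',r,m|s,a)\,d_1(s,m,a,s',r)$. Multiplying by $f_\phi$ (which does not depend on $m$) and reading the double integral over $(s',r,m)$ back as an expectation yields
\begin{equation*}
\mathbb{E}_{s',r\sim\bar{\PText}(\cdot,\cdot|s,a)}\bigl[f_\phi+h_\phi\bigr]=\mathbb{E}_{s',r,m\sim\hat{\PText}(\cdot,\cdot,\cdot|s,a)}\bigl[d_1(s,m,a,s',r)\,f_\phi+h_\phi\bigr],
\end{equation*}
and applying $\mathbb{E}_{s,a\sim\mathcal{D}_{\offText}}$ to both sides gives the first equality in the statement.

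For the \textbf{second equality}, I would use that the tuples in $\mathcal{D}_{\offText}$ are generated by the offline SCM, so the joint law of $(s,a,s',r,m)$ drawn from $\mathcal{D}_{\offText}$ factorizes as the $\mathcal{D}_{\offText}$-marginal of $(s,a)$ times $\hat{\PText}(s',r,m|s,a)$. The law of total expectation then collapses the nested expectation $\mathbb{E}_{s,a\sim\mathcal{D}_{\offText}}\bigl[\mathbb{E}_{s',r,m\sim\hat{\PText}(\cdot,\cdot,\cdot|s,a)}[\,\cdot\,]\bigr]$ into the single joint expectation $\mathbb{E}_{s,a,s',r,m\sim\mathcal{D}_{\offText}}[\,\cdot\,]$, which is the third line of the statement.

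I expect the main obstacle to be the frontdoor step: verifying cleanly that $m$ is a legitimate frontdoor mediator (i.e.\ checking the relevant conditional-independence relations in the offline SCM) and that the positivity conditions hold, all while ``$\hat{\PText}$'' denotes the mixed discrete--continuous density of Equation~\ref{eqn:pdf} rather than an ordinary probability, so that the manipulations with sums over $a'$ and integrals over $(s',r,m)$ are legitimate. The remaining steps --- the algebra that surfaces $d_1(\tau)$ and the tower-property argument --- are routine bookkeeping; the only point needing care there is keeping straight which conditionals are evaluated under the observational law $\hat{\PText}$ and which under the interventional law $\bar{\PText}$.
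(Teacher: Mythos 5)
Your proof is correct and matches the paper's argument in all essentials: the paper identifies the interventional law via exactly the two do-calculus steps you invoke ($\hat{\PText}\left(m|s,do\left(a\right)\right)=\hat{\PText}\left(m|s,a\right)$ and $\hat{\PText}\left(s',r|m,s,do\left(a\right)\right)=\sum_{a'}\hat{\PText}\left(s',r|m,a',s\right)\hat{\PText}\left(a'|s\right)$, i.e.\ the conditional frontdoor adjustment given $s$), and then collapses the nested expectation over $\mathcal{D}_{\offText}$ just as you do with the tower property. The only cosmetic difference is the order of the algebra: the paper first introduces the importance ratio $\bar{\PText}\left(s',r,m|s,a\right)/\hat{\PText}\left(s',r,m|s,a\right)$ and then evaluates it by do-calculus, whereas you apply the frontdoor formula first and surface $d_1\left(\tau\right)$ by substituting the factorization of $\hat{\PText}\left(s',r,m|s,a\right)$ --- the same computation arranged differently.
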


\begin{proof}
From the importance sampling, we obtain:
\begin{equation*}
    \begin{split}
    L_2\left(\phi,\mathcal{D}_{off}\right)&\triangleq\mathbb{E}_{s,a\sim{\mathcal{D}_{off}}}\left[\mathbb{E}_{s',r\sim{\bar{\PText}\left(\cdot,\cdot|s,a\right)}}\left[f_\phi+h_\phi\right]\right]\\
    &=\mathbb{E}_{s,a\sim{\mathcal{D}_{off}}}\left[\mathbb{E}_{s',r,m\sim{\hat{\PText}\left(\cdot,\cdot,\cdot|s,a\right)}}\left[\frac{\bar{\PText}\left(s',r,m|s,a\right)}{\hat{\PText}\left(s',r,m|s,a\right)}f_\phi\right]+h_\phi\right]\\
    &=\mathbb{E}_{s,a,s',r,m\sim{\mathcal{D}_{off}}}\left[\frac{\bar{\PText}\left(s',r,m|s,a\right)}{\hat{\PText}\left(s',r,m|s,a\right)}f_\phi+h_\phi\right].\\
    \end{split}
\end{equation*}

According to do-calculus, we obtain:
\begin{equation*}
    \label{eqn:ratio1_cmdp2}
    \begin{split}
    \frac{\bar{\PText}\left(s',r,m|s,a\right)}{\hat{\PText}\left(s',r,m|s,a\right)}&=\frac{\hat{\PText}\left(s',r,m|s,do\left(a\right)\right)}{\hat{\PText}\left(s',r,m|s,a\right)}\\
    &=\frac{\hat{\PText}\left(s',r|m,s,do\left(a\right)\right)\hat{\PText}\left(m|s,do\left(a\right)\right)}{\hat{\PText}\left(s',r|m,s,a\right)\hat{\PText}\left(m|s,a\right)}.\\
    \end{split}
\end{equation*}

According to do-calculus, we obtain: $\hat{\PText}\left(m|s,do\left(a\right)\right)=\hat{\PText}\left(m|s,a\right)$, and
\begin{equation*}
    \begin{split}
    \hat{\PText}\left(s',r|m,s,do\left(a\right)\right)&=\hat{\PText}\left(s',r|s,do\left(m\right),do\left(a\right)\right)\\
    &=\hat{\PText}\left(s',r|s,do\left(m\right)\right)\\
    &=\sum\limits_{a'}\hat{\PText}\left(s',r|m,a',s\right)\hat{\PText}\left(a'|s\right).\\
    \end{split}
\end{equation*}

By bringing in the two equations, we obtain:
\begin{equation*}
    \begin{split}
    &\frac{\bar{\PText}\left(s',r,m|s,a\right)}{\hat{\PText}\left(s',r,m|s,a\right)}=\frac{\sum\limits_{a'}\hat{\PText}\left(s',r|m,a',s\right)\hat{\PText}\left(a'|s\right)}{\hat{\PText}\left(s',r|m,s,a\right)}.\\
    \end{split}
\end{equation*}

By bringing in this equation, we obtain:
\begin{equation*}
    \begin{split}
    &L_2\left(\phi,\mathcal{D}_{off}\right)=\mathbb{E}_{s,a,s',r,m\sim{\mathcal{D}_{off}}}\left[\frac{\sum\limits_{a'}\hat{\PText}\left(s',r|m,a',s\right)\hat{\PText}\left(a'|s\right)}{\hat{\PText}\left(s',r|m,a,s\right)}f_\phi+h_\phi\right].\\
    \end{split}
\end{equation*}

\end{proof}

\subsection{Proof of Proposition~\ref{proposition:l2eql3__cmdp2}}
\label{appendix:unobserved_resample_loss_proof}

\begin{proposition*}
    Under the definitions of the CMDP and SCMs in Section~\ref{background}, the loss function of the resampling method is asymptotically equal to that of the reweighting method as in Equation~\ref{eqn:loss_resample_equal_cmdp__2_inproof__} provided that the dataset is large enough.
    \begin{equation}
    \label{eqn:loss_resample_equal_cmdp__2_inproof__}
        \begin{split}
        &\lim_{N\to\infty}\left(L_3\left(\phi,\mathcal{D}_{\offText}\right)-L_2\left(\phi,\mathcal{D}_{\offText}\right)\right)=0\\
        \end{split}
    \end{equation}
\end{proposition*}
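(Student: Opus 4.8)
The plan is to rewrite both losses as (weighted) empirical averages over the $N$ samples of $\mathcal{D}_{\offText}$ and to show that, as $N\to\infty$, they converge almost surely to the same population limit. Interpreting $\mathbb{E}_{s,a,s',r,m\sim\mathcal{D}_{\offText}}[\,\cdot\,]$ as the uniform average over the dataset, Proposition~\ref{proposition:1} gives $L_2(\phi,\mathcal{D}_{\offText})=\frac{1}{N}\sum_{i=1}^{N}(d_{1,i}f_{\phi,i}+h_{\phi,i})$, whereas expanding the definition of $L_3$ yields
\begin{equation*}
L_3(\phi,\mathcal{D}_{\offText})=\sum_{i=1}^{N}p_1(I=i)(f_{\phi,i}+h_{\phi,i})=\frac{\frac{1}{N}\sum_{i=1}^{N}d_{1,i}(f_{\phi,i}+h_{\phi,i})}{\frac{1}{N}\sum_{j=1}^{N}d_{1,j}}.
\end{equation*}
Thus $L_3$ is a ratio of two empirical averages, and it remains to identify the limits of its numerator and denominator.

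First I would establish the identity $\mathbb{E}_{\mathcal{D}_{\offText}}[d_1(\tau)]=1$. By the proof of Proposition~\ref{proposition:1}, $d_1(\tau)=\bar{\PText}(s',r,m|s,a)/\hat{\PText}(s',r,m|s,a)$, a ratio that is well defined by the positivity assumption of Section~\ref{background}. Conditioning on $(s,a)$ and taking expectation under the offline dynamics $\hat{\PText}(\cdot,\cdot,\cdot|s,a)$ collapses this ratio to the total mass of $\bar{\PText}(\cdot,\cdot,\cdot|s,a)$, i.e.\ to $1$; the tower property then gives $\mathbb{E}_{\mathcal{D}_{\offText}}[d_1(\tau)]=1$. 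The same conditioning argument, using that $h_\phi$ depends only on $(s,a)$ by Assumption~\ref{assumption:drlloss}, yields the refinement $\mathbb{E}_{\mathcal{D}_{\offText}}[d_1(\tau)h_\phi]=\mathbb{E}_{\mathcal{D}_{\offText}}[h_\phi]$.

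With these identities in hand, I would apply the strong law of large numbers to the i.i.d.\ offline samples (under the implicit integrability of $f_\phi$, $h_\phi$, and $d_1 f_\phi$): almost surely as $N\to\infty$, the denominator of $L_3$ tends to $\mathbb{E}_{\mathcal{D}_{\offText}}[d_1(\tau)]=1$; its numerator tends to $\mathbb{E}_{\mathcal{D}_{\offText}}[d_1(\tau)f_\phi]+\mathbb{E}_{\mathcal{D}_{\offText}}[d_1(\tau)h_\phi]=\mathbb{E}_{\mathcal{D}_{\offText}}[d_1(\tau)f_\phi]+\mathbb{E}_{\mathcal{D}_{\offText}}[h_\phi]$; and $L_2$ tends to the same quantity $\mathbb{E}_{\mathcal{D}_{\offText}}[d_1(\tau)f_\phi]+\mathbb{E}_{\mathcal{D}_{\offText}}[h_\phi]$. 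Since the denominator limit $1$ is nonzero, the continuous mapping theorem gives $L_3\to\mathbb{E}_{\mathcal{D}_{\offText}}[d_1(\tau)f_\phi]+\mathbb{E}_{\mathcal{D}_{\offText}}[h_\phi]$, so $L_3-L_2\to 0$.

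The main obstacle is the pair of moment identities $\mathbb{E}_{\mathcal{D}_{\offText}}[d_1(\tau)]=1$ and, more subtly, $\mathbb{E}_{\mathcal{D}_{\offText}}[d_1(\tau)h_\phi]=\mathbb{E}_{\mathcal{D}_{\offText}}[h_\phi]$: without the second, the numerator of $L_3$ would converge to $\mathbb{E}_{\mathcal{D}_{\offText}}[d_1(\tau)f_\phi]+\mathbb{E}_{\mathcal{D}_{\offText}}[d_1(\tau)h_\phi]$, which in general differs from the limit of $L_2$, and the claimed asymptotic equivalence would fail. Everything else is a routine combination of the strong law of large numbers and the continuous mapping theorem, modulo stating explicitly the i.i.d.\ sampling and the integrability assumptions on $f_\phi$ and $h_\phi$.
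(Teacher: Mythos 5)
Your proof is correct and takes essentially the same route as the paper's: writing $L_3$ as the self-normalized ratio $\bigl(\tfrac{1}{N}\sum_i d_{1,i}(f_{\phi,i}+h_{\phi,i})\bigr)\big/\bigl(\tfrac{1}{N}\sum_j d_{1,j}\bigr)$ and sending the normalizer to $\mathbb{E}[d_1(\tau)]=1$ via the conditional importance-ratio argument is exactly the paper's identity $L_2=L_3\cdot\tfrac{1}{N}\sum_j d_{1,j}$ followed by the same limit computation. The only cosmetic difference is the treatment of $h_\phi$: the paper folds the weight onto $h_\phi$ as well (valid because the ratio has conditional mean one given $(s,a)$), whereas you use the equivalent identity $\mathbb{E}_{\mathcal{D}_{\offText}}[d_1(\tau)h_\phi]=\mathbb{E}_{\mathcal{D}_{\offText}}[h_\phi]$.
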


\begin{proof}
Slightly modifying the first step of the proof in Appendix~\ref{appendix:unobserved_reweight_loss_proof}, we obtain:
\begin{equation*}
    \begin{split}
    L_2\left(\phi,\mathcal{D}_{off}\right)&\triangleq\mathbb{E}_{s,a\sim{\mathcal{D}_{off}}}\left[\mathbb{E}_{s',r\sim{\bar{\PText}\left(\cdot,\cdot|s,a\right)}}\left[f_\phi+h_\phi\right]\right]\\
    &=\mathbb{E}_{s,a\sim{\mathcal{D}_{off}}}\left[\mathbb{E}_{s',r,m\sim{\hat{\PText}\left(\cdot,\cdot,\cdot|s,a\right)}}\left[\frac{\bar{\PText}\left(s',r,m|s,a\right)}{\hat{\PText}\left(s',r,m|s,a\right)}\left(f_\phi+h_\phi\right)\right]\right]\\
    &=\mathbb{E}_{s,a,s',r,m\sim{\mathcal{D}_{off}}}\left[d_1\left(s,m,a,s',r\right)\left(f_\phi+h_\phi\right)\right].\\
    \end{split}
\end{equation*}

Then, based on the reparameterization trick, we obtain:
\begin{equation*}
    \begin{split}
    &L_2\left(\phi,\mathcal{D}_{off}\right)=\mathbb{E}_{I{\sim}DiscreteU\left(1,N\right)}\left[d_{1,I}\left(f_\phi\left(s_{I},a_{I},s_{I}',r_{I}\right)+h_\phi\left(s_{I},a_{I}\right)\right)\right],\\
    \end{split}
\end{equation*}
where $DiscreteU\left(1,N\right)$ denotes the discrete uniform distribution on the integers $1,2,\dotsc,N$.

Furthermore, from the importance sampling, we obtain:
\begin{equation*}
    \begin{split}
    &L_2\left(\phi,\mathcal{D}_{off}\right)=\mathbb{E}_{I{\sim}p_1}\left[\frac{\frac{1}{N}}{p_1\left(I\right)}d_{1,I}\left(f_\phi\left(s_{I},a_{I},s_{I}',r_{I}\right)+h_\phi\left(s_{I},a_{I}\right)\right)\right].\\
    \end{split}
\end{equation*}

And then, the formula can be simplified to
\begin{equation*}
    \begin{split}
    L_2\left(\phi,\mathcal{D}_{off}\right)=&\mathbb{E}_{I{\sim}p_1}\left[\frac{\sum\limits_{j=1}^{N}d_{1,j}}{N}\left(f_\phi\left(s_{I},a_{I},s_{I}',r_{I}\right)+h_\phi\left(s_{I},a_{I}\right)\right)\right]\\
    =&L_3\left(\phi,\mathcal{D}_{off}\right)\frac{\sum\limits_{j=1}^{N}d_{1,j}}{N}.\\
    \end{split}
\end{equation*}

Thus,
\begin{equation*}
    \begin{split}
    \lim_{N\to\infty}\left(L_3\left(\phi,\mathcal{D}_{\offText}\right)-L_2\left(\phi,\mathcal{D}_{\offText}\right)\right)&=\lim_{N\to\infty}L_3\left(\phi,\mathcal{D}_{off}\right)\left(1-\frac{\sum\limits_{j=1}^{N}d_{1,j}}{N}\right).\\
    \end{split}
\end{equation*}

Since
\begin{equation*}
    \begin{split}
    \lim_{N\to\infty}\left(1-\frac{\sum\limits_{j=1}^{N}d_{1,j}}{N}\right)=&1-\lim_{N\to\infty}\mathbb{E}_{I{\sim}DiscreteU\left(1,N\right)}\left[d_{1,I}\right]\\
    =&1-\lim_{N\to\infty}\mathbb{E}_{s,a,s',r,m\sim{\mathcal{D}_{off}}}\left[d_1\left(s,m,a,s',r\right)\right]\\
    =&1-\lim_{N\to\infty}\mathbb{E}_{s,a\sim{\mathcal{D}_{off}}}\left[\mathbb{E}_{s',r,m\sim{\hat{\PText}\left(\cdot,\cdot,\cdot|s,a\right)}}\left[\frac{\bar{\PText}\left(s',r,m|s,a\right)}{\hat{\PText}\left(s',r,m|s,a\right)}\right]\right]\\
    =&1-\lim_{N\to\infty}\mathbb{E}_{s,a\sim{\mathcal{D}_{off}}}\left[\mathbb{E}_{s',r,m\sim{\bar{\PText}\left(\cdot,\cdot,\cdot|s,a\right)}}\left[1\right]\right]\\
    =&0,\\
    \end{split}
\end{equation*}
and the limit of the loss function of the neural network is finite, we obtain:
\begin{equation*}
    \begin{split}
    \lim_{N\to\infty}\left(L_3\left(\phi,\mathcal{D}_{\offText}\right)-L_2\left(\phi,\mathcal{D}_{\offText}\right)\right)&=\lim_{N\to\infty}L_3\left(\phi,\mathcal{D}_{off}\right)\lim_{N\to\infty}\left(1-\frac{\sum\limits_{j=1}^{N}d_{1,j}}{N}\right)\\
    &=0.\\
    \end{split}
\end{equation*}

\end{proof}

\subsection{Proof of Proposition~\ref{proposition:cmdp1_loss2_holds}}
\label{appendix:partially observed_reweight_loss_proof}

\begin{proposition*}
    Under Assumption~\ref{assumption1} and the definitions of the CMDP and SCMs in Appendix~\ref{appendix:background_for_partially_observed_confounder}, it holds that
    \begin{equation}
        \label{eqn:loss2_cmdp1_inproof}
        \begin{split}
        L_2\left(\phi,\mathcal{D}_{\offText}\right)&\triangleq\mathbb{E}_{s,a\sim{\mathcal{D}_{\offText}}}\left[\mathbb{E}_{s',r\sim{\bar{\PText}\left(\cdot,\cdot|s,a\right)}}\left[f_\phi+h_\phi\right]\right]\\
        &=\mathbb{E}_{s,a\sim{\mathcal{D}_{\offText}}}\left[\mathbb{E}_{s',r,u\sim{\hat{\PText}\left(\cdot,\cdot,\cdot|s,a\right)}}\left[\frac{\hat{\PText}\left(u|s\right)}{\hat{\PText}\left(u|s,a\right)}f_\phi+h_\phi\right]\right]\\
        &=\mathbb{E}_{s,a,s',r,u\sim{\mathcal{D}_{\offText}}}\left[\frac{\hat{\PText}\left(u|s\right)}{\hat{\PText}\left(u|s,a\right)}f_\phi+h_\phi\right].\\
        \end{split}
    \end{equation}
\end{proposition*}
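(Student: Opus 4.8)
The plan is to mirror the three-step argument used in the proof of Proposition~\ref{proposition:1}, replacing the frontdoor-style adjustment through the intermediate state $m$ with a backdoor-style adjustment that conditions on the observed confounder subset $u$.

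First I would apply importance sampling to the inner expectation. Since $h_\phi$ depends only on $(s,a)$ it carries through unchanged, while $f_\phi$, which depends on $(s,a,s',r)$, acquires the factor $\bar{\PText}(s',r,u|s,a)/\hat{\PText}(s',r,u|s,a)$ once the auxiliary variable $u$ is introduced and $(s',r,u)$ is drawn from $\hat{\PText}(\cdot,\cdot,\cdot|s,a)$; this produces the middle line of the claim up to identifying that ratio. The last line then follows at once, because sampling $(s,a,s',r,u)$ from $\mathcal{D}_{\offText}$ is exactly sampling $(s,a)$ from $\mathcal{D}_{\offText}$ and $(s',r,u)$ from $\hat{\PText}(\cdot,\cdot,\cdot|s,a)$.

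Second I would evaluate the ratio via do-calculus. Writing $\bar{\PText}(s',r,u|s,a)=\hat{\PText}(s',r,u|s,do(a))$ and factoring it as $\hat{\PText}(s',r|u,s,do(a))\,\hat{\PText}(u|s,do(a))$, I would use two properties of the SCM defined in Appendix~\ref{appendix:background_for_partially_observed_confounder} together with Assumption~\ref{assumption1}: (i) $u$ is a non-descendant of $a$, hence $\hat{\PText}(u|s,do(a))=\hat{\PText}(u|s)$; and (ii) $(s,u)$ is an admissible backdoor adjustment set for the effect of $a$ on $(s',r)$, hence $\hat{\PText}(s',r|u,s,do(a))=\hat{\PText}(s',r|u,s,a)$. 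Dividing by the observational factorization $\hat{\PText}(s',r,u|s,a)=\hat{\PText}(s',r|u,s,a)\,\hat{\PText}(u|s,a)$, the $\hat{\PText}(s',r|u,s,a)$ factors cancel and the ratio collapses to $\hat{\PText}(u|s)/\hat{\PText}(u|s,a)$; the positivity content of Assumption~\ref{assumption1} ensures the denominator is nonzero. Substituting this back into the expression from step one finishes the proof.

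The main obstacle is step two: one must check, directly from the graph specified in Appendix~\ref{appendix:background_for_partially_observed_confounder}, that $s$ and $u$ together block every backdoor path from $a$ into $(s',r)$ and contain no descendant of $a$, so that the two do-calculus reductions are legitimate. Once the graph is fixed these reductions are routine rule applications, and everything else is bookkeeping identical to the unobserved-confounder case.
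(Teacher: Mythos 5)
Your proposal matches the paper's own argument step for step: the same importance-sampling rewrite with the auxiliary variable $u$, the same factorization of $\bar{\PText}\left(s',r,u|s,a\right)=\hat{\PText}\left(s',r,u|s,do\left(a\right)\right)$ into $\hat{\PText}\left(s',r|u,s,do\left(a\right)\right)\hat{\PText}\left(u|s,do\left(a\right)\right)$, and the same two do-calculus identities $\hat{\PText}\left(s',r|u,s,do\left(a\right)\right)=\hat{\PText}\left(s',r|u,s,a\right)$ and $\hat{\PText}\left(u|s,do\left(a\right)\right)=\hat{\PText}\left(u|s\right)$ justified by Assumption~\ref{assumption1}, yielding the ratio $\hat{\PText}\left(u|s\right)/\hat{\PText}\left(u|s,a\right)$. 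It is correct and essentially the paper's proof, with the added (and welcome) explicit remark that positivity keeps the denominator nonzero.
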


\begin{proof}

From the importance sampling, we obtain:
\begin{equation*}
    \begin{split}
    L_2\left(\phi,\mathcal{D}_{off}\right)&\triangleq\mathbb{E}_{s,a\sim{\mathcal{D}_{off}}}\left[\mathbb{E}_{s',r\sim{\bar{\PText}\left(\cdot,\cdot|s,a\right)}}\left[f_\phi+h_\phi\right]\right]\\
    &=\mathbb{E}_{s,a\sim{\mathcal{D}_{off}}}\left[\mathbb{E}_{s',r,u\sim{\hat{\PText}\left(\cdot,\cdot,\cdot|s,a\right)}}\left[\frac{\bar{\PText}\left(s',r,u|s,a\right)}{\hat{\PText}\left(s',r,u|s,a\right)}f_\phi\right]+h_\phi\right]\\
    &=\mathbb{E}_{s,a,s',r,u\sim{\mathcal{D}_{off}}}\left[\frac{\bar{\PText}\left(s',r,u|s,a\right)}{\hat{\PText}\left(s',r,u|s,a\right)}f_\phi+h_\phi\right].\\
    \end{split}
\end{equation*}

Furthermore, based on do-calculus, we obtain:
\begin{equation*}
    \begin{split}
    L_2\left(\phi,\mathcal{D}_{off}\right)&=\mathbb{E}_{s,a,s',r,u\sim{\mathcal{D}_{off}}}\left[\frac{\hat{\PText}\left(s',r,u|s,do\left(a\right)\right)}{\hat{\PText}\left(s',r,u|s,a\right)}f_\phi+h_\phi\right]\\
    &=\mathbb{E}_{s,a,s',r,u\sim{\mathcal{D}_{off}}}\left[\frac{\hat{\PText}\left(s',r|u,s,do\left(a\right)\right)\hat{\PText}\left(u|s,do\left(a\right)\right)}{\hat{\PText}\left(s',r|u,s,a\right)\hat{\PText}\left(u|s,a\right)}f_\phi+h_\phi\right].\\
    \end{split}
\end{equation*}

According to do-calculus, we obtain:
\begin{equation*}
    \begin{split}
    \hat{\PText}\left(s',r|u,s,do\left(a\right)\right)=\hat{\PText}\left(s',r|u,s,a\right),
    \end{split}
\end{equation*}
and
\begin{equation*}
    \begin{split}
    \hat{\PText}\left(u|s,do\left(a\right)\right)=\hat{\PText}\left(u|s\right).
    \end{split}
\end{equation*}
By bringing in the two equations, we obtain:
\begin{equation*}
    \begin{split}
    &L_2\left(\phi,\mathcal{D}_{off}\right)=\mathbb{E}_{s,a,s',r,u\sim{\mathcal{D}_{off}}}\left[\frac{\hat{\PText}\left(u|s\right)}{\hat{\PText}\left(u|s,a\right)}f_\phi+h_\phi\right].\\
    \end{split}
\end{equation*}

\end{proof}

\subsection{Proof of Proposition~\ref{proposition:l2eql3_cmdp1}}
\label{appendix:partially observed_resample_loss_proof}

\begin{proposition*}
    Under Assumption~\ref{assumption1} and the definitions of the CMDP and SCMs in Appendix~\ref{appendix:background_for_partially_observed_confounder}, the loss function of the resampling method is asymptotically equal to that of the reweighting method as in Equation~\ref{eqn:loss_resample_equal_cmdp__2_inproof} provided that the dataset is large enough.
    \begin{equation}
        \label{eqn:loss_resample_equal_cmdp__2_inproof}
        \begin{split}
        &\lim_{N\to\infty}\left(L_3\left(\phi,\mathcal{D}_{\offText}\right)-L_2\left(\phi,\mathcal{D}_{\offText}\right)\right)=0\\
        \end{split}
    \end{equation}
\end{proposition*}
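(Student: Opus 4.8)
The plan is to transcribe the proof of Proposition~\ref{proposition:l2eql3__cmdp2} almost verbatim, with the frontdoor-style weight for unobserved confounders replaced by the backdoor-style weight $\hat{\PText}\left(u|s\right)/\hat{\PText}\left(u|s,a\right)$ that Proposition~\ref{proposition:cmdp1_loss2_holds} already supplies. Write $d_1\left(\tau\right)$ now as shorthand for this ratio (with $\tau$ containing $u$ in place of $m$), so that Proposition~\ref{proposition:cmdp1_loss2_holds} gives $L_2\left(\phi,\mathcal{D}_{\offText}\right)=\mathbb{E}_{s,a,s',r,u\sim{\mathcal{D}_{\offText}}}\left[d_1\left(\tau\right)f_\phi+h_\phi\right]$. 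The first step is to redo that derivation while keeping $h_\phi$ under the same importance weight, i.e. to show
\[
L_2\left(\phi,\mathcal{D}_{\offText}\right)=\mathbb{E}_{s,a,s',r,u\sim{\mathcal{D}_{\offText}}}\left[d_1\left(\tau\right)\left(f_\phi+h_\phi\right)\right],
\]
which is legitimate because the weight has conditional expectation one given $\left(s,a\right)$, so multiplying the $\left(s,a\right)$-measurable term $h_\phi$ by it leaves the expectation unchanged.

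Next I would apply the reparameterization trick to rewrite this as $\mathbb{E}_{I\sim DiscreteU\left(1,N\right)}\left[d_{1,I}\left(f_{\phi,I}+h_{\phi,I}\right)\right]$, then perform an importance-sampling change of measure from the uniform index distribution to $p_1$, where $p_1\left(I=i\right)=d_{1,i}/\sum_{j=1}^N d_{1,j}$. Since $\tfrac{1/N}{p_1\left(I\right)}d_{1,I}=\tfrac{1}{N}\sum_{j=1}^N d_{1,j}$ is constant in $I$, this collapses to
\[
L_2\left(\phi,\mathcal{D}_{\offText}\right)=\frac{\sum_{j=1}^N d_{1,j}}{N}\,\mathbb{E}_{I\sim p_1}\left[f_{\phi,I}+h_{\phi,I}\right]=\frac{\sum_{j=1}^N d_{1,j}}{N}\,L_3\left(\phi,\mathcal{D}_{\offText}\right),
\]
hence $L_3\left(\phi,\mathcal{D}_{\offText}\right)-L_2\left(\phi,\mathcal{D}_{\offText}\right)=L_3\left(\phi,\mathcal{D}_{\offText}\right)\left(1-\tfrac{1}{N}\sum_{j=1}^N d_{1,j}\right)$.

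Finally I would show that $\tfrac{1}{N}\sum_{j=1}^N d_{1,j}\to 1$: this empirical average is the sample-mean estimate of $\mathbb{E}_{s,a,s',r,u\sim{\mathcal{D}_{\offText}}}\left[d_1\left(\tau\right)\right]$, and because $d_1\left(\tau\right)=\bar{\PText}\left(s',r,u|s,a\right)/\hat{\PText}\left(s',r,u|s,a\right)$ is a likelihood ratio, $\mathbb{E}_{s',r,u\sim\hat{\PText}\left(\cdot,\cdot,\cdot|s,a\right)}\left[d_1\left(\tau\right)\right]$ equals the total mass of $\bar{\PText}\left(\cdot,\cdot,\cdot|s,a\right)$, namely one, for every $\left(s,a\right)$ (the mixed discrete–continuous integral being the one in Equation~\ref{eqn:pdf}), so the population mean is one. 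Combining this with the boundedness of the NN loss, so that $\lim_{N\to\infty}L_3\left(\phi,\mathcal{D}_{\offText}\right)$ is finite, the product of the two limits is zero, which is the claim. The step I expect to require the most care is the verification that $d_1$ is a genuine Radon--Nikodym ratio of the online over the offline joint of $\left(s',r,u\right)$ given $\left(s,a\right)$ — this needs the do-calculus identities behind Proposition~\ref{proposition:cmdp1_loss2_holds} together with the positivity assumption from Appendix~\ref{appendix:background_for_partially_observed_confounder} so that the ratio is well-defined $\hat{\PText}$-almost surely, and some attention to the mixed summation/integration when asserting that $\bar{\PText}$ integrates to one; everything else is a routine re-run of the unobserved-confounder argument.
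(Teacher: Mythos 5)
Your proposal is correct and follows essentially the same route as the paper's own proof: rewrite $L_2$ with $h_\phi$ also under the backdoor weight $\hat{\PText}\left(u|s\right)/\hat{\PText}\left(u|s,a\right)$ (which the paper does by redoing the importance-sampling step with $f_\phi+h_\phi$ inside, equivalent to your conditional-expectation-one argument), apply the reparameterization trick and a change of index measure to $p_2$ to get $L_2=L_3\cdot\frac{1}{N}\sum_j d_{2,j}$, and conclude via the likelihood-ratio identity $\mathbb{E}_{\hat{\PText}}\left[\bar{\PText}/\hat{\PText}\right]=1$ (justified by the same do-calculus identities as the reweighting proposition) together with finiteness of the loss limit. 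Apart from relabeling the weight and sampling distribution ($d_1,p_1$ versus the paper's $d_2,p_2$), there is no substantive difference.
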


\begin{proof}

Slightly modifying the first step of the proof in Appendix~\ref{appendix:partially observed_reweight_loss_proof}, we obtain:
\begin{equation*}
    \begin{split}
    L_2\left(\phi,\mathcal{D}_{off}\right)&\triangleq\mathbb{E}_{s,a\sim{\mathcal{D}_{off}}}\left[\mathbb{E}_{s',r\sim{\bar{\PText}\left(\cdot,\cdot|s,a\right)}}\left[f_\phi+h_\phi\right]\right]\\
    &=\mathbb{E}_{s,a\sim{\mathcal{D}_{off}}}\left[\mathbb{E}_{s',r,u\sim{\hat{\PText}\left(\cdot,\cdot,\cdot|s,a\right)}}\left[\frac{\bar{\PText}\left(s',r,u|s,a\right)}{\hat{\PText}\left(s',r,u|s,a\right)}\left(f_\phi+h_\phi\right)\right]\right]\\
    &=\mathbb{E}_{s,a,r,s',u\sim{\mathcal{D}_{off}}}\left[\frac{\hat{\PText}\left(u|s\right)}{\hat{\PText}\left(u|s,a\right)}\left(f_\phi+h_\phi\right)\right].\\
    \end{split}
\end{equation*}

Then, based on the reparameterization trick, we obtain:
\begin{equation*}
    \begin{split}
    &L_2\left(\phi,\mathcal{D}_{off}\right)=\mathbb{E}_{I{\sim}DiscreteU\left(1,N\right)}\left[\frac{\hat{\PText}\left(u_{I}|s_{I}\right)}{\hat{\PText}\left(u_{I}|s_{I},a_{I}\right)}\left(f_\phi\left(s_{I},a_{I},s_{I}',r_{I}\right)+h_\phi\left(s_{I},a_{I}\right)\right)\right].\\
    \end{split}
\end{equation*}

Furthermore, from the importance sampling, we obtain:
\begin{equation*}
    \begin{split}
    &L_2\left(\phi,\mathcal{D}_{off}\right)=\mathbb{E}_{I{\sim}p_2}\left[\frac{\frac{1}{N}}{p_2\left(I\right)}\frac{\hat{\PText}\left(u_{I}|s_{I}\right)}{\hat{\PText}\left(u_{I}|s_{I},a_{I}\right)}\left(f_\phi\left(s_{I},a_{I},s_{I}',r_{I}\right)+h_\phi\left(s_{I},a_{I}\right)\right)\right].\\
    \end{split}
\end{equation*}
And then, the formula can be simplified to
\begin{equation*}
    \begin{split}
    L_2\left(\phi,\mathcal{D}_{off}\right)=&\mathbb{E}_{I{\sim}p_2}\left[\frac{\sum\limits_{j=1}^{N}\frac{\hat{\PText}\left(u_{j}|s_{j}\right)}{\hat{\PText}\left(u_{j}|s_{j},a_{j}\right)}}{N}\left(f_\phi\left(s_{I},a_{I},s_{I}',r_{I}\right)+h_\phi\left(s_{I},a_{I}\right)\right)\right]\\
    =&L_3\left(\phi,\mathcal{D}_{off}\right)\frac{\sum\limits_{j=1}^{N}\frac{\hat{\PText}\left(u_{j}|s_{j}\right)}{\hat{\PText}\left(u_{j}|s_{j},a_{j}\right)}}{N}.\\
    \end{split}
\end{equation*}

Thus,
\begin{equation*}
    \begin{split}
    \lim_{N\to\infty}\left(L_3\left(\phi,\mathcal{D}_{\offText}\right)-L_2\left(\phi,\mathcal{D}_{\offText}\right)\right)&=\lim_{N\to\infty}L_3\left(\phi,\mathcal{D}_{off}\right)\left(1-\frac{\sum\limits_{j=1}^{N}\frac{\hat{\PText}\left(u_{j}|s_{j}\right)}{\hat{\PText}\left(u_{j}|s_{j},a_{j}\right)}}{N}\right).\\
    \end{split}
\end{equation*}

Since
\begin{equation*}
    \begin{split}
    \lim_{N\to\infty}\left(1-\frac{\sum\limits_{j=1}^{N}\frac{\hat{\PText}\left(u_{j}|s_{j}\right)}{\hat{\PText}\left(u_{j}|s_{j},a_{j}\right)}}{N}\right)=&1-\lim_{N\to\infty}\mathbb{E}_{I{\sim}DiscreteU\left(1,N\right)}\left[\frac{\hat{\PText}\left(u_{I}|s_{I}\right)}{\hat{\PText}\left(u_{I}|s_{I},a_{I}\right)}\right]\\
    =&1-\lim_{N\to\infty}\mathbb{E}_{s,a,s',r,u\sim{\mathcal{D}_{off}}}\left[\frac{\hat{\PText}\left(u|s\right)}{\hat{\PText}\left(u|s,a\right)}\right]\\
    =&1-\lim_{N\to\infty}\mathbb{E}_{s,a\sim{\mathcal{D}_{off}}}\left[\mathbb{E}_{s',r,u\sim{\hat{\PText}\left(\cdot,\cdot,\cdot|s,a\right)}}\left[\frac{\bar{\PText}\left(s',r,u|s,a\right)}{\hat{\PText}\left(s',r,u|s,a\right)}\right]\right]\\
    =&1-\lim_{N\to\infty}\mathbb{E}_{s,a\sim{\mathcal{D}_{off}}}\left[\mathbb{E}_{s',r,u\sim{\bar{\PText}\left(\cdot,\cdot,\cdot|s,a\right)}}\left[1\right]\right]\\
    =&0,\\
    \end{split}
\end{equation*}
and the limit of the loss function of the neural network is finite, we obtain:
\begin{equation*}
    \begin{split}
    \lim_{N\to\infty}\left(L_3\left(\phi,\mathcal{D}_{\offText}\right)-L_2\left(\phi,\mathcal{D}_{\offText}\right)\right)&=\lim_{N\to\infty}L_3\left(\phi,\mathcal{D}_{off}\right)\lim_{N\to\infty}\left(1-\frac{\sum\limits_{j=1}^{N}\frac{\hat{\PText}\left(u_{j}|s_{j}\right)}{\hat{\PText}\left(u_{j}|s_{j},a_{j}\right)}}{N}\right)\\
    &=0.\\
    \end{split}
\end{equation*}
\end{proof}

\section{Background and Algorithms for Partially Observed Confounders}

In Section~\ref{section:alg_for_unobserved}, two deconfounding methods are proposed for the RL tasks where the confounders in the offline data are unobserved.
In contrast, in this section, we derive two new deconfounding methods for the new RL tasks where the confounders in the offline data are partially observed.
Before proposing the deconfounding methods, a CMDP and two SCMs
different from those in Section~\ref{background}
are defined to describe  the new RL tasks.

\subsection{Background for Partially Observed Confounders}
\label{appendix:background_for_partially_observed_confounder}

The CMDP for partially observed confounders can be denoted by a seven-tuple $\left\langle \mathcal{S},\mathcal{A},\mathcal{W},\mathcal{R},P_1,P_2,\mu_0\right\rangle$, where $\mathcal{S}$ denotes the state space, $\mathcal{A}$ denotes the discrete action space, $\mathcal{W}$ denotes the confounder space, $\mathcal{R}$ denotes the reward space, $P_1\left(s',r|s,w,a\right)$ denotes the dynamics of this CMDP, $P_2\left(w|s\right)$ denotes the confounder transition distribution, and $\mu_0\left(s\right)$ denotes the initial state distribution.

\begin{figure}[htbp]
    \centering
    \subfigure{
        \includegraphics[width=1.55in]{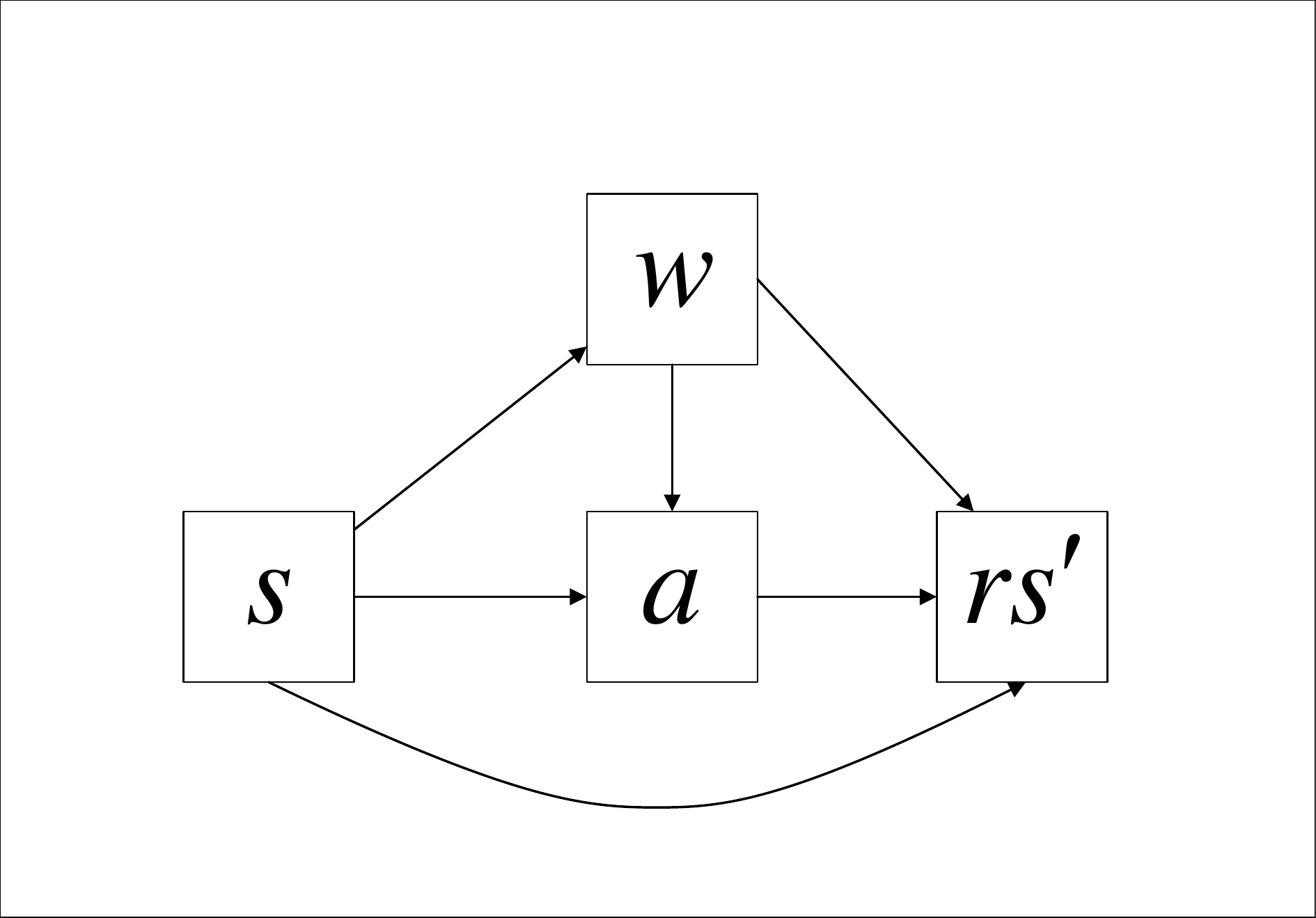}
        \label{cmdp1_scm1}
    }
    \subfigure{
	\includegraphics[width=1.55in]{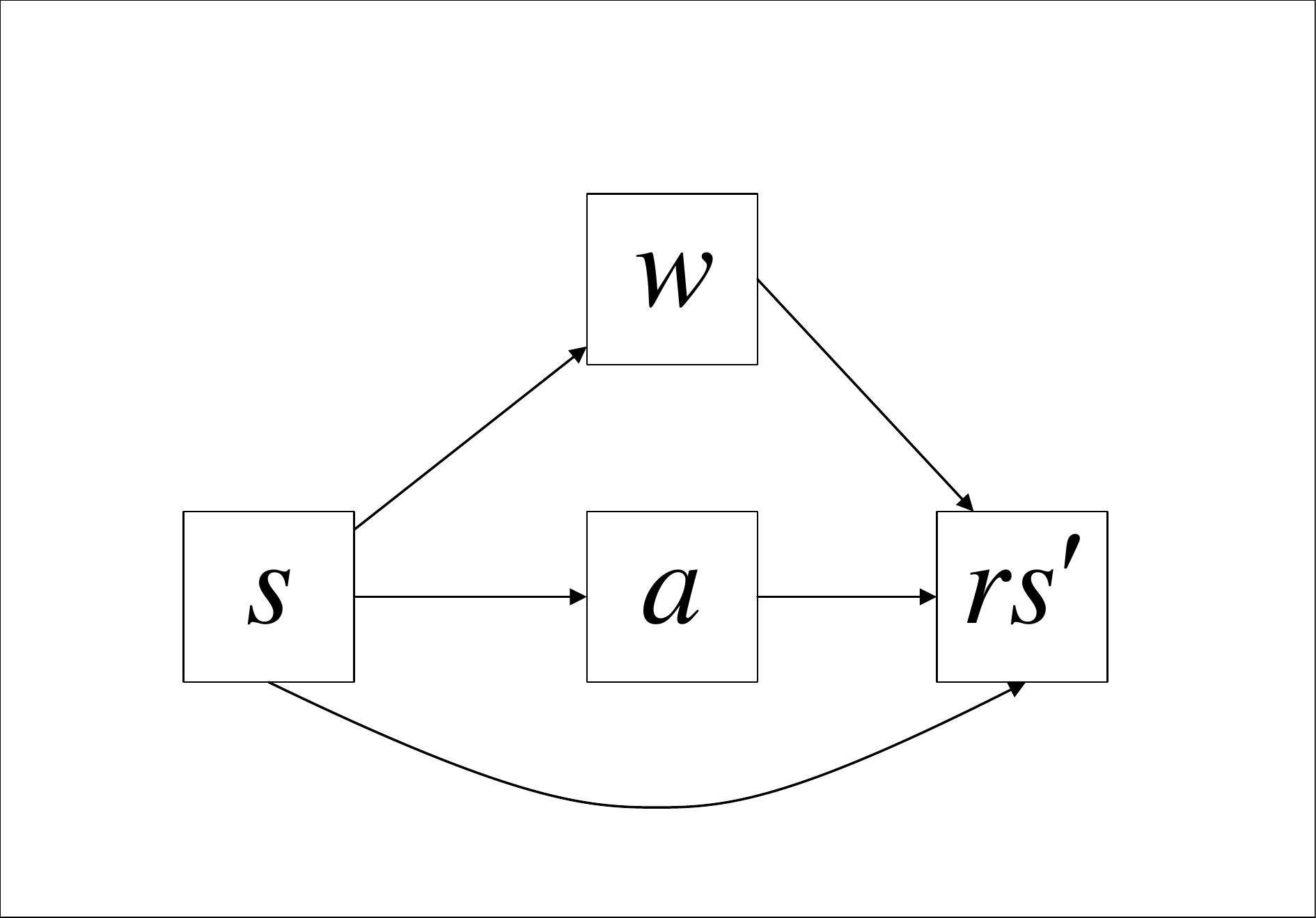}
        \label{cmdp1-scm2}
    }
    \caption{
    The left and right subfigures
    represent the SCM in the offline setting and online setting,
    respectively,
    which correspond to the CMDP for partially observed confounders. The behavior policy depends on $w$ in the offline setting, but not in the online setting.}
    \label{figure:scms}
\end{figure}

The SCM in the offline setting as shown in the first column of Figure~\ref{figure:scms} can be defined as a four-tuple $\left\langle U, V, F, P_e\right\rangle$, where the endogenous variables $V$ includes $\left(s, a, w, s', r\right)$, and the set of structural functions $F$ includes the state-reward transition distribution $P_1\left(s',r|s,w,a\right)$, the confounder transition distribution $P_2\left(w|s\right)$, and the behavior policy $\pi_b\left(a|s,w\right)$. The positivity assumption here is that, for $a \in \mathcal{A}, s \in \mathcal{S},w \in \mathcal{W}$ such that $\PText(s,w)>0$, $\PText(a|s,w)>0$.

The SCM in the online setting where the agent can intervene on the variable $a$ as shown in the second column of Figure~\ref{figure:scms} can be defined as another four-tuple $\left(U, V, F, P_e\right)$, where the set of structural functions $F$ includes the state-reward transition distribution $P_1\left(s',r|s,w,a\right)$, the confounder transition distribution $P_2\left(w|s\right)$, and the policy $\pi\left(a|s\right)$.

We assume that $w$ is partially observable in the offline data, and condition on a subset $u$ of $w$ to identify the causal effect.
Specifically, we assume that there exists an observable subset $u$ of $w$ such that $u{\cup}s$ satisfies the backdoor criterion, as in Assumption~\ref{assumption1}. Two examples are illustrated in Figure~\ref{figure:uofw}. 
\begin{assumption}
    \label{assumption1}
    Suppose there exists a set of observable variables $u{\subseteq}w$, such that $u{\cup}s$ satisfies: $u{\cup}s$ contains no descendant node of $a$ and $u{\cup}s$ blocks every path between $a$ and $s'{\cup}r$ that contains an arrow into $a$.
\end{assumption}

\begin{figure}[ht]
    \centering
    \subfigure{
        \includegraphics[width=1.76in]{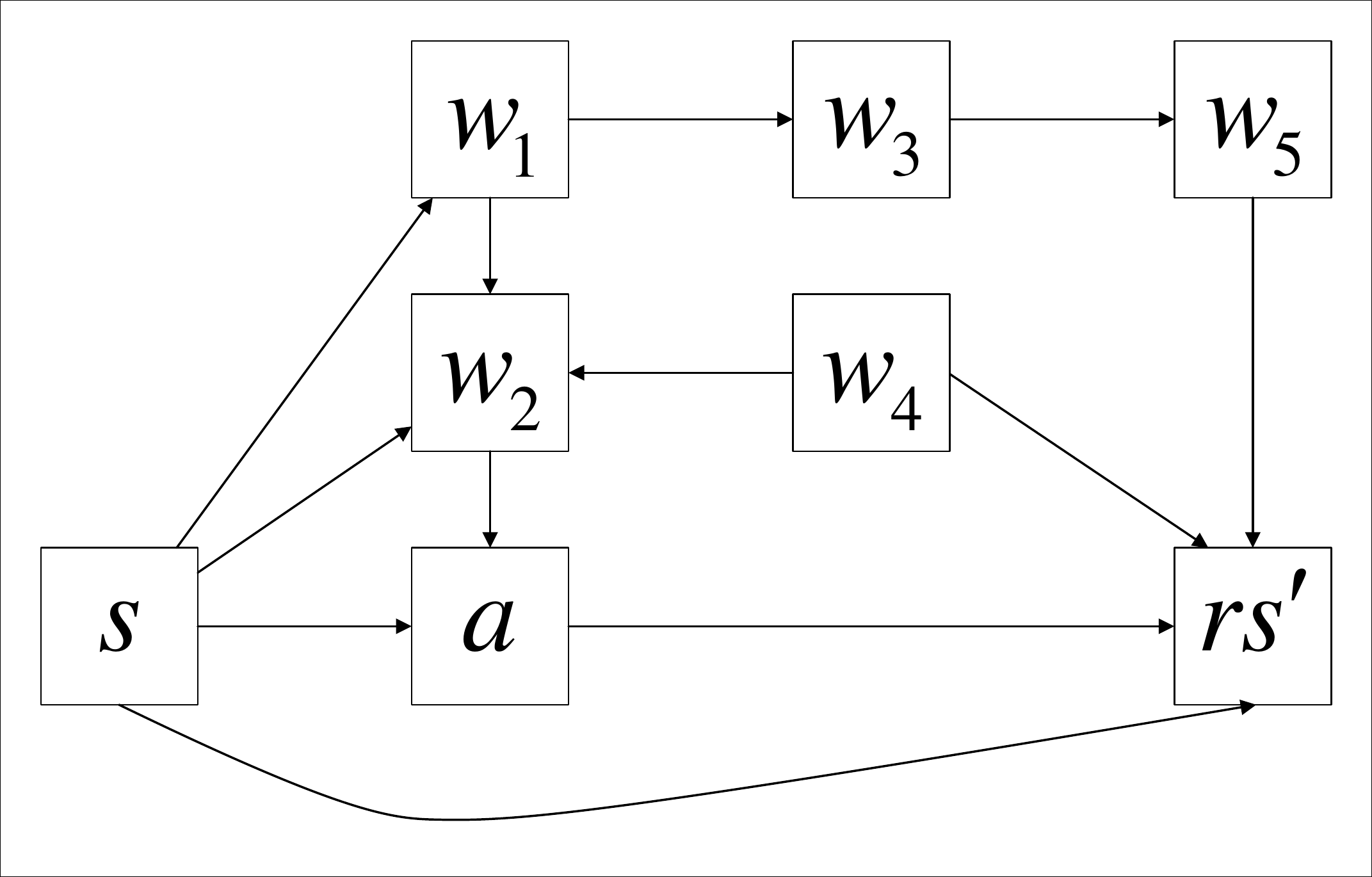}
        
    }
    \subfigure{
	\includegraphics[width=1.5in]{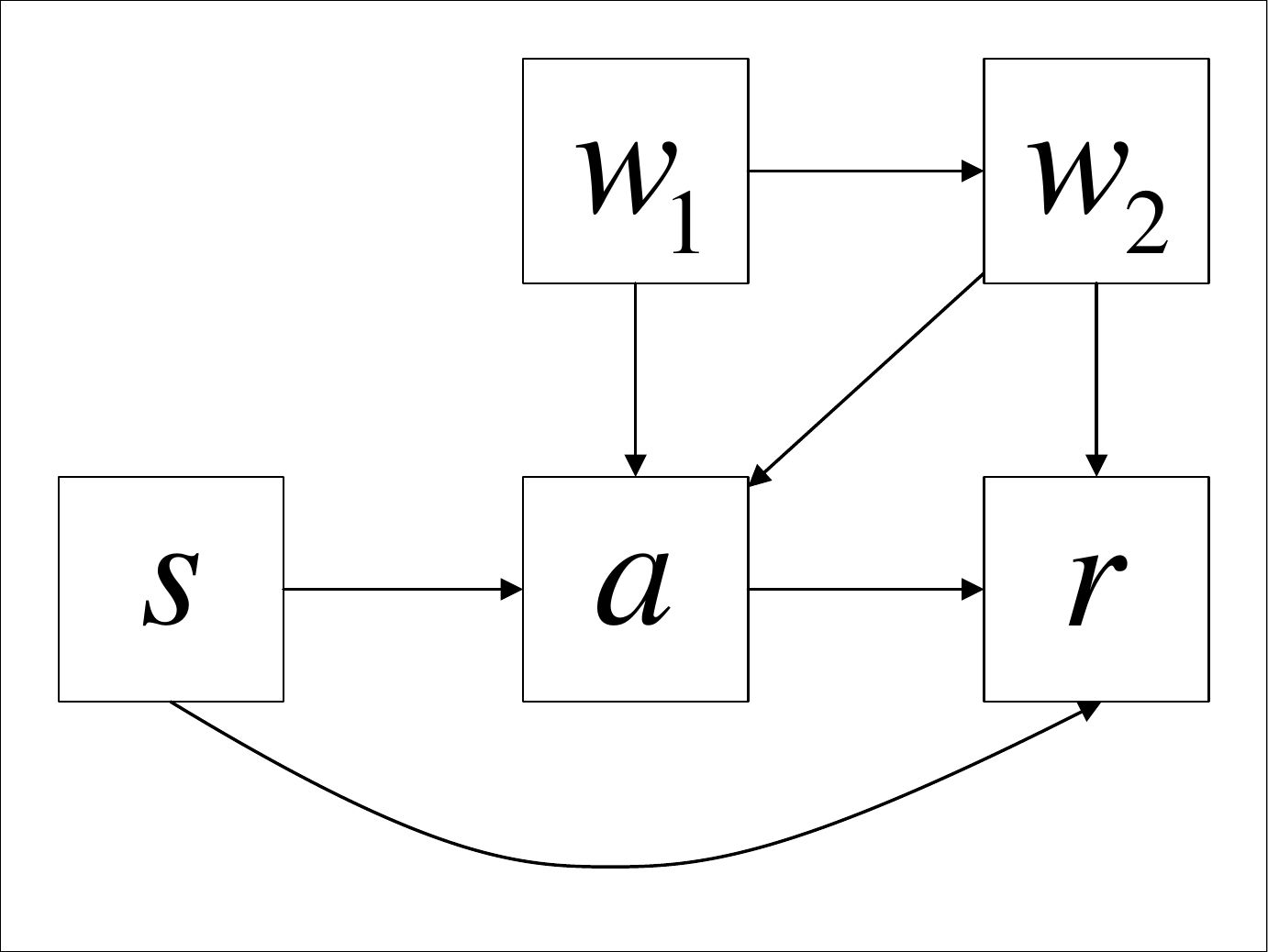}
        
    }
    \vskip -0.05in
    \caption{\textbf{Left}: $u=\left\{w_{1},w_{4}\right\}$ satisfies the Assumption~\ref{assumption1}, which means that $u{\cup}s$ satisfies the backdoor criterion. \textbf{Right}: $u=\left\{w_{2}\right\}$ satisfies the Assumption~\ref{assumption1}, which means that $u{\cup}s$ satisfies the backdoor criterion.}
    \label{figure:uofw}
\end{figure}

\subsection{Algorithms for Partially Observed Confounders}
\label{appendix:algorithms_for_partially_observed_confounder}
Under the definitions of the CMDP and SCMs in this section, two new deconfounding methods are proposed for the new  RL tasks where the confounders in the offline data are partially observed.
As shown in Figure~\ref{figure:dataflow_partially}, the new RL tasks are similar to the RL tasks where the confounders are unobserved in the offline data. Therefore, we only describe the differences between these tasks as follows. First, the intermediate state does not exist in the data generating process anymore. Second, the confounders are partially observed in the offline data, i.e., $o_2=\left\{s,u\right\}$ where $u$ satisfies Assumption~\ref{assumption1}.

\begin{figure}[htbp]
    \centering
    \includegraphics[width=239.39438pt]{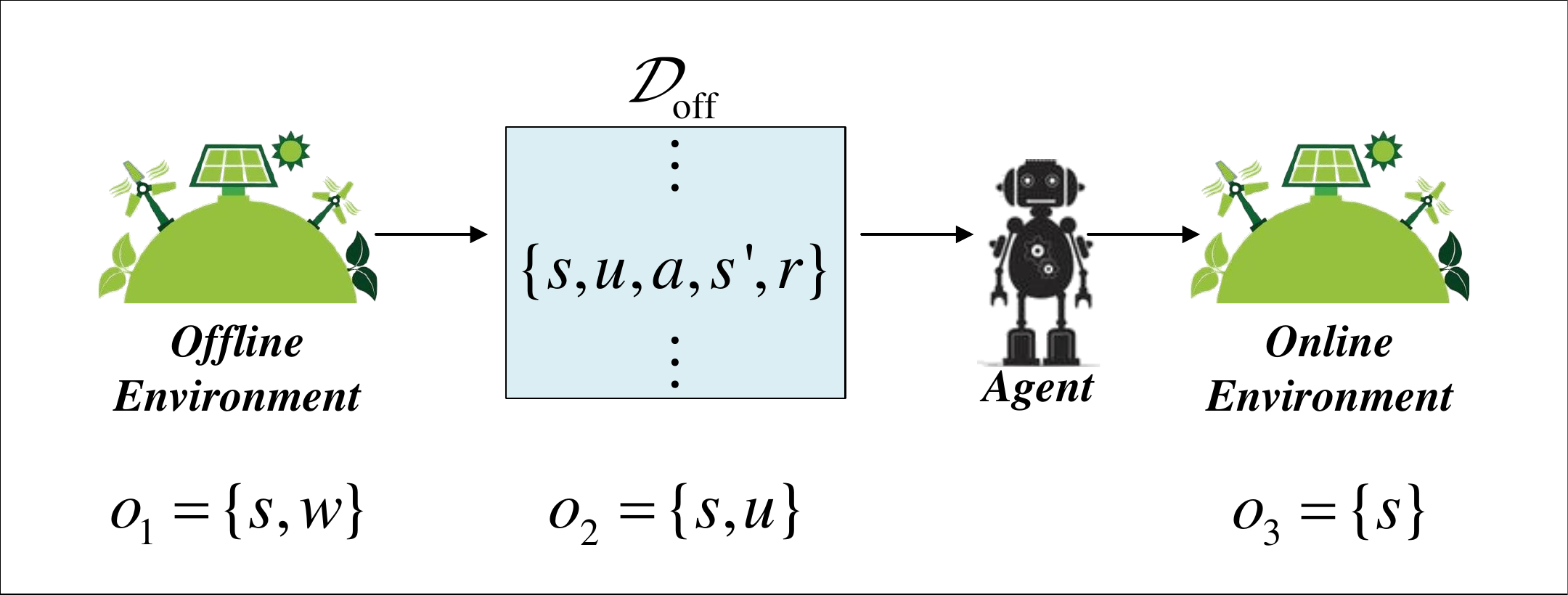}
    \caption{Diagram of the data flow framework, where the confounders in the offline data are partially observed.}
    \label{figure:dataflow_partially}
\end{figure}

\subsubsection{Reweighting Method}

Similar to Section~\ref{section:alg_for_unobserved}, we incorporate an importance sampling ratio into the loss function based on importance sampling in the reweighting method.  According to Proposition~\ref{proposition:cmdp1_loss2_holds}, the weighted loss function computed with offline data is unbiased, and the agent with this loss function can understand the environment correctly, while the original loss function as shown in Equation~\ref{eqn:loss1_cmdp1} is biased.
\begin{equation}
    \label{eqn:loss1_cmdp1}
    \begin{split}
    L_1\left(\phi,\mathcal{D}_{\offText}\right)&\triangleq\mathbb{E}_{s,a,s',r\sim{\mathcal{D}_{\offText}}}\left[f_\phi+h_\phi\right]\\
    &=\mathbb{E}_{s,a\sim{\mathcal{D}_{\offText}}}\left[\mathbb{E}_{s',r\sim{\hat{\PText}\left(\cdot,\cdot|s,a\right)}}\left[f_{\phi}+h_\phi\right]\right]\\
    \end{split}
\end{equation}

\begin{proposition}
    \label{proposition:cmdp1_loss2_holds}
    Under Assumption~\ref{assumption1} and the definitions of the CMDP and SCMs in Appendix~\ref{appendix:background_for_partially_observed_confounder}, it holds that
    \begin{equation}
    \label{eqn:loss2_cmdp1}
        \begin{split}
        L_2\left(\phi,\mathcal{D}_{\offText}\right)&\triangleq\mathbb{E}_{s,a\sim{\mathcal{D}_{\offText}}}\left[\mathbb{E}_{s',r\sim{\bar{\PText}\left(\cdot,\cdot|s,a\right)}}\left[f_\phi+h_\phi\right]\right]\\
        &=\mathbb{E}_{s,a\sim{\mathcal{D}_{\offText}}}\left[\mathbb{E}_{s',r,u\sim{\hat{\PText}\left(\cdot,\cdot,\cdot|s,a\right)}}\left[d_2\left(u,s,a\right)f_\phi+h_\phi\right]\right]\\
        &=\mathbb{E}_{s,a,s',r,u\sim{\mathcal{D}_{\offText}}}\left[d_2\left(u,s,a\right)f_\phi+h_\phi\right],\\
        \end{split}
    \end{equation}
    where $d_2\left(u,s,a\right)$ is defined as follows:
    \begin{equation}
    d_2\left(u,s,a\right)=\frac{\hat{\PText}\left(u|s\right)}{\hat{\PText}\left(u|s,a\right)}.
    \end{equation}
\end{proposition}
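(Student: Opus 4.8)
The plan is to mirror the proof of Proposition~\ref{proposition:1}, replacing the frontdoor-style argument through the intermediate state with a backdoor adjustment through the observed confounder subset $u$. First I would rewrite the inner expectation in the definition of $L_2\left(\phi,\mathcal{D}_{\offText}\right)$, which is taken with respect to the online dynamics $\bar{\PText}\left(\cdot,\cdot|s,a\right)$, as an expectation with respect to the offline joint $\hat{\PText}\left(s',r,u|s,a\right)$ by inserting the importance weight $\bar{\PText}\left(s',r,u|s,a\right)/\hat{\PText}\left(s',r,u|s,a\right)$. Since $h_\phi$ depends only on $(s,a)$ by Assumption~\ref{assumption:drlloss}, it is unaffected by this change of measure and is carried along unweighted; only $f_\phi$ picks up the ratio. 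The positivity assumption stated in Appendix~\ref{appendix:background_for_partially_observed_confounder} guarantees that this ratio is well defined.

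Next I would simplify the ratio using do-calculus. Because the online SCM is obtained from the offline SCM by intervening on $a$, we have $\bar{\PText}\left(s',r,u|s,a\right)=\hat{\PText}\left(s',r,u|s,do\left(a\right)\right)$. Factoring both numerator and denominator as $\hat{\PText}\left(s',r|u,s,\cdot\right)\hat{\PText}\left(u|s,\cdot\right)$, I would apply two reductions. First, since $u\cup s$ contains no descendant of $a$, intervening on $a$ cannot change the distribution of $u$ given $s$, so $\hat{\PText}\left(u|s,do\left(a\right)\right)=\hat{\PText}\left(u|s\right)$ (Rule~3 of do-calculus). Second, since $u\cup s$ blocks every path between $a$ and $s'\cup r$ that carries an arrow into $a$ — the backdoor condition of Assumption~\ref{assumption1} — the action/observation exchange of Rule~2 gives $\hat{\PText}\left(s',r|u,s,do\left(a\right)\right)=\hat{\PText}\left(s',r|u,s,a\right)$. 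Substituting these back, the $\hat{\PText}\left(s',r|u,s,a\right)$ factors cancel and the weight collapses to $d_2\left(u,s,a\right)=\hat{\PText}\left(u|s\right)/\hat{\PText}\left(u|s,a\right)$.

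Finally I would collapse the nested expectation $\mathbb{E}_{s,a\sim\mathcal{D}_{\offText}}\!\left[\mathbb{E}_{s',r,u\sim\hat{\PText}\left(\cdot,\cdot,\cdot|s,a\right)}\left[\cdot\right]\right]$ into the single expectation $\mathbb{E}_{s,a,s',r,u\sim\mathcal{D}_{\offText}}\left[\cdot\right]$, using that the offline dataset is generated by the offline SCM so that $(s,a,s',r,u)$ jointly carries the law $\hat{\PText}$. I expect the only delicate point to be the careful verification that Assumption~\ref{assumption1} supplies exactly the d-separation statements needed to invoke Rules~2 and~3 of do-calculus on the offline SCM graph of Figure~\ref{figure:scms}: checking that ``$u\cup s$ has no descendant of $a$'' licenses the Rule~3 step, and that ``$u\cup s$ blocks all backdoor paths into $a$'' licenses the Rule~2 step. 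The remaining manipulations are the same bookkeeping as in the proof of Proposition~\ref{proposition:1}.
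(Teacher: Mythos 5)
Your proposal is correct and follows essentially the same route as the paper's proof in Appendix~A: an importance-sampling change of measure from $\bar{\PText}$ to $\hat{\PText}\left(s',r,u|s,a\right)$ (with $h_\phi$ unaffected), followed by factoring the ratio and applying do-calculus so that $\hat{\PText}\left(s',r|u,s,do\left(a\right)\right)=\hat{\PText}\left(s',r|u,s,a\right)$ and $\hat{\PText}\left(u|s,do\left(a\right)\right)=\hat{\PText}\left(u|s\right)$, which collapses the weight to $d_2\left(u,s,a\right)$. Your explicit appeal to Rules~2 and~3 of do-calculus and to the positivity assumption simply makes precise the steps the paper states more tersely.
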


\begin{proof}
See Appendix~\ref{appendix:partially observed_reweight_loss_proof}.
\end{proof}

Similar to Section~\ref{section:alg_for_unobserved}, 
the new reweighting method can also be combined with existing model-free offline RL algorithms provided that Assumption~\ref{assumption:drlloss} and Assumption~\ref{assumption1} hold. 

\subsubsection{Resampling Method}
Similar to Section~\ref{section:alg_for_unobserved}, the resampling method extracts 5-tuples from the offline data with the unequal probabilities. However, due to the difference between SCMs, the probability of extracting elements from the offline data differs from Section~\ref{section:alg_for_unobserved}.
More specifically, the loss function for the offline RL algorithms with this resampling method is defined as follows:
\begin{equation}
    \begin{split}
    &L_3\left(\phi,\mathcal{D}_{\offText}\right)\triangleq\mathbb{E}_{I{\sim}p_2}\left[f_\phi\left(s_{I},a_{I},s_{I}',r_{I}\right)+h_\phi\left(s_{I},a_{I}\right)\right],\\
    \end{split}
\end{equation}
where $p_2\left(I=i\right)={d_{2,i}}/{\sum_{j=1}^{N}d_{2,j}}$, $d_{2,i}$ is a shorthand for ${\hat{\PText}\left(u_{i}|s_{i}\right)}/{\hat{\PText}\left(u_{i}|s_{i},a_{i}\right)}$, and $s_{i},a_{i},r_{i},s_{i}',u_{i}$ denotes the state, action, reward, next state and observable set of the confounder in the ith 5-tuple of the offline dataset ${D}_{off}$ respectively.
In this case, Proposition~\ref{proposition:l2eql3_cmdp1} holds.

\begin{proposition}
    \label{proposition:l2eql3_cmdp1}
    Under Assumption~\ref{assumption1} and the definitions of the CMDP and SCMs in Appendix~\ref{appendix:background_for_partially_observed_confounder}, the loss function of the resampling method is asymptotically equal to that of the reweighting method as in Equation~\ref{eqn:loss_resample_equal_cmdp1} provided that the dataset is large enough.
    \begin{equation}
        \label{eqn:loss_resample_equal_cmdp1}
        \begin{split}
        &\lim_{N\to\infty}\left(L_3\left(\phi,\mathcal{D}_{\offText}\right)-L_2\left(\phi,\mathcal{D}_{\offText}\right)\right)=0\\
        \end{split}
    \end{equation}
\end{proposition}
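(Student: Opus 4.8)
The plan is to follow the three-move template of the proof of Proposition~\ref{proposition:l2eql3__cmdp2} in Appendix~\ref{appendix:unobserved_resample_loss_proof}, with the frontdoor-type weight $d_1(\tau)$ replaced by the backdoor-type weight $d_2(u,s,a)=\hat{\PText}(u|s)/\hat{\PText}(u|s,a)$. First, starting from the definition of $L_2(\phi,\mathcal{D}_{\offText})$, I would repeat the opening step of the proof of Proposition~\ref{proposition:cmdp1_loss2_holds}, but carry the importance weight in front of the \emph{entire} per-sample loss $f_\phi+h_\phi$ rather than only $f_\phi$; since $h_\phi$ depends on neither $s'$, $r$, nor $u$, this is harmless. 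The change of measure from $\bar{\PText}(\cdot,\cdot,\cdot|s,a)$ to $\hat{\PText}(\cdot,\cdot,\cdot|s,a)$, together with the do-calculus identities $\hat{\PText}(s',r|u,s,do(a))=\hat{\PText}(s',r|u,s,a)$ and $\hat{\PText}(u|s,do(a))=\hat{\PText}(u|s)$ (valid because, under Assumption~\ref{assumption1}, $u\cup s$ is an admissible backdoor set), then gives
\begin{equation*}
L_2(\phi,\mathcal{D}_{\offText})=\mathbb{E}_{s,a,s',r,u\sim\mathcal{D}_{\offText}}\bigl[d_2(u,s,a)\,(f_\phi+h_\phi)\bigr].
\end{equation*}

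Next, I would apply the reparameterization trick to rewrite this as $\mathbb{E}_{I\sim\mathrm{DiscreteU}(1,N)}\bigl[d_{2,I}\,(f_\phi(s_I,a_I,s_I',r_I)+h_\phi(s_I,a_I))\bigr]$, and then perform an importance-sampling change of measure to the resampling law $p_2$, which introduces the factor $\tfrac{1/N}{p_2(I)}$. Since $p_2(I=i)=d_{2,i}/\sum_{j=1}^{N}d_{2,j}$, the product $\tfrac{1/N}{p_2(I)}\,d_{2,I}$ collapses to the index-independent scalar $\tfrac{1}{N}\sum_{j=1}^{N}d_{2,j}$, so that $L_2(\phi,\mathcal{D}_{\offText})=L_3(\phi,\mathcal{D}_{\offText})\cdot\tfrac{1}{N}\sum_{j=1}^{N}d_{2,j}$. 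It then remains to note that this scalar tends to $1$: it equals $\mathbb{E}_{s,a,s',r,u\sim\mathcal{D}_{\offText}}[d_2(u,s,a)]$, which by the identity $d_2(u,s,a)=\bar{\PText}(s',r,u|s,a)/\hat{\PText}(s',r,u|s,a)$ established in the course of proving Proposition~\ref{proposition:cmdp1_loss2_holds} equals $\mathbb{E}_{s,a}\bigl[\mathbb{E}_{s',r,u\sim\bar{\PText}(\cdot,\cdot,\cdot|s,a)}[1]\bigr]=1$ in the limit. Writing $L_3-L_2=L_3\bigl(1-\tfrac{1}{N}\sum_{j}d_{2,j}\bigr)$ and using, as in the proof of Proposition~\ref{proposition:l2eql3__cmdp2}, that the neural-network loss has a finite limit in $N$, the limit of the product equals the product of the limits, namely $0$, which is the claim.

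The only genuinely nontrivial point is the density-ratio identification in the first move, and this is exactly where Assumption~\ref{assumption1} is used: it guarantees that $u\cup s$ is a valid adjustment set so that $\bar{\PText}(s',r,u|s,a)/\hat{\PText}(s',r,u|s,a)$ simplifies to $d_2(u,s,a)$, while the positivity assumption is what makes $p_2$ well defined. As this identification is precisely the content of Proposition~\ref{proposition:cmdp1_loss2_holds}, I would simply invoke that proposition; the remaining steps — the reparameterization, the importance reweighting that yields the constant $\tfrac{1}{N}\sum_{j}d_{2,j}$, and the law-of-large-numbers limit — are verbatim parallel to Appendix~\ref{appendix:unobserved_resample_loss_proof} and require no new estimates.
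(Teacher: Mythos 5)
Your proposal is correct and follows essentially the same route as the paper's own proof in Appendix~\ref{appendix:partially observed_resample_loss_proof}: weighting the full per-sample loss $f_\phi+h_\phi$ by $d_2(u,s,a)$ via the backdoor do-calculus identities, reparameterizing to a uniform index, switching the measure to $p_2$ so the weight collapses to the scalar $\frac{1}{N}\sum_j d_{2,j}$, and showing this scalar tends to $1$ so that $L_3-L_2=L_3\bigl(1-\frac{1}{N}\sum_j d_{2,j}\bigr)\to 0$ under the finiteness of the loss limit. No gaps to report.
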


\begin{proof}
See Appendix~\ref{appendix:partially observed_resample_loss_proof}.
\end{proof}

According to Proposition~\ref{proposition:l2eql3_cmdp1}, the new resampling method can also deconfound the offline data if the offline dataset is large enough. Additionally, the resampling method can also be applied to existing model-free offline RL algorithms provided that Assumption~\ref{assumption:drlloss}
and Assumption~\ref{assumption1} hold. 

\section{Implementation Details}
\label{implementation_details}

Both the network architectures and optimizers are in the default settings of d3rlpy.
In the process of estimating $d_1\left(\tau\right)$ and $d_2\left(u,s,a\right)$, we need to add noises to all discrete random variables. 
Specifically, we choose a noise $\epsilon_{\theta,v}=\epsilon+\theta\left(B_v-0.5\right)$ where $\epsilon$ subjects to the uniform distribution $\mathcal{U}\left(-0.5,0.5\right)$, $B_v$ subjects to the beta distribution ${\BetaText\left(v,v\right)}$, $\theta=0.5$ and $v=5$.
The task settings of Figure~\ref{fig:unobserved_plot},\ref{fig:partially_observed_plot},\ref{fig:pendulum_v3_ablation_study.pdf},\ref{fig:pendulum_state_v3_ablation_study.pdf} are described in Table~\ref{table:env_para_in_results_section}.
Note that 1 epoch in Figure~\ref{fig:unobserved_plot},\ref{fig:partially_observed_plot},10-15 represents 3000 learning steps.
Other task settings and details of the four benchmark tasks are described in Appendix~\ref{additional_results}.

\begin{table}[htbp]
\vskip 0.15in
    \begin{center}
        \begin{small}
            \begin{sc}
                \begin{tabular}{l|r}
                    \toprule
                    \multicolumn{2}{l}{EmotionalPendulum}              \\
                    \midrule
                            $p_{fail}$&0.2\\
                        $v_T$&1\\
                        $I_{p,1}$&0.7\\
                         $odds_1$&4\\
                        \midrule
                     \multicolumn{2}{l}{WindyPendulum}           \\
                        \midrule
                        $p_{fail}$&0.1\\
                        $I_{p,2}$&0.9\\
                        $odds_2$&2.5\\
                        \midrule
                    \multicolumn{2}{l}{EmotionalPendulum*}         \\
                    \midrule
                    $v_T$&1       \\
                    $I_{p,1}$&0.7\\
                    $odds_1$&4\\
                        \midrule
                    \multicolumn{2}{l}{WindyPendulum*}              \\
                        \midrule
                     $I_{p,2}$&0.9     \\
                    $odds_2$&2.5\\
                    \bottomrule
                \end{tabular}
            \end{sc}
        \end{small}
    \end{center}
\vskip -0.1in
\caption{The task settings of Figure~\ref{fig:unobserved_plot},\ref{fig:partially_observed_plot},\ref{fig:pendulum_v3_ablation_study.pdf},\ref{fig:pendulum_state_v3_ablation_study.pdf}.}
\label{table:env_para_in_results_section}
\end{table}

\section{Simplification of the Causal Models}
\label{appendix:Simplification_of_the_Causal_Models}

In this section, CMDPs and SCMs for two specific cases of unobserved confounders are defined, and two corresponding simplified forms of $d_{1}\left(\tau\right)$ are derived.

\subsection{The Confounder Exists Only between the Action and Reward}

The CMDP for the first specific case of unobserved confounders can be denoted by a nine-tuple $\left\langle\mathcal{S},\mathcal{M},\mathcal{A},\mathcal{W},\mathcal{R},P_1,P_2,P_3,f_s,\mu_0\right\rangle$, where $\mathcal{S},\mathcal{M},\mathcal{A},\mathcal{W},\mathcal{R},P_2,P_3,\mu_0$ denote the same meanings as they have in the general CMDP for unobserved confounders in Section~\ref{background}, $P_1(r|s,w,m,s')$ denotes the reward distribution, and $f_s(s,m,\delta_s)$ denotes the next state transition function, in which $\delta_s$ denotes the error term in $f_s$.

The corresponding SCM in the offline setting can be defined as a four-tuple $\left\langle U, V, F, P_e\right\rangle$, where $U,V,P_e$ denote the same meanings as they have in Section~\ref{background}, and $F$ includes the reward distribution $P_1(r|s,w,m,s')$, the confounder transition distribution $P_2\left(w|s\right)$, the intermediate state transition distribution $P_3\left(m|s,a\right)$, the next state transition function $f_s(s,m,\delta_s)$, and the behavior policy $\pi_b\left(a|s,w\right)$.
The positivity assumption here is the same as the positivity assumption in Section~\ref{background}.

The corresponding SCM in the online setting can be defined as another four-tuple $\left(U, V, F, P_e\right)$, where $F$ includes the reward distribution $P_1(r|s,w,m,s')$, the confounder transition distribution $P_2\left(w|s\right)$, the intermediate state transition distribution $P_3\left(m|s,a\right)$, the next state transition function $f_s(s,m,\delta_s)$, and the policy $\pi\left(a|s\right)$.

Based on the CMDP and the corresponding SCMs, we can derive Proposition~\ref{proposition:sub1} and Proposition~\ref{proposition:l2eql3__cmdp2_sub1}.

\begin{proposition}
    \label{proposition:sub1}
    Under the definitions of the CMDP and SCMs in this subsection, it holds that
    \begin{equation}
        \label{eqn:loss2_sub1}
        \begin{split}
        L_2\left(\phi,\mathcal{D}_{\offText}\right)&\triangleq\mathbb{E}_{s,a\sim{\mathcal{D}_{\offText}}}\left[\mathbb{E}_{s',r\sim{\bar{\PText}\left(\cdot,\cdot|s,a\right)}}\left[f_\phi(s,a,s',r)+h_\phi(s,a)\right]\right]\\
        &=\mathbb{E}_{s,a\sim{\mathcal{D}_{\offText}},\delta_s\sim P_e}\left[\mathbb{E}_{r,m\sim{\hat{\PText}\left(\cdot,\cdot|s,a\right)}}\left[\bar{d}_1\left(\tau\right)f_\phi(s,a,f_s(s,m,\delta_s),r)+h_\phi(s,a)\right]\right]\\
        &=\mathbb{E}_{s,a,s',r,m\sim{\mathcal{D}_{\offText}}}\left[\bar{d}_1\left(\tau\right)f_\phi(s,a,s',r)+h_\phi(s,a)\right],\\
        \end{split}
    \end{equation}
    where $\bar{d}_1\left(\tau\right)$ is defined as follows:
    \begin{equation}
        \label{eqn:label_definition_sub1}
        \begin{split}
        \bar{d}_1\left(\tau\right)=\dfrac{\sum\limits_{a'}\hat{\PText}\left(r|m,a',s\right)\hat{\PText}\left(a'|s\right)}{\hat{\PText}\left(r|m,a,s\right)}.\\
        \end{split}
    \end{equation}
\end{proposition}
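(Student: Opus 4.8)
The plan is to follow the same two moves as in the proof of Proposition~\ref{proposition:1} in Appendix~\ref{appendix:unobserved_reweight_loss_proof} — importance sampling to move the expectation onto the offline distribution, then do-calculus to simplify the resulting density ratio — and to insert one extra step, a reparameterization of $s'$ through the structural map $f_s$, which is what exploits the feature distinguishing this subsection: since the confounder sits only between $a$ and $r$, the next state is generated as $s'=f_s(s,m,\delta_s)$ with inputs $s$, $m$ and the exogenous error $\delta_s$ only, and neither $w$ nor $a$ feeds into $s'$.

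Concretely, I would first rewrite $L_2(\phi,\mathcal{D}_{\offText})$ from its definitional form and, by importance sampling over the enlarged tuple $(s',r,m)$ exactly as in the first display of the proof of Proposition~\ref{proposition:1}, obtain
$$L_2(\phi,\mathcal{D}_{\offText})=\mathbb{E}_{s,a,s',r,m\sim\mathcal{D}_{\offText}}\left[\frac{\bar{\PText}(s',r,m|s,a)}{\hat{\PText}(s',r,m|s,a)}f_\phi+h_\phi\right].$$
Next I would evaluate the ratio with the same chain of do-calculus identities used for Proposition~\ref{proposition:1}: replace $\bar{\PText}$ by $\hat{\PText}(\cdot|s,do(a))$, factor out the mediator $m$ via $\hat{\PText}(m|s,do(a))=\hat{\PText}(m|s,a)$, and expand the remaining term with the front-door-style identity $\hat{\PText}(s',r|s,m,do(a))=\hat{\PText}(s',r|s,do(m))=\sum_{a'}\hat{\PText}(s',r|m,a',s)\hat{\PText}(a'|s)$. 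The new ingredient is that, because $s'$ depends only on $(s,m,\delta_s)$ with $\delta_s$ exogenous, the next-state factor $\hat{\PText}(s'|s,m)$ separates out of $\hat{\PText}(s',r|m,a',s)$ and of $\hat{\PText}(s',r|m,a,s)$ and cancels between numerator and denominator, leaving only the reward conditionals; this collapses the ratio to $\bar{d}_1(\tau)=\bigl(\sum_{a'}\hat{\PText}(r|m,a',s)\hat{\PText}(a'|s)\bigr)/\hat{\PText}(r|m,a,s)$ and yields the third line of the statement.

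Finally, to recover the middle line I would reparameterize $s'$: conditionally on $(s,m)$ the next state is the pushforward of $P_e$ through $f_s(s,m,\cdot)$, so, as in the reparameterization steps of the proof of Proposition~\ref{proposition:l2eql3__cmdp2}, I can replace the data-drawn $s'$ by $f_s(s,m,\delta_s)$ with an independent $\delta_s\sim P_e$ without changing the value of $\mathbb{E}[f_\phi(s,a,s',r)]$, while $\bar{d}_1$, which does not involve $s'$, and the draw of $(r,m)$ from $\hat{\PText}(\cdot,\cdot|s,a)$ remain as they are; this turns $\mathbb{E}_{s,a,s',r,m\sim\mathcal{D}_{\offText}}[\,\cdot\,]$ into $\mathbb{E}_{s,a\sim\mathcal{D}_{\offText},\,\delta_s\sim P_e}[\,\mathbb{E}_{r,m\sim\hat{\PText}(\cdot,\cdot|s,a)}[\,\cdot\,]\,]$.

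I expect the delicate point to be the $s'$-handling in the middle step: one must verify, by careful d-separation arguments in the offline SCM of this subsection, that the next-state factor appearing in the numerator and denominator of the do-calculus expression is genuinely the same conditional $\hat{\PText}(s'|s,m)$ — so that it cancels and the surviving reward conditionals can be written without $s'$ — and that the subsequent reparameterization of $s'$ is therefore legitimate; this is exactly the place where the assumption that the confounder lies only between $a$ and $r$ (i.e.\ that $s'$ has no incoming arrow from $w$) is indispensable. The remaining algebra is routine and parallels the proofs already given in Appendix~A.
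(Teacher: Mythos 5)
Your route is the one the paper intends: the paper omits this proof as ``similar to Proposition~1'', and your first two moves (importance sampling onto the offline joint over $(s',r,m)$, then the do-calculus chain $\hat{\PText}\left(m|s,do\left(a\right)\right)=\hat{\PText}\left(m|s,a\right)$ and $\hat{\PText}\left(s',r|m,s,do\left(a\right)\right)=\sum_{a'}\hat{\PText}\left(s',r|m,a',s\right)\hat{\PText}\left(a'|s\right)$) are exactly those of the proof in Appendix~A.2; the additional cancellation step is indeed what turns $d_1$ into $\bar{d}_1$, and the reparameterization of $s'$ through $f_s$ is the right way to obtain the middle line.

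The one point to tighten is the justification of that cancellation, which is precisely the ``delicate point'' you flag but attribute to the wrong structural feature. Writing $\hat{\PText}\left(s',r|m,a,s\right)=\hat{\PText}\left(s'|s,m\right)\hat{\PText}\left(r|m,a,s\right)$ (and likewise for each $a'$ in the numerator) is the conditional independence $s'\perp r\mid\left(s,m,a\right)$, and the absence of an arrow $w\to s'$ together with exogeneity of $\delta_s$ does not deliver it: the CMDP of this subsection writes the reward distribution as $P_1\left(r|s,w,m,s'\right)$, and if the reward genuinely depends on $s'$ then $\hat{\PText}\left(s',r|m,a,s\right)=\hat{\PText}\left(s'|s,m\right)\hat{\PText}\left(r|s',m,a,s\right)$, the surviving reward conditional retains $s'$, the ratio does not collapse pointwise to $\bar{d}_1$, and moreover your reparameterized middle line (which draws a fresh $\delta_s$ and hence decouples $s'$ from $r$ given $(s,a,m)$) no longer reproduces the interventional joint $\bar{\PText}\left(s',r|s,a\right)$ appearing in the definitional first line. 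What must be invoked, in addition to what you state, is that in this simplified model the reward has no direct dependence on the next state --- i.e.\ $r$ is generated from $(s,w,m)$ and exogenous noise only, as in the intended EmotionalPendulum graph --- so that $s'\perp r\mid\left(s,m,a\right)$ holds both observationally and under $do\left(m\right)$. Once that independence is stated explicitly, the factorization, the collapse to $\bar{d}_1$, and the replacement of the data-drawn $s'$ by $f_s\left(s,m,\delta_s\right)$ with $\delta_s\sim P_e$ all go through, and the remainder of your argument coincides with the omitted proof.
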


\begin{proof}
The proof of this proposition is similar to the proof of Proposition~\ref{proposition:1} in Appendix A, and thus is omitted.
\end{proof}

\begin{proposition}
    \label{proposition:l2eql3__cmdp2_sub1}
    Under the definitions of the CMDP and SCMs in this subsection, the loss function of the resampling method
    (i.e., Equation~\ref{eqn:loss_resample_unobserved_sub1})
    is asymptotically equal to that of the reweighting method as in Equation~\ref{eqn:loss_resample_equal_cmdp__2_sub1} provided that the dataset is large enough.
    \begin{equation}
        \label{eqn:loss_resample_unobserved_sub1}
        \begin{split}
        &L_3\left(\phi,\mathcal{D}_{\offText}\right)\triangleq\mathbb{E}_{I{\sim}\bar{p}_1}\left[f_\phi\left(s_{I},a_{I},s_{I}',r_{I}\right)+h_\phi\left(s_{I},a_{I}\right)\right],\\
        \end{split}
    \end{equation}
    where $\bar{p}_1\left(I=i\right)=\bar{d}_{1,i}/\sum_{j=1}^{N}\bar{d}_{1,j}$, and $\bar{d}_{1,i}$ is a shorthand for $\bar{d}_1\left(s_{i},m_{i},a_{i},s_{i}',r_{i}\right)$.
    \begin{equation}
        \label{eqn:loss_resample_equal_cmdp__2_sub1}
        \begin{split}
        &\lim_{N\to\infty}\left(L_3\left(\phi,\mathcal{D}_{\offText}\right)-L_2\left(\phi,\mathcal{D}_{\offText}\right)\right)=0\\
        \end{split}
    \end{equation}
\end{proposition}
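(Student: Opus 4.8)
The plan is to reproduce, almost line for line, the proof of Proposition~\ref{proposition:l2eql3__cmdp2} in Appendix~\ref{appendix:unobserved_resample_loss_proof}, with $\bar d_1(\tau)$ in place of $d_1(\tau)$ and with the deterministic next-state map $s'=f_s(s,m,\delta_s)$ tracked along the way. The argument chains three moves: rewrite $L_2(\phi,\mathcal{D}_{\offText})$ so that $\bar d_1(\tau)$ multiplies \emph{both} $f_\phi$ and $h_\phi$; convert the expectation over records of $\mathcal{D}_{\offText}$ into an expectation over a random index, first uniform and then (via importance sampling) distributed as $\bar p_1$; and show the normalizing constant that appears tends to $1$ as $N\to\infty$.

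First I would take the importance-sampling identity established inside the proof of Proposition~\ref{proposition:sub1}, but instead of pulling $h_\phi(s,a)$ out of the inner expectation I would keep it inside the $\bar d_1$-weighted expectation. This is legitimate because $h_\phi(s,a)$ does not depend on $(m,s',r,\delta_s)$ and $\bar d_1(\tau)$ is exactly the importance weight $\bar{\PText}(r,m|s,a)/\hat{\PText}(r,m|s,a)$ — the $s'$-coordinate being the same deterministic push-forward of $\delta_s\sim P_e$ in the offline and online SCMs, so it cancels from the ratio — and hence integrates to $1$ over $(m,r)$ given $(s,a)$. This yields
\[
L_2(\phi,\mathcal{D}_{\offText})=\mathbb{E}_{s,a,s',r,m\sim\mathcal{D}_{\offText}}\bigl[\bar d_1(\tau)\,(f_\phi+h_\phi)\bigr].
\]

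Next, by the reparameterization trick, sampling a $5$-tuple uniformly from the $N$ records of $\mathcal{D}_{\offText}$ equals drawing $I\sim DiscreteU(1,N)$ and reading off the $I$-th record, so $L_2=\mathbb{E}_{I\sim DiscreteU(1,N)}[\bar d_{1,I}(f_{\phi,I}+h_{\phi,I})]$. Changing the index distribution to $\bar p_1$ by importance sampling inserts the factor $(1/N)/\bar p_1(I)$; substituting $\bar p_1(I=i)=\bar d_{1,i}/\sum_{j=1}^N\bar d_{1,j}$ cancels the numerator $\bar d_{1,I}$ and leaves
\[
L_2(\phi,\mathcal{D}_{\offText})=\Bigl(\tfrac1N\sum_{j=1}^N\bar d_{1,j}\Bigr)\,\mathbb{E}_{I\sim\bar p_1}\bigl[f_{\phi,I}+h_{\phi,I}\bigr]=\Bigl(\tfrac1N\sum_{j=1}^N\bar d_{1,j}\Bigr)\,L_3(\phi,\mathcal{D}_{\offText}),
\]
with $L_3$ as in Equation~\ref{eqn:loss_resample_unobserved_sub1}, so $L_3(\phi,\mathcal{D}_{\offText})-L_2(\phi,\mathcal{D}_{\offText})=L_3(\phi,\mathcal{D}_{\offText})\bigl(1-\tfrac1N\sum_{j=1}^N\bar d_{1,j}\bigr)$. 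Reading $\tfrac1N\sum_{j=1}^N\bar d_{1,j}$ as $\mathbb{E}_{I\sim DiscreteU(1,N)}[\bar d_{1,I}]$, it converges as $N\to\infty$ to $\mathbb{E}_{s,a,s',r,m\sim\mathcal{D}_{\offText}}[\bar d_1(\tau)]=\mathbb{E}_{s,a\sim\mathcal{D}_{\offText}}\bigl[\mathbb{E}_{r,m\sim\hat{\PText}(\cdot,\cdot|s,a),\,\delta_s\sim P_e}[\bar d_1(\tau)]\bigr]=\mathbb{E}_{s,a\sim\mathcal{D}_{\offText}}[1]=1$, again because $\bar d_1(\tau)$ is an importance weight; and since the limit of the neural-network loss $L_3(\phi,\mathcal{D}_{\offText})$ is finite, the product tends to $0$, which is Equation~\ref{eqn:loss_resample_equal_cmdp__2_sub1}.

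The one step that is not a direct transcription of the earlier proof is the first one: I must make sure that inserting the deterministic push-forward $s'=f_s(s,m,\delta_s)$ does not spoil the ``weight integrates to $1$'' property on which every subsequent step rests. I expect this to be the main, though fairly mild, obstacle; it is in fact already discharged inside the proof of Proposition~\ref{proposition:sub1}, where $\bar d_1(\tau)$ is shown to arise precisely as $\bar{\PText}(r,m|s,a)/\hat{\PText}(r,m|s,a)$, so no genuinely new difficulty appears and the rest of the argument is identical to Appendix~\ref{appendix:unobserved_resample_loss_proof}.
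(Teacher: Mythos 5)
Your proposal is correct and follows exactly the route the paper intends: the paper omits this proof, stating it is obtained from the proof of Proposition~\ref{proposition:l2eql3__cmdp2} in Appendix~\ref{appendix:unobserved_resample_loss_proof}, and your argument reproduces that chain verbatim with $\bar{d}_1$ in place of $d_1$ (fold $h_\phi$ into the weighted expectation, reparameterize to a uniform index, switch to $\bar{p}_1$ by importance sampling, and show the normalizer $\frac{1}{N}\sum_{j=1}^{N}\bar{d}_{1,j}\to 1$ with $\lim L_3$ finite). The only model-specific fact you need --- and correctly invoke --- is that $\bar{d}_1\left(\tau\right)$ is the conditional ratio $\bar{\PText}\left(r,m|s,a\right)/\hat{\PText}\left(r,m|s,a\right)$ and therefore has offline conditional expectation $1$ given $\left(s,a\right)$.
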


\begin{proof}
The proof of this proposition is similar to the proof of Proposition~\ref{proposition:l2eql3__cmdp2} in Appendix A, and thus is omitted.
\end{proof}

\subsection{The Confounder Exists Only between the Action and Next State}
The CMDP for the second specific case of unobserved confounders can be denoted by a nine-tuple $\left\langle\mathcal{S},\mathcal{M},\mathcal{A},\mathcal{W},\mathcal{R},P_1,P_2,P_3,f_r,\mu_0\right\rangle$, where $\mathcal{S},\mathcal{M},\mathcal{A},\mathcal{W},\mathcal{R},P_2,P_3,\mu_0$ denote the same meanings as they have in the general CMDP for unobserved confounders in Section~\ref{background}, $P_1(s'|s,w,m)$ denotes the next state transition distribution, and $f_r(s,m,\delta_r)$ denotes the reward function, in which $\delta_r$ denotes the error term in $f_r$.

The corresponding SCM in the offline setting can be defined as a four-tuple $\left\langle U, V, F, P_e\right\rangle$, where $U,V,P_e$ denote the same meanings as they have in Section~\ref{background}, and $F$ includes the next state transition distribution $P_1(s'|s,w,m)$, the confounder transition distribution $P_2\left(w|s\right)$, the intermediate state transition distribution $P_3\left(m|s,a\right)$, the reward function $f_r(s,m,\delta_r)$, and the behavior policy $\pi_b\left(a|s,w\right)$.
The positivity assumption here is the same as the positivity assumption in Section~\ref{background}.

The corresponding SCM in the online setting can be defined as another four-tuple $\left(U, V, F, P_e\right)$, where $F$ includes the next state transition distribution $P_1(s'|s,w,m)$, the confounder transition distribution $P_2\left(w|s\right)$, the intermediate state transition distribution $P_3\left(m|s,a\right)$, the reward function $f_r(s,m,\delta_r)$, and the policy $\pi\left(a|s\right)$.

Based on the CMDP and the corresponding SCMs, we can derive Proposition~\ref{proposition:sub2} and Proposition~\ref{proposition:l2eql3__cmdp2_sub2}.

\begin{proposition}
    \label{proposition:sub2}
    Under the definitions of the CMDP and SCMs in this subsection, it holds that
    \begin{equation}
        \label{eqn:loss2_sub2}
        \begin{split}
        L_2\left(\phi,\mathcal{D}_{\offText}\right)&\triangleq\mathbb{E}_{s,a\sim{\mathcal{D}_{\offText}}}\left[\mathbb{E}_{s',r\sim{\bar{\PText}\left(\cdot,\cdot|s,a\right)}}\left[f_\phi(s,a,s',r)+h_\phi(s,a)\right]\right]\\
        &=\mathbb{E}_{s,a\sim{\mathcal{D}_{\offText}},\delta_r\sim P_e}\left[\mathbb{E}_{s',m\sim{\hat{\PText}\left(\cdot,\cdot|s,a\right)}}\left[\tilde{d}_1\left(\tau\right)f_\phi(s,a,s',f_r(s,m,\delta_r))+h_\phi(s,a)\right]\right]\\
        &=\mathbb{E}_{s,a,s',r,m\sim{\mathcal{D}_{\offText}}}\left[\tilde{d}_1\left(\tau\right)f_\phi(s,a,s',r)+h_\phi(s,a)\right],\\
        \end{split}
    \end{equation}
    where $\tilde{d}_1\left(\tau\right)$ is defined as follows:
    \begin{equation}
        \label{eqn:label_definition_sub2}
        \begin{split}
        \tilde{d}_1\left(\tau\right)=\dfrac{\sum\limits_{a'}\hat{\PText}\left(s'|m,a',s\right)\hat{\PText}\left(a'|s\right)}{\hat{\PText}\left(s'|m,a,s\right)}.\\
        \end{split}
    \end{equation}
\end{proposition}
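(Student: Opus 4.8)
The plan is to establish Equation~\ref{eqn:loss2_sub2} by following exactly the structure of the proof of Proposition~\ref{proposition:1} in Appendix~A, with the role of the full state-reward dynamics $\hat{\PText}(s',r|m,a,s)$ replaced by the next-state dynamics $\hat{\PText}(s'|m,a,s)$, and with the reward channel handled separately through the deterministic function $f_r$. First I would start from $L_2(\phi,\mathcal{D}_{\offText})$ and apply importance sampling to rewrite the inner expectation over $s',r\sim\bar{\PText}(\cdot,\cdot|s,a)$ as an expectation over $s',m\sim\hat{\PText}(\cdot,\cdot|s,a)$ together with the reward drawn via $r=f_r(s,m,\delta_r)$, $\delta_r\sim P_e$; this is where the intermediate-state $m$ enters. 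Because in this SCM the reward is generated by the deterministic structural function $f_r(s,m,\delta_r)$ with $\delta_r$ an exogenous error independent of $a$, the reward factor is unaffected by the intervention $do(a)$ and by conditioning on $a$, so it drops out of the importance ratio entirely.

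Next I would compute the importance ratio itself. Since the offline and online SCMs differ only in whether $\pi_b(a|s,w)$ or $\pi(a|s)$ generates $a$, and the reward is a deterministic function of $(s,m,\delta_r)$, the ratio $\bar{\PText}(s',m|s,a)/\hat{\PText}(s',m|s,a)$ equals $\hat{\PText}(s',m|s,do(a))/\hat{\PText}(s',m|s,a)$, which factorizes as $\bigl[\hat{\PText}(s'|m,s,do(a))\hat{\PText}(m|s,do(a))\bigr]/\bigl[\hat{\PText}(s'|m,s,a)\hat{\PText}(m|s,a)\bigr]$. By do-calculus, $\hat{\PText}(m|s,do(a))=\hat{\PText}(m|s,a)$ (Rule~2 / the fact that $a$'s only child $m$ has no backdoor path from $a$ given $s$), so those cancel. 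Then $\hat{\PText}(s'|m,s,do(a))=\hat{\PText}(s'|s,do(m),do(a))=\hat{\PText}(s'|s,do(m))=\sum_{a'}\hat{\PText}(s'|m,a',s)\hat{\PText}(a'|s)$, using that $s'\perp a\mid m,s$ in the mutilated graph (so the extra $do(a)$ is irrelevant) and then the frontdoor-style adjustment for $do(m)$ exactly as in the proof of Proposition~\ref{proposition:1}. Substituting gives the ratio $\tilde{d}_1(\tau)=\sum_{a'}\hat{\PText}(s'|m,a',s)\hat{\PText}(a'|s)/\hat{\PText}(s'|m,a,s)$, and plugging this back into the importance-sampled expression yields the three claimed equalities, with the final line obtained by noting that sampling $s,a,s',r,m$ from $\mathcal{D}_{\offText}$ is the same as the nested expectation over $s,a\sim\mathcal{D}_{\offText}$ and $s',m\sim\hat{\PText}$, $r=f_r(s,m,\delta_r)$.

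I expect the main obstacle to be the careful justification of the do-calculus step $\hat{\PText}(s'|m,s,do(a))=\sum_{a'}\hat{\PText}(s'|m,a',s)\hat{\PText}(a'|s)$, specifically verifying that the confounder structure between $a$ and $s'$ in this specialized SCM (where the confounder now acts only on $a$ and $s'$, with the reward split off into $f_r$) still admits the frontdoor adjustment through $m$. Since the paper says the proof is "similar to the proof of Proposition~\ref{proposition:1}," I would simply cite that argument, checking only that removing the $r$ from the $P_1$ factor and routing it through the deterministic $f_r$ does not break the conditional independences used there — which it does not, because $\delta_r$ is exogenous and the graph over $(s,m,a,w,s')$ is isomorphic to the $(s,m,a,w,s',r)$ graph with $r$ marginalized out. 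The resampling counterpart, Proposition~\ref{proposition:l2eql3__cmdp2_sub2}, would then follow verbatim from the proof of Proposition~\ref{proposition:l2eql3__cmdp2} with $d_1$ replaced by $\tilde{d}_1$, using the reparameterization trick, importance sampling between $\bar{p}_1$ and the discrete uniform distribution, and the fact that $\lim_{N\to\infty}\frac{1}{N}\sum_j \tilde{d}_{1,j}=1$ because $\tilde{d}_1$ integrates to $1$ against the online distribution.
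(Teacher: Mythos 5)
Your proposal is correct and follows exactly the route the paper intends: the paper omits this proof, stating only that it is ``similar to the proof of Proposition~\ref{proposition:1},'' and your adaptation—importance sampling on $(s',m)$ with the reward factor cancelling from the ratio because $r=f_r(s,m,\delta_r)$ has the same conditional law given $(s,m)$ offline and online, followed by the same do-calculus steps $\hat{\PText}(m|s,do(a))=\hat{\PText}(m|s,a)$ and $\hat{\PText}(s'|m,s,do(a))=\sum_{a'}\hat{\PText}(s'|m,a',s)\hat{\PText}(a'|s)$—is precisely that argument with $r$ marginalized out of the $P_1$ factor. Your remarks on the resampling counterpart likewise mirror the paper's proof of Proposition~\ref{proposition:l2eql3__cmdp2}, so no gap remains.
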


\begin{proof}
The proof of this proposition is similar to the proof of Proposition~\ref{proposition:1} in Appendix A, and thus is omitted.
\end{proof}

\begin{proposition}
    \label{proposition:l2eql3__cmdp2_sub2}
    Under the definitions of the CMDP and SCMs in this subsection, the loss function of the resampling method
    (i.e., Equation~\ref{eqn:loss_resample_unobserved_sub2})
    is asymptotically equal to that of the reweighting method as in Equation~\ref{eqn:loss_resample_equal_cmdp__2_sub2} provided that the dataset is large enough.    
    \begin{equation}
        \label{eqn:loss_resample_unobserved_sub2}
        \begin{split}
        &L_3\left(\phi,\mathcal{D}_{\offText}\right)\triangleq\mathbb{E}_{I{\sim}\tilde{p}_1}\left[f_\phi\left(s_{I},a_{I},s_{I}',r_{I}\right)+h_\phi\left(s_{I},a_{I}\right)\right],\\
        \end{split}
    \end{equation}
    where $\tilde{p}_1\left(I=i\right)=\tilde{d}_{1,i}/\sum_{j=1}^{N}\tilde{d}_{1,j}$, and $\tilde{d}_{1,i}$ is a shorthand for $\tilde{d}_1\left(s_{i},m_{i},a_{i},s_{i}',r_{i}\right)$.
    \begin{equation}
        \label{eqn:loss_resample_equal_cmdp__2_sub2}
        \begin{split}
        &\lim_{N\to\infty}\left(L_3\left(\phi,\mathcal{D}_{\offText}\right)-L_2\left(\phi,\mathcal{D}_{\offText}\right)\right)=0\\
        \end{split}
    \end{equation}
\end{proposition}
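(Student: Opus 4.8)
The plan is to follow the proof of Proposition~\ref{proposition:l2eql3__cmdp2} in Appendix~A almost verbatim, replacing $d_1\left(\tau\right)$ by $\tilde{d}_1\left(\tau\right)$ and invoking Proposition~\ref{proposition:sub2} wherever the original argument invokes Proposition~\ref{proposition:1}. The fact that makes the argument run is that $\tilde{d}_1\left(\tau\right)$ is a conditional likelihood ratio, namely $\tilde{d}_1\left(\tau\right)=\bar{\PText}\left(s',r,m|s,a\right)/\hat{\PText}\left(s',r,m|s,a\right)$; this is exactly the content of the do-calculus computation behind Proposition~\ref{proposition:sub2}, where the reward factor cancels because $r=f_r\left(s,m,\delta_r\right)$ has the same conditional law given $\left(s,m\right)$ under $\hat{\PText}$ and $\bar{\PText}$. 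In particular $\mathbb{E}_{s',r,m\sim\hat{\PText}\left(\cdot,\cdot,\cdot|s,a\right)}\left[\tilde{d}_1\left(\tau\right)\right]=1$, and since $h_\phi$ depends only on $\left(s,a\right)$ one may attach $\tilde{d}_1\left(\tau\right)$ to $h_\phi$ as well, so that Proposition~\ref{proposition:sub2} can first be recast as
\begin{equation*}
L_2\left(\phi,\mathcal{D}_{\offText}\right)=\mathbb{E}_{s,a,s',r,m\sim{\mathcal{D}_{\offText}}}\left[\tilde{d}_1\left(\tau\right)\left(f_\phi+h_\phi\right)\right].
\end{equation*}

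Next I would rewrite the dataset expectation using a uniformly random index $I\sim DiscreteU\left(1,N\right)$ (the reparameterization trick), obtaining $L_2=\mathbb{E}_{I\sim DiscreteU\left(1,N\right)}\left[\tilde{d}_{1,I}\left(f_{\phi,I}+h_{\phi,I}\right)\right]$, and then apply importance sampling to switch the sampling law from the uniform one to $\tilde{p}_1$, which introduces the factor $\tfrac{1/N}{\tilde{p}_1\left(I\right)}$. Substituting $\tilde{p}_1\left(I=i\right)=\tilde{d}_{1,i}/\sum_{j=1}^{N}\tilde{d}_{1,j}$ makes $\tfrac{1/N}{\tilde{p}_1\left(I\right)}\tilde{d}_{1,I}$ collapse to the index-free constant $\tfrac{1}{N}\sum_{j=1}^{N}\tilde{d}_{1,j}$, giving
\begin{equation*}
L_2\left(\phi,\mathcal{D}_{\offText}\right)=L_3\left(\phi,\mathcal{D}_{\offText}\right)\cdot\tfrac{1}{N}\sum_{j=1}^{N}\tilde{d}_{1,j},\qquad\text{so}\qquad L_3\left(\phi,\mathcal{D}_{\offText}\right)-L_2\left(\phi,\mathcal{D}_{\offText}\right)=L_3\left(\phi,\mathcal{D}_{\offText}\right)\Bigl(1-\tfrac{1}{N}\sum_{j=1}^{N}\tilde{d}_{1,j}\Bigr).
\end{equation*}

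It then suffices to show $\tfrac{1}{N}\sum_{j=1}^{N}\tilde{d}_{1,j}\to1$ as $N\to\infty$. Viewing this average as $\mathbb{E}_{I\sim DiscreteU\left(1,N\right)}\left[\tilde{d}_{1,I}\right]$, it converges to $\mathbb{E}_{s,a,s',r,m\sim\mathcal{D}_{\offText}}\left[\tilde{d}_1\left(\tau\right)\right]$, which by the likelihood-ratio identity equals $\mathbb{E}_{s,a\sim\mathcal{D}_{\offText}}\bigl[\mathbb{E}_{s',r,m\sim\hat{\PText}\left(\cdot,\cdot,\cdot|s,a\right)}\left[\tilde{d}_1\left(\tau\right)\right]\bigr]=\mathbb{E}_{s,a\sim\mathcal{D}_{\offText}}\left[1\right]=1$. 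Finally, exactly as in Appendix~A, since the network loss $L_3$ has a finite limit the limit of the product splits into the product of the limits, and we conclude $\lim_{N\to\infty}\left(L_3\left(\phi,\mathcal{D}_{\offText}\right)-L_2\left(\phi,\mathcal{D}_{\offText}\right)\right)=\bigl(\lim_{N\to\infty}L_3\left(\phi,\mathcal{D}_{\offText}\right)\bigr)\cdot0=0$.

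The only work beyond bookkeeping is the step $\tfrac{1}{N}\sum_{j}\tilde{d}_{1,j}\to1$ together with the accompanying interchange of limits: these rely on the empirical distribution of the offline tuples converging to $\hat{\PText}$ and on $L_3$ staying bounded, the same standing assumptions already used in Proposition~\ref{proposition:l2eql3__cmdp2}, so I expect no genuinely new obstacle. The one point requiring care is verifying the likelihood-ratio form of $\tilde{d}_1\left(\tau\right)$, i.e.\ that in this second special case the reward factor cancels because $r$ is the deterministic function $f_r\left(s,m,\delta_r\right)$ of $\left(s,m\right)$ rather than being confounded, which is precisely where Proposition~\ref{proposition:sub2} enters.
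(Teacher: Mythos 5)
Your proposal is correct and matches the paper's intent exactly: the paper omits this proof, stating only that it is analogous to the proof of Proposition~\ref{proposition:l2eql3__cmdp2} in Appendix~A, and your argument reproduces precisely that template (reparameterization to a uniform index, importance sampling to $\tilde{p}_1$, collapse of the weight to $\tfrac{1}{N}\sum_{j}\tilde{d}_{1,j}$, and the limit argument), with the one genuinely case-specific step---the likelihood-ratio identity $\tilde{d}_1\left(\tau\right)=\bar{\PText}\left(s',r,m|s,a\right)/\hat{\PText}\left(s',r,m|s,a\right)$ via cancellation of the reward factor $r=f_r\left(s,m,\delta_r\right)$---correctly identified and justified through Proposition~\ref{proposition:sub2}.
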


\begin{proof}
The proof of this proposition is similar to the proof of Proposition~\ref{proposition:l2eql3__cmdp2} in Appendix A, and thus is omitted.
\end{proof}

\section{Complete Empirical Results}
\label{additional_results}

The task settings of EmotionalPendulum, WindyPendulum, EmotionalPendulum* and WindyPendulum* are shown in the following two subsections.
The additional empirical results of WindyPendulum, where the confounders are unobserved in the offline data, verify the effectiveness of the deconfounding methods proposed in Section~\ref{section:alg_for_unobserved}. In contrast, the additional empirical results of WindyPendulum*, where the confounders are partially observed in the offline data, verify the effectiveness of the deconfounding methods proposed in Appendix~B.
In each subsection, we first describe the causal models of the offline data in the four benchmark tasks, and then compare BC, the original offline RL algorithms and the offline RL algorithms combined with our deconfounding methods in different task settings. 
The task settings follow Pendulum in OpenAI Gym except for the settings described in each subsection.

\subsection{Unobserved Confounders}

The causal model in the offline setting, which corresponds to EmotionalPendulum for unobserved confounders, is described as follows. 
In this causal model, $w_1 \in \{True,False\}$ represents whether the human sitting in the free end of the pendulum has negative emotions, $w_2 \in \{True, False\}$ represents whether the human has negative expressions, $s=(x,y,v)$ represents the Cartesian coordinates $(x,y)$ and angular velocity $(v)$ of the free end of the pendulum,
the action $a \in \{-2,-1,0,1,2\}$ represents the torque that the human wants to apply to the free end of the pendulum,
the intermediate state $m \in \{-2,-1,0,1,2\}$ represents the actual torque applied to the free end,
and $r=r_o+r_a$ represents the reward where $r_o$ represents the original reward in Pendulum of OpenAI Gym, and $r_a$ represents the additional reward which is used to encourage the human with negative expressions. Note that the coordinates $(x,y)$ are measured by a sensor which vibrates slightly at a distance of 1m from the fixed end of the pendulum, i.e., $x=\cos(\theta) \times l,y=\sin(\theta) \times l$ where $\theta$ denotes the angle in radians and $l \in (0.5,1.5)$ denotes the truncated normal distribution with the mean $\mu = 1$ and the variance $\sigma = 0.01$.

The offline data generation process of this causal model is described as follows. The odds that the human does not have negative emotions are $odds_1=\PrText(w_1=False)/\PrText(w_1=True)$.
If the human has negative emotions, he/she is most likely to have negative expressions, i.e., 
$\PrText(w_2=True|w_1=True)=0.99,\PrText(w_2=False|w_1=True)=0.01,\PrText(w_2=True|w_1=False)=0.01,\PrText(w_2=False|w_1=False)=0.99$.
An agent is trained using the SAC algorithm in an environment where $w_1$, $w_2$ and $s$ are observable for 300000 learning steps.
This agent will be used to simulate the human in a rational state, and give the rational action $\bar{a}$ at each step. We assume that the human may feel afraid and decide to slow down (i.e., $a=\tilde{a}$) if the speed is too fast (i.e., if the speed is above the threshold $|v|>v_T$), and the human may feel boring and decide to speed up (i.e., $a=\hat{a}$) if the speed is too slow (i.e., if $|v| \leq v_T$). Specifically, 
$\PrText(a=\bar{a}|w_1=False) =1,
\PrText(a=\bar{a}|w_1=True,v=0)=1,
\PrText(a=\hat{a}|w_1=True,|v| \leq v_T,v \neq 0)=I_{p,1},
\PrText(a=\bar{a}|w_1=True,|v| \leq v_T,v \neq 0)=1-I_{p,1},
\PrText(a=\tilde{a}|w_1=True,|v|>v_T)=I_{p,1},
\PrText(a=\bar{a}|w_1=True,|v|>v_T)=1-I_{p,1}$, where $I_{p,1}$ denotes the probability of the human choosing irrational actions when he/she has negative emotions. 
The action $a \in \{-2,-1,0,1,2\}$ influences the next state $s' \in \mathcal{S}$ through the intermediate state $m \in \{-2,-1,0,1,2\}$.
In most cases, $m$ is equal to $a$
(i.e., $\PrText(m=a)=1-p_{fail}+p_{fail}/5$.).
However, in some cases, the human/agent fails to control the pendulum and $m$ randomly chooses an action
(i.e., $\forall\bar{m} \neq a,\PrText(m=\bar{m})=p_{fail}/5$.).
The environment will return an extra reward to encourage the human if his/her expressions are negative (i.e., $r_a$ is subject to $\NText\left(10,1\right)$ if $w_2=True$, and is subject to $\NText\left(0,1\right)$ if $w_2=False$).
The transition function in EmotionalPendulum is $\theta',v'=g(m)$,
where $\theta'$ denotes the angle of the next state, $v'$ denotes the angular velocity of the next state, and $g$ denotes the original transition function in Pendulum of OpenAI Gym. 
The human interacts with the environment for 100000 steps to generate offline data.

The causal model in the offline setting, which corresponds to WindyPendulum for unobserved confounders,
is described as follows.
In this causal model, $w_2 \in \{0,1,2\}$ represents the direction of the wind, where $0$ represents the wind from right to left, $1$ represents no wind and $2$ represents the wind from left to right, $w_1 \in \{True,False\}$ represents whether the human is afraid because of the wind, $s$, $a$ and $m$ represent the same meanings as $s$, $a$ and $m$ in EmotionalPendulum,
and $r=r_o+r_a$ represents the reward, where $r_o$ represents the original reward in Pendulum of OpenAI Gym, and $r_a$ represents the additional reward used to encourage the human if a strong wind exists.

The offline data generation process of this causal model is described in the following. The odds of no wind are 
$odds_2=\PrText(w_2=1)/(\PrText(w_2=0)+\PrText(w_2=2))$.
If there is a gust of wind,
the human will get scared, i.e., 
$\PrText(w_1=True|w_2 \neq 1)=1$.
We use the SAC algorithm to train an agent in an environment where $w_2$ and $s$ are observable for 300000 learning steps.
This agent will be used to simulate the human in the rational state, and give the rational action $\bar{a}$ at each step. We assume that the human in a state of fear may decide to slow down (i.e., $a=\tilde{a}$), and may choose a force opposite to the component of the wind force along the tangent (i.e., $a=\hat{a}$). Specifically, 
$\PrText(a=\bar{a}|w_1=False)=1, 
\PrText(a=\bar{a}|w_1=True)=1-I_{p,2},
\PrText(a=\hat{a}|w_1=True)=I_{p,2}/2,
\PrText(a=\tilde{a}|w_1=True)=I_{p,2}/2$
, where $I_{p,2}$ denotes the probability that the human in a state of fear
chooses irrational actions. 
The causal mechanism that generates $m$ 
is the same as that in EmotionalPendulum.
The environment will return an extra reward to encourage the human if there is a strong wind, i.e., $r_a$ is subject to N(10,1) if $w_2 \neq 1$, and is subject to N(0,1) if $w_2=1$.
The transition function in WindyPendulum is $\theta',v'=g\left(m-f_w \times \cos(\theta)\right)$,
where $\theta'$, $v'$ and $g$ denote the same meanings as $\theta'$, $v'$ and $g$ in EmotionalPendulum
and $f_w$ denotes the wind force as shown in Equation~\ref{eqn:thewindforce}. 
Offline data are generated from 100000 steps of the human interacting with the environment.
\begin{equation}
    \label{eqn:thewindforce}
    f_w=\left\{
    \begin{aligned}
    -5 &      & w_2=0 \\
    0 &      & w_2=1 \\
    5 &      & w_2=2
    \end{aligned}
    \right.
\end{equation}

As shown in Figure~\ref{fig:pendulum_state_v3_0914109.pdf} and Figure~\ref{fig:pendulum_state_v3_09141085.pdf},
the offline RL algorithms combined with our deconfounding methods learn faster than the original offline RL algorithms under different settings of WindyPendulum for unobserved confounders, which verifies the robustness of our deconfounding methods.
Since the human may adopt irrational actions in our tasks, his/her behavior policy is not good, and thus BC performs poorly in our tasks. The empirical results in Figure~\ref{fig:pendulum_state_v3_0914109.pdf} and Figure~\ref{fig:pendulum_state_v3_09141085.pdf} show that BC performs worse than the offline RL algorithms combined with our deconfounding methods.

\paragraph{Ablation study} 
Figure~\ref{fig:pendulum_v3_ablation_study.pdf} and Figure~\ref{fig:pendulum_state_v3_ablation_study.pdf} provide an ablation study of the center sampling method $k$-means~\cite{macqueen1967classification,lloyd1982least}.
In the process of estimating the conditional probability densities using LSCDE, we need to select center points for the kernel functions.
DQN\_RW and DQN\_RS denote the deconfounding RL algorithms that use $k$-means clustering to determine $k$ kernel centers. DQN\_RW* and DQN\_RS* denote the deconfounding RL algorithms that randomly select $k$ points as kernel centers. Other algorithms are denoted similarly.
Clearly, the deconfounding RL algorithms using the center sampling method $k$-means perform better than or similarly to the deconfounding RL algorithms that randomly select $k$ points as kernel centers.

\subsection{Partially Observed Confounders}

\begin{figure}[htbp]
    \centering
    \subfigure{
        \includegraphics[width=1.55in]{figures/12.pdf}
        \label{figure:Friction1}
    }
    \subfigure{
	\includegraphics[width=1.55in]{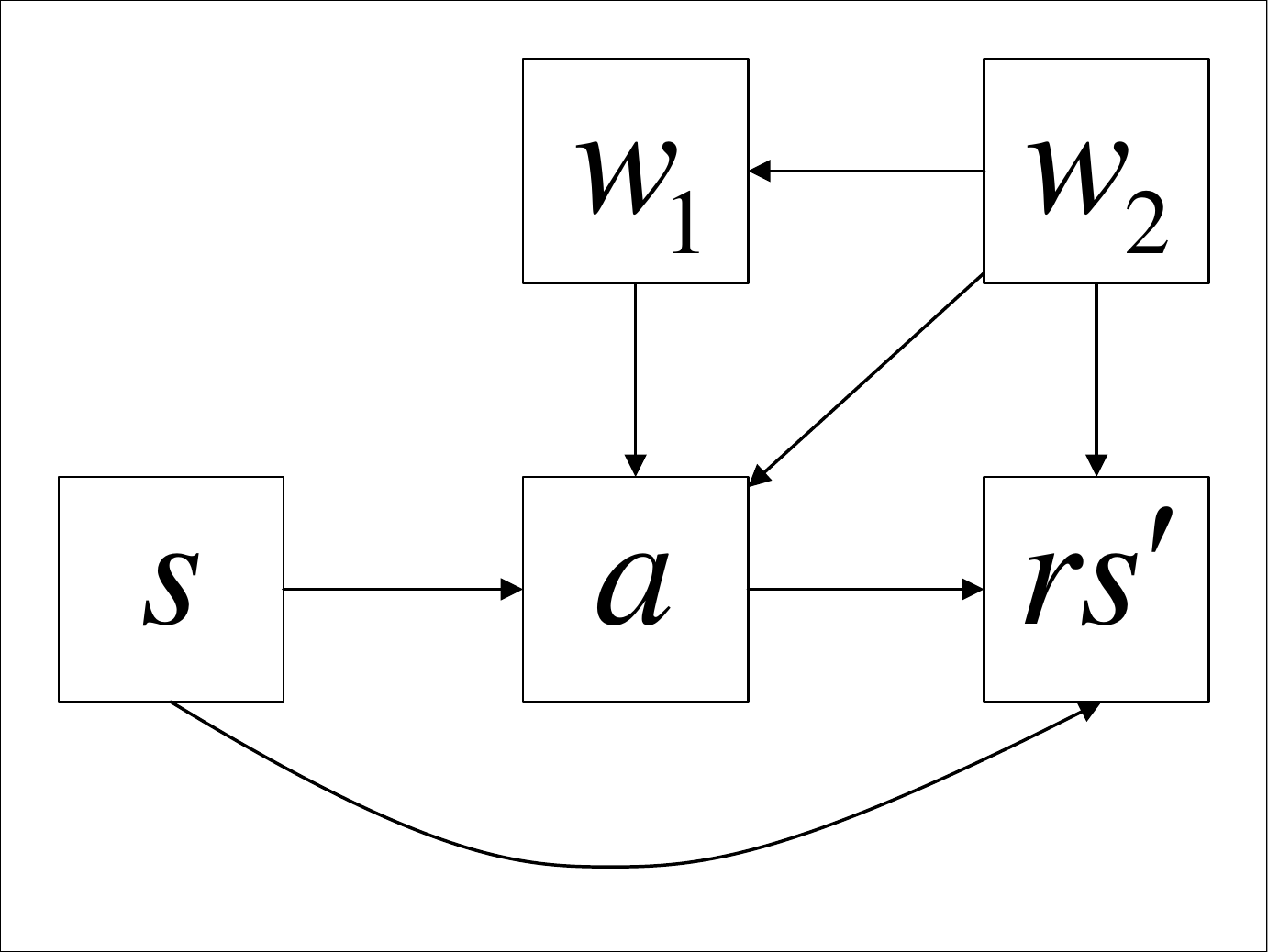}
	\label{figure:Friction2}
    }
    \\
    \subfigure{
        \includegraphics[width=1.55in]{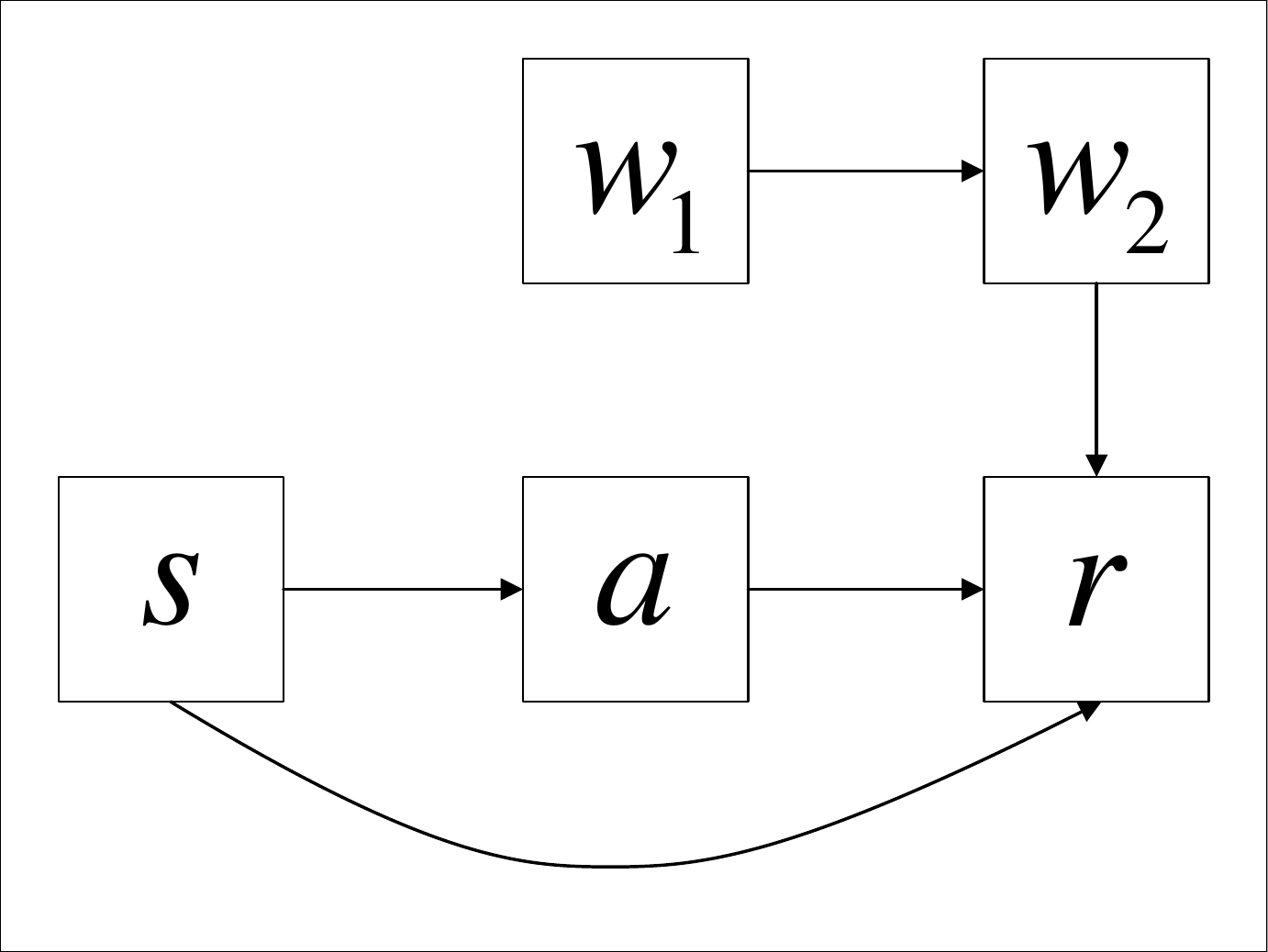}
        \label{11091}
    }
    \subfigure{
	\includegraphics[width=1.55in]{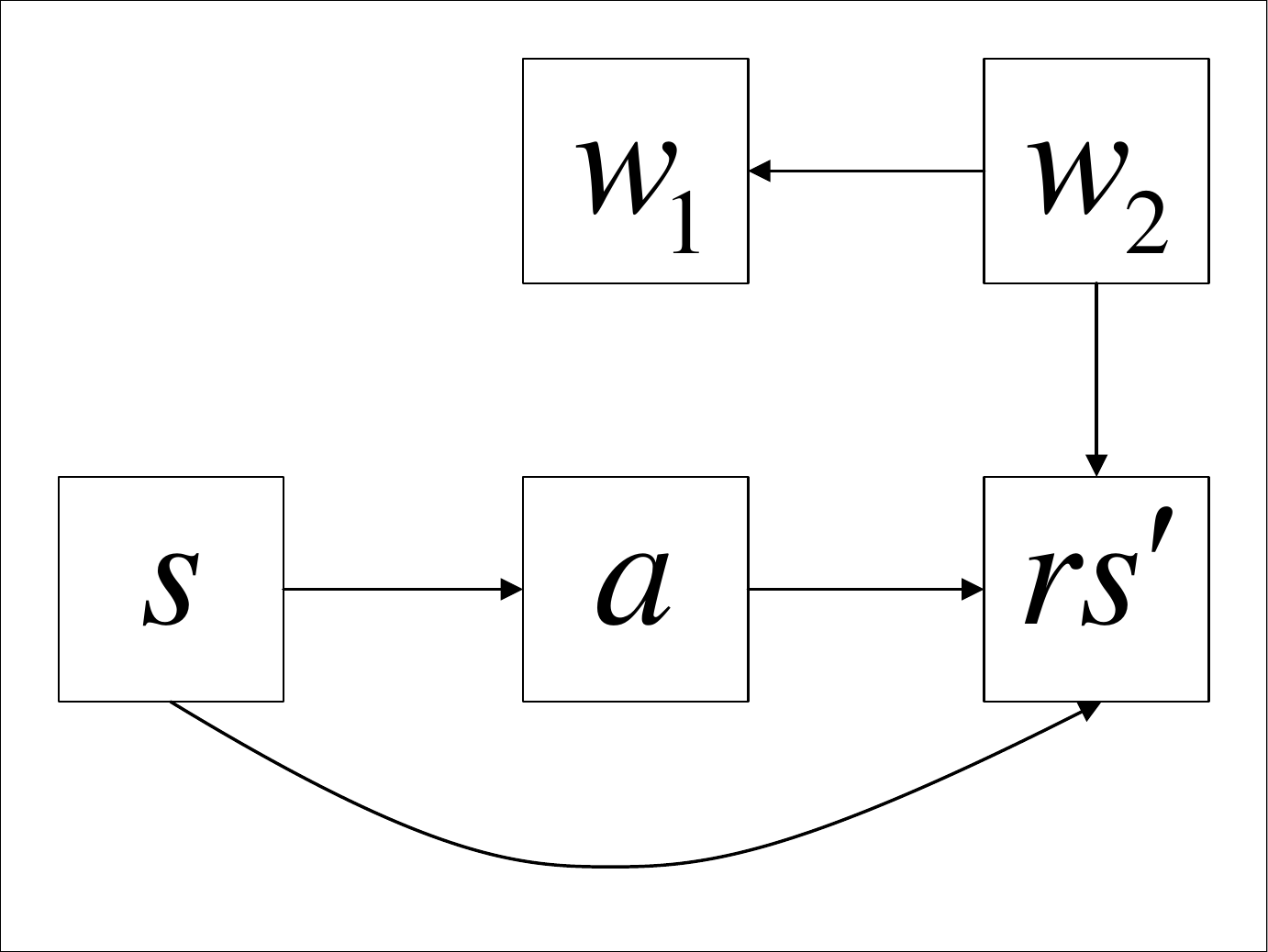}
        \label{123288}
    }
    \caption{
        \textbf{Left}: the causal graphs depicting the offline and online data generating processes in EmotionalPendulum*. 
        \textbf{Right}: the causal graphs depicting the offline and online data generating processes in WindyPendulum*. 
        In both tasks, the confounders in the offline data are partially observable.
    }
    \label{figure:scm_partially_observed}
\end{figure}

The causal graphs of EmotionalPendulum* and WindyPendulum* for partially observed confounders are shown in Figure~\ref{figure:scm_partially_observed}. It is obvious that these causal graphs are special cases of the causal graphs described in Appendix~\ref{appendix:background_for_partially_observed_confounder}.
The only difference between the causal models 
of the tasks for unobserved confounders and partially observed confounders is that
there is no intermediate state $m$ intercepting every directed path from $a$ to $s'$ anymore. 
In other words, the controller of the pendulum never fails, i.e., the action that the human wants to take is equal to the actual action executed by the machine.
Meanwhile, the transition functions are changed correspondingly. In EmotionalPendulum*, $\theta',v'=g(a)$. In WindyPendulum*, $\theta',v'=g\left(a-f_w \times \cos(\theta)\right)$.

Similar to EmotionalPendulum and WindyPendulum, in EmotionalPendulum* and WindyPendulum*,
some sensors are used to collect the confounded offline data generated in the human-environment interaction process. The environmental information $o_2$ in these offline data includes $s$ and a subset $u$ of the confounder. In EmotionalPendulum*, $u=\{w_2\}$ where $w_2$ represents whether the human has negative expressions. In WindyPendulum*, $u=\{w_2\}$ where $w_2$ represents the direction of the wind.
Except for what we have described above, the tasks for unobserved confounders and partially observed confounders are the same.

The empirical results in Figure~\ref{fig:pendulum_state_v1_151_0.85.pdf} and Figure~\ref{fig:pendulum_state_v1_161_0.85.pdf}
demonstrate that the offline RL algorithms combined with our deconfounding methods learn faster than the original offline RL algorithms under different settings of WindyPendulum* for partially observed confounders, and thus verify the robustness of our deconfounding methods.
As shown in Figure~\ref{fig:pendulum_state_v1_151_0.85.pdf} and Figure~\ref{fig:pendulum_state_v1_161_0.85.pdf}, BC performs worse than the offline RL algorithms combined with our deconfounding methods.

\begin{figure}[ht]
  \centering
  \includegraphics[width=\textwidth]{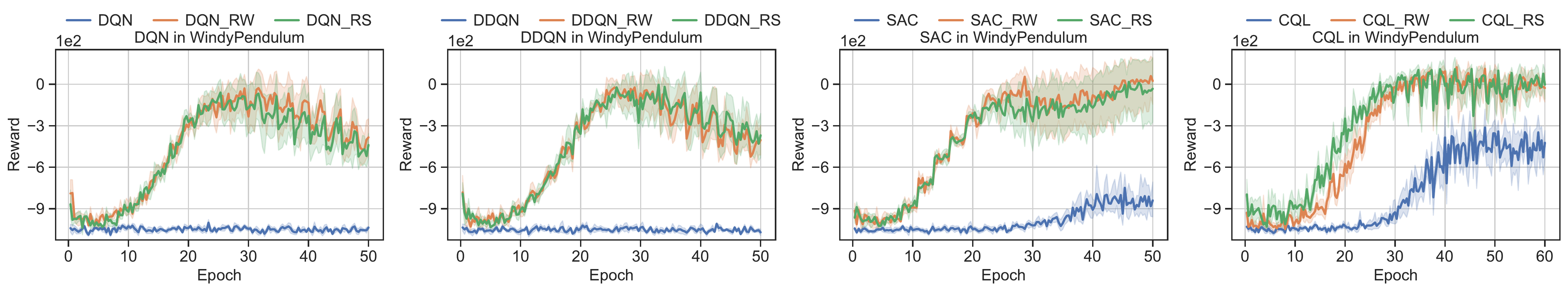}
  \caption{Performance of the deep RL algorithms with and without our deconfounding methods in WindyPendulum. The optimal average reward of BC over 50 epochs is -987.4. The rewards are tested over 20 episodes every 1000 learning steps. We divide the 50 epochs into 10 parts, and calculate the average reward of BC in each part. Finally, we take the largest average reward in all parts as the optimal average reward of BC. The task settings of this figure are as follows: $p_{fail}=0.1, I_{p,2}=0.9, odds_2=2$.}
  \label{fig:pendulum_state_v3_0914109.pdf}
\end{figure}

\begin{figure}[ht]
  \centering
  \includegraphics[width=\textwidth]{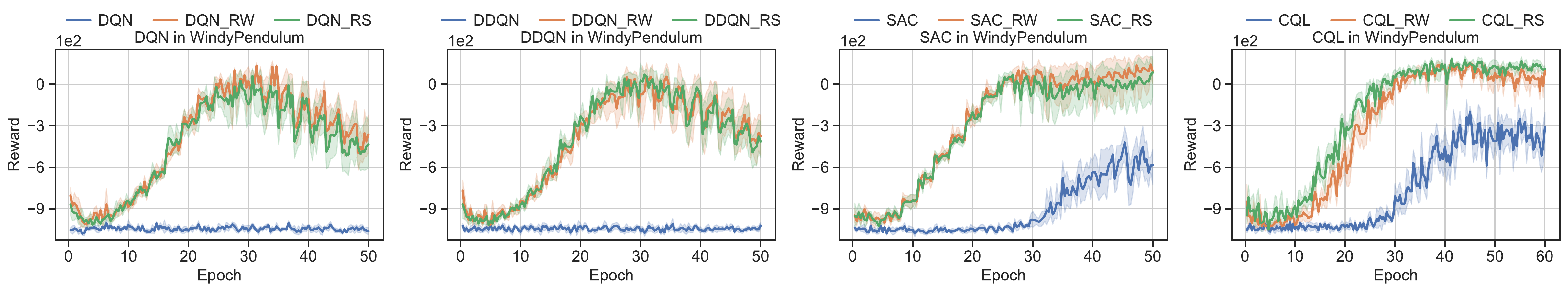}
  \caption{Performance of the deep RL algorithms with and without our deconfounding methods in WindyPendulum. The optimal average reward of BC over 50 epochs is -916.1. The task settings of this figure are as follows: $p_{fail}=0.1, I_{p,2}=0.85, odds_2=2$.}
  \label{fig:pendulum_state_v3_09141085.pdf}
\end{figure}


\begin{figure}[ht]
  \centering
  \includegraphics[width=\textwidth]{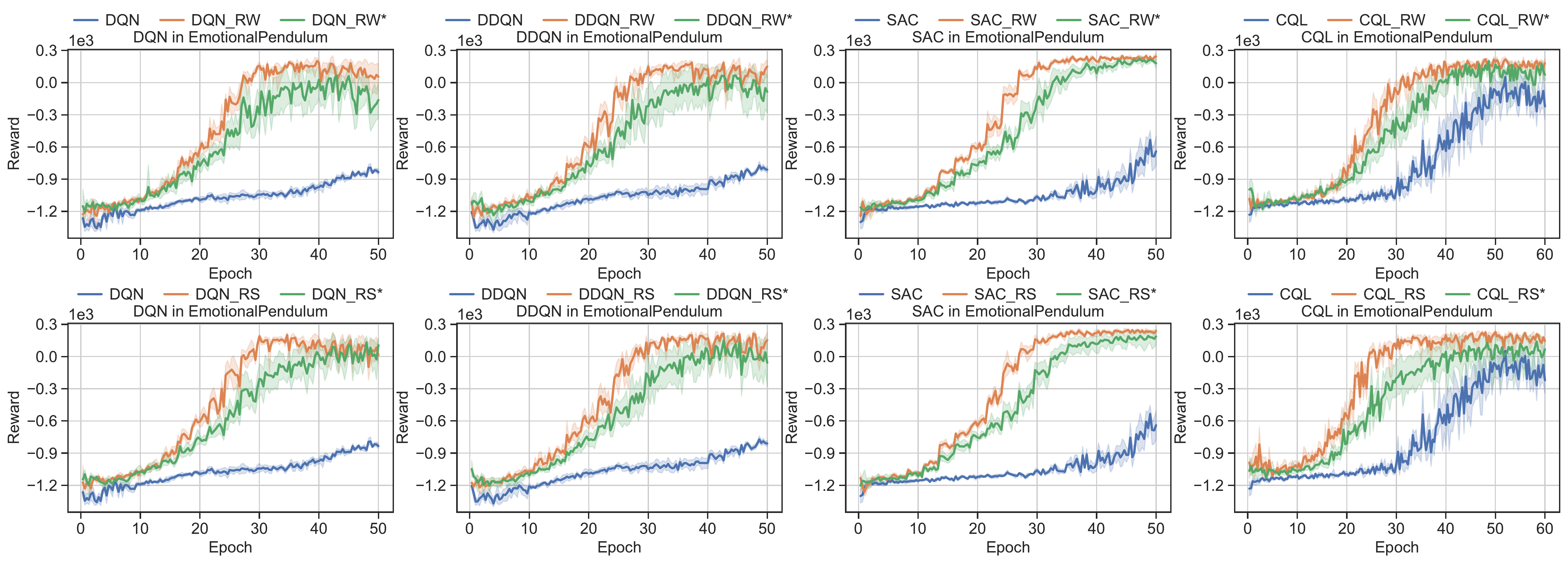}
  \caption{Performance of the deconfounding RL algorithms that use $k$-means clustering to determine $k$ kernel centers and the deconfounding RL algorithms that randomly select $k$ points as kernel centers in EmotionalPendulum. The task settings of this figure are described in Appendix~C.}
  \label{fig:pendulum_v3_ablation_study.pdf}
\end{figure}

\begin{figure}[ht]
  \centering
  \includegraphics[width=\textwidth]{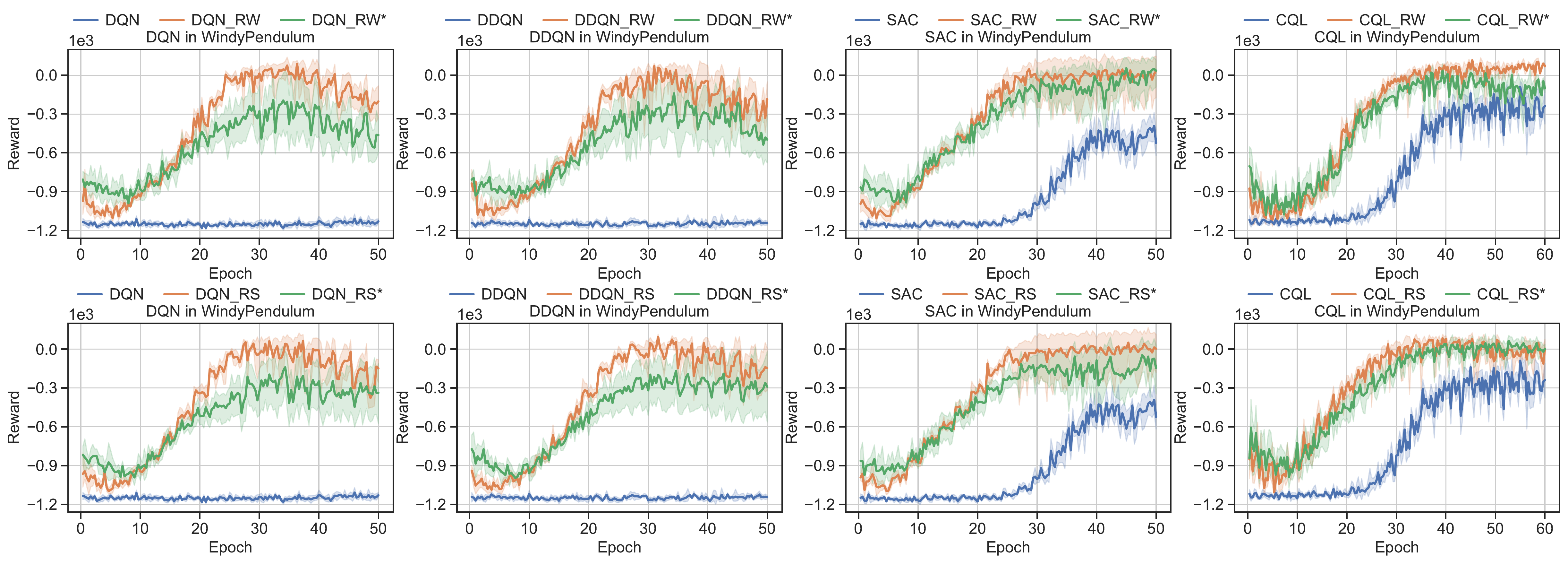}
  \caption{Performance of the deconfounding RL algorithms that use $k$-means clustering to determine $k$ kernel centers and the deconfounding RL algorithms that randomly select $k$ points as kernel centers in WindyPendulum. The task settings of this figure are described in Appendix~C.}
  \label{fig:pendulum_state_v3_ablation_study.pdf}
\end{figure}

\begin{figure}[ht]
  \centering
  \includegraphics[width=\textwidth]{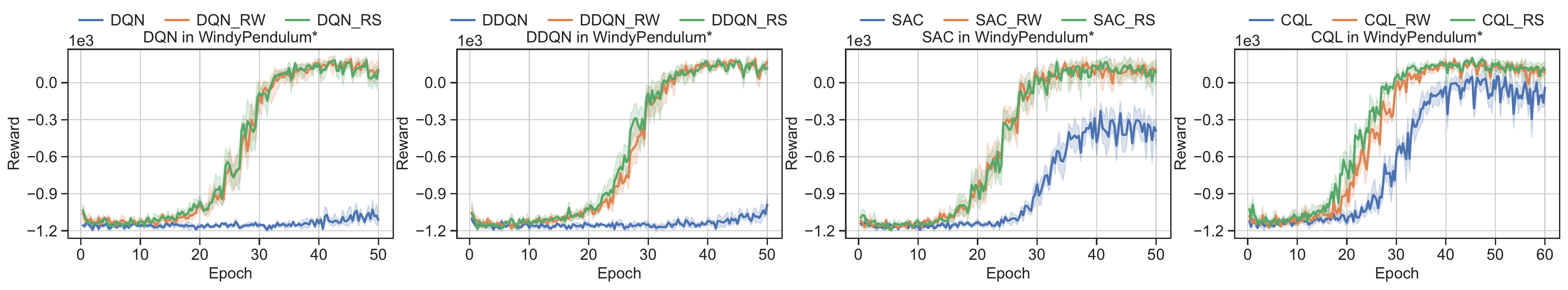}
  \caption{Performance of the deep RL algorithms with and without our deconfounding methods in WindyPendulum*. The optimal average reward of BC over 50 epochs is -826.5. The task settings of this figure are as follows: $I_{p,2}=0.85, odds_2=2.5$.}
  \label{fig:pendulum_state_v1_151_0.85.pdf}
\end{figure}

\begin{figure}[ht]
  \centering
  \includegraphics[width=\textwidth]{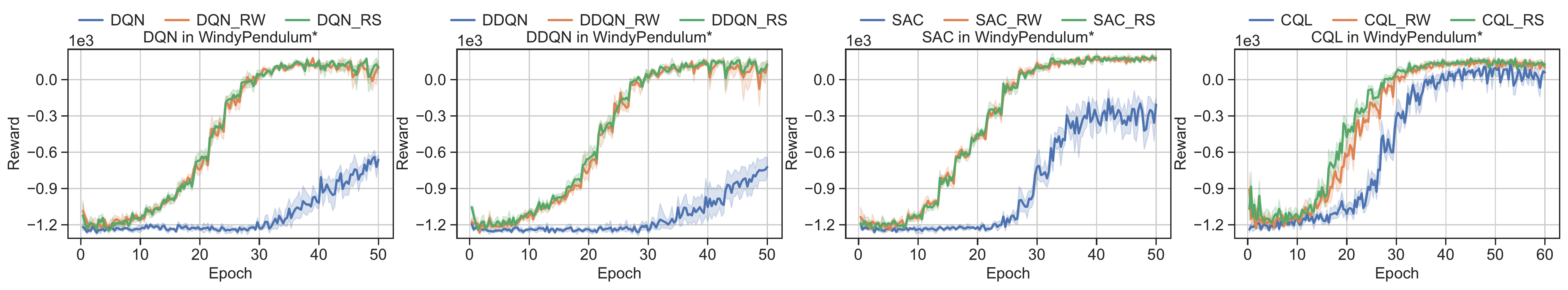}
  \caption{Performance of the deep RL algorithms with and without our deconfounding methods in WindyPendulum*. The optimal average reward of BC over 50 epochs is -432.9. The task settings of this figure are as follows: $I_{p,2}=0.85, odds_2=3$.}
  \label{fig:pendulum_state_v1_161_0.85.pdf}
\end{figure}

\end{document}